\documentclass{article}

\usepackage[preprint]{neurips_2026}

\usepackage[utf8]{inputenc}
\usepackage[T1]{fontenc}
\usepackage{hyperref}
\usepackage{url}
\usepackage{booktabs}
\usepackage{amsfonts}
\usepackage{nicefrac}
\usepackage{microtype}
\usepackage{xcolor}

\usepackage{amsmath,amssymb,amsthm}
\usepackage{bm}
\usepackage{algorithm}
\usepackage{algorithmic}
\usepackage{graphicx}
\usepackage{comment}
\usepackage[capitalize,noabbrev]{cleveref}
\usepackage{mathtools}
\usepackage{bbm}
\usepackage{natbib}

\theoremstyle{plain}
\newtheorem{theorem}{Theorem}[section]
\newtheorem{proposition}[theorem]{Proposition}
\newtheorem{lemma}[theorem]{Lemma}
\newtheorem{corollary}[theorem]{Corollary}
\theoremstyle{definition}
\newtheorem{definition}[theorem]{Definition}
\theoremstyle{remark}

\newtheorem{assumption}[theorem]{Assumption}

\DeclareMathOperator*{\Var}{Var}

\title{Denoised Conformal Alignment for Reliable Selection of Conditional Average Treatment Effect Predictions}

\author{%
  Xinyun Lu\thanks{Work done during undergrad at SUFE. Currently starting MS at SJTU.} \\
  Shanghai Jiao Tong University \\
  Shanghai University of Finance and Economics \\
  \texttt{luxy@stu.sufe.edu.cn} \\
  \And
  Haoang Chi \\
  National University of Defense Technology \\
  Changsha, China \\
  \And
  Zhiheng Zhang\thanks{Correspondence to: Zhiheng Zhang \texttt{<zhangzhiheng@mail.shufe.edu.cn>}.} \\
  School of Statistics and Data Science \\
  Shanghai University of Finance and Economics \\
  Shanghai 200433, P.R. China \\
  Institute of Big Data Research \\
  Shanghai University of Finance and Economics \\
  Shanghai 200433, P.R. China \\
  \texttt{zhangzhiheng@mail.shufe.edu.cn}
}

\begin{document}

\maketitle

\begin{abstract}
In selective deployment, practitioners act only on a model-chosen subset of individuals based on predicted conditional average treatment effects, but marginal conformal guarantees need not control reliability on that selected subset. We study reliable selection for black-box CATE predictors: selecting candidates whose CATE errors are below a tolerance while controlling the false discovery rate (FDR). Since CATE errors are unobservable, we construct doubly robust proxy errors from pseudo-outcomes; however, naive proxies can lose power under heteroskedasticity because variance overwhelms the reliability signal. We propose Denoised Conformal Alignment, which subtracts an estimated conditional variance component and combines conformal calibration with Benjamini--Hochberg selection. Our analysis shows that validity is governed by stability of proxy/oracle threshold labels, rather than pointwise perfection of the variance estimator. Experiments show substantially improved power while maintaining FDR control across challenging settings.

\end{abstract}

\section{Introduction}
\label{sec:intro}

A defining feature of modern data-driven decision-making is \emph{selective deployment}:
models are not applied uniformly to all individuals, but are instead used to trigger actions only for a small, model-chosen subset.
Hospitals may escalate care for patients with the largest predicted risks or gains; policymakers may target interventions to communities with the greatest projected impact \citep{athey2017state}; and online platforms may offer incentives only to users predicted to respond strongly, while flagging those predicted to react adversely.
In such settings, decisions hinge on \emph{extreme individualized effect estimates}, not on population averages.
As a result, errors are no longer evenly distributed: they are concentrated precisely on the individuals for whom actions are taken.
This makes selective deployment simultaneously powerful and perilous.
A central question therefore arises:
\emph{when can we trust a black-box conditional average treatment effect (CATE) prediction enough to act on it?}

\begin{figure}[t]
\centering
\includegraphics[width=\columnwidth]{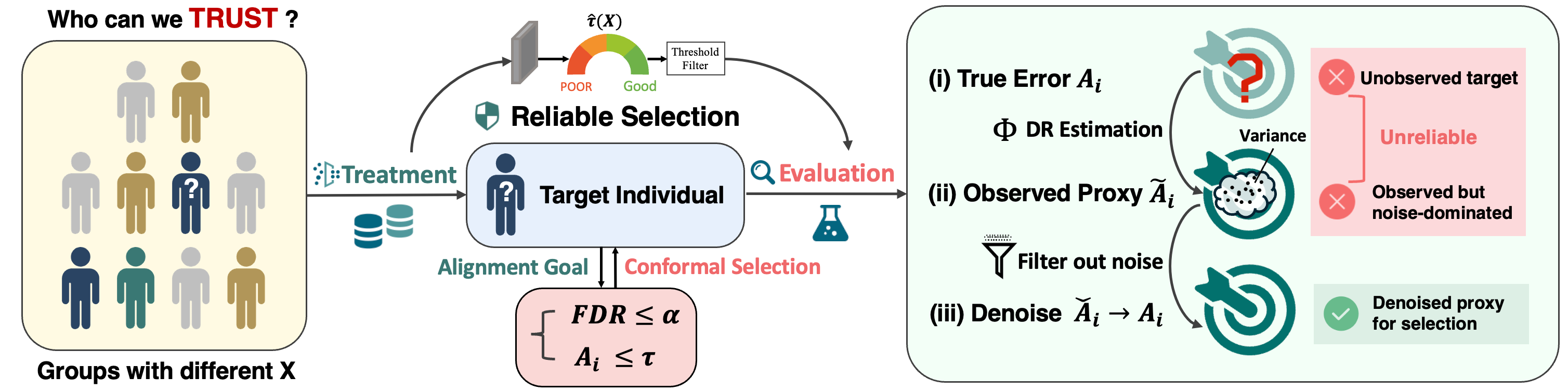}
\caption{\textbf{Conceptual illustration.}
\textbf{(Left) \emph{Reliable selection.}}
Given covariates $X$, our goal is to identify individuals or groups whose CATE predictions are reliable, rather than relying on average effects or marginal guarantees.
\textbf{(Right) \emph{Denoised error proxies.}}
Since the true CATE prediction error $A_i$ is counterfactual and unobservable, we construct a noisy observable proxy $\widetilde A_i$ from doubly robust pseudo-outcomes, and denoise it to obtain $\check A_i$ by removing variance-induced noise.
These scores support selective inference with controlled risk via conformal calibration.}
\label{fig:teaser}
\end{figure}

Individualized treatment effect estimation lies at the core of modern causal inference and personalized decision-making.
Prior work has developed increasingly accurate CATE predictors through meta-learners \citep{kunzel2019metalearners, kennedy2020towards}, representation learning \citep{shalit2017estimating}, and forest-based methods \citep{wager2018estimation, nie2021quasi}, typically evaluated by population-level metrics such as PEHE.
However, \emph{high average accuracy is not a deployment guarantee}. Selective deployment requires reliability \emph{conditional on the model’s own ranking}, precisely where predictions are most extreme.
Conformal prediction (CP) offers distribution-free uncertainty quantification with marginal coverage guarantees \citep{vovk2005algorithmic, angelopoulos2021gentle}, and recent work has extended CP to causal inference to produce valid counterfactual and ITE intervals under standard identification assumptions \citep{lei2021conformal, chernozhukov2021exact}.
Yet these guarantees are fundamentally \emph{marginal}:
they hold on average over the population, not on the post-selection subset where decisions are taken.

This mismatch is consequential.
Once actions are triggered only for top-ranked individuals, the relevant target distribution is no longer the population, but the \emph{post-selection subpopulation} induced by the model itself.
As a result, uncertainty statements that are valid marginally can become systematically miscalibrated on the very cases that matter most \citep{geifman2017selective}.
Such post-selection failures are not unique to causal inference:
\citet{jin2023selection} show that selection alone can turn nominal guarantees into uncontrolled errors even in purely predictive settings.
These observations point to a fundamental limitation of coverage-based guarantees for deployment.

Motivated by this gap, we argue that selective deployment calls for a different primitive:
not a guarantee about \emph{coverage}, but about \emph{which cases are safe to trust}.
We formalize this deployment-risk objective as \emph{reliable selection}:
from a pool of candidates, select a subset whose CATE predictions are accurate up to a user-specified tolerance, while controlling the expected fraction of inaccurate selections.
This criterion directly aligns with deployment risk and naturally connects to recent conformal calibration ideas for selective use, including Conformal Alignment \citep{gui2024conformal}.
Operationally, we adopt false discovery rate (FDR) control \citep{benjamini1995controlling} as the notion of safety:
\emph{among the individuals we choose to act on, the expected proportion whose CATE predictions violate the tolerance is at most $\alpha$.}

Turning reliable selection into a valid procedure is deceptively difficult in causal settings.
Because one potential outcome is missing per unit, unit-level ITEs are not point-identifiable from observational data \citep{rubin1974estimating, pearl2009causality}.
We therefore target the identifiable CATE prediction error $A_i = |\hat\tau(X_i)-\tau(X_i)|$, which remains unavailable for calibration because $\tau(X_i)$ is unknown, precluding conformalized selection methods that require observed error labels \citep{candes2023conformalized, gui2024conformal, jin2023selection}.
A natural workaround is to use doubly robust (DR) pseudo-outcomes as observable surrogates \citep{robins1994estimation, bang2005doubly}.
However, this strategy encounters a deeper, structural failure:
in heterogeneous and heteroskedastic settings, DR-based proxy errors can be dominated by variance-driven fluctuations rather than genuine model miscalibration.
As a consequence, the ranking signal required for reliable selection is washed out.
Selection procedures then prioritize \emph{noise} over accuracy, causing severe power collapse precisely in the regimes where selective deployment is most needed.

This paper introduces \emph{Denoised Conformal Alignment}, a framework that makes reliable CATE prediction selection feasible for black-box estimators.
The key idea is to explicitly separate model error from variance-induced noise in DR surrogates by constructing a \emph{denoised proxy error}.
Using consistent nuisance estimation \citep{chernozhukov2018double}, we estimate and subtract the conditional variance component, recovering a ranking signal that is otherwise obscured.
We then calibrate these scores using conformal alignment and apply FDR control to obtain a selected subset that is both \emph{safe} (few false discoveries) and \emph{useful} (nontrivial power) under standard identification assumptions \citep{rosenbaum1983central}.
Beyond the algorithm, we identify a fundamental signal-to-noise barrier for post-selection guarantees:
without denoising, reliable selection can be structurally impossible.
Denoising restores separability while remaining conservative under imperfect variance estimation, a phenomenon we confirm empirically in challenging heteroskedastic regimes.

Our main contributions are summarized as follows:\\
\noindent (i) We propose {Denoised Conformal Alignment}, a deployment-first framework for CATE prediction selection that resolves a fundamental signal-to-noise bottleneck by constructing variance-subtracted proxy errors under heteroskedasticity.\\
\noindent (ii) We establish rigorous asymptotic FDR control for selective deployment of CATE predictions without observable error labels, extending conformalized selection to the counterfactual domain via consistent nuisance estimation under standard identification assumptions and sample splitting.\\
\noindent (iii) We demonstrate empirically that denoising is \emph{structurally necessary} to prevent power collapse, recovering substantial selection yields in high-noise regimes where naive proxy-based baselines fail.

\section{Preliminaries and Problem Formulation}
\label{sec:problem}

In this section, we formalize the problem of reliable CATE selection under the potential outcomes framework \citep{rubin1974estimating} and define the statistical objectives of FDR control and power maximization. We observe $\mathcal{D}_{\text{obs}}=\{(X_i,T_i,Y_i)\}_{i=1}^n$ from a super-population $\mathcal{P}$, where $X_i\in\mathcal X\subseteq\mathbb R^d$, $T_i\in\{0,1\}$, and $Y_i\in\mathbb R$ denote covariates, treatment, and observed outcome.
Following the Neyman--Rubin model, potential outcomes $Y_i(0)$ and $Y_i(1)$ exist, with $Y_i=T_iY_i(1)+(1-T_i)Y_i(0)$. The individual treatment effect is $\tau_i\coloneqq Y_i(1)-Y_i(0)$, and the CATE is $\tau(x)\coloneqq \mathbb E[\tau_i\mid X=x]$.

We treat $\hat\tau:\mathcal X\to\mathbb R$ as a fixed black-box CATE predictor from any existing pipeline, e.g., Causal Forests \citep{wager2018estimation} or meta-learners \citep{kunzel2019metalearners}; our goal is to assess its reliability, not retrain it. To link data to causal quantities,
we make standard identification assumptions:
\begin{assumption}[Identification]
\label{assum:identifiability}
For any $x \in \mathcal{X}$: (1) Unconfoundedness: $\{Y(0), Y(1)\} \perp T \mid X$; and (2) Overlap: $0 < \eta \le \mathbb{P}(T=1 \mid X=x) \le 1 - \eta$.
\end{assumption}

Under Assumption~\ref{assum:identifiability}, causal effects are identifiable at the conditional mean level $\tau(x)$, while the \emph{individual} effect $\tau_i$ remains inherently unobservable due to the missing counterfactual.
Accordingly, we assess reliability by the still-unobservable CATE prediction error
$A_i \coloneqq |\hat{\tau}(X_i)-\tau(X_i)|$.

\textbf{Problem Formulation: Reliable Group Selection} Given fresh test covariates $\{X_{n+j}\}_{j=1}^m$, our goal is to select an index set $\mathcal S\subseteq\{1,\dots,m\}$ whose model predictions are trustworthy.

The tolerance $c$ is specified by the deployment context: smaller values enforce stricter reliability and usually reduce yield, while larger values admit more candidates under a looser reliability requirement.
Given tolerance $c>0$, we cast reliability as multiple testing: for each candidate $j\in\{1,\dots,m\}$,
\begin{equation}
    H_{0,j}: A_{n+j} \ge c \quad \text{versus} \quad H_{1,j}: A_{n+j} < c.
\end{equation}
A selection corresponds to rejecting the null hypothesis $H_{0,j}$. Let $\mathcal{H}_0$ and $\mathcal{H}_1$ denote the sets of indices corresponding to true nulls (unreliable units) and true alternatives (reliable units), respectively:
\begin{equation}
\begin{aligned}
    \mathcal{H}_0 = \{j \in \{1, \dots, m\} : A_{n+j} \ge c\},  \quad \mathcal{H}_1 = \{j \in \{1, \dots, m\} : A_{n+j} < c\}.
    \end{aligned}
\end{equation}
We aim to design a selection procedure controlling False Discovery Rate while maximizing Power.
\begin{definition}[FDR and Power]
$
    \mathrm{FDR}(\mathcal{S}) \coloneqq
    \mathbb{E}\!\left[ \frac{|\mathcal{S} \cap \mathcal{H}_0|}{|\mathcal{S}| \lor 1} \right],
    \quad
    \mathrm{Power}(\mathcal{S}) \coloneqq
    \mathbb{E}\!\left[ \frac{|\mathcal{S} \cap \mathcal{H}_1|}{|\mathcal{H}_1| \lor 1} \right].
$
FDR is the expected fraction of selected unreliable units; Power is the expected fraction of reliable units selected.
\end{definition}


\textbf{Objective.} Ensure $\mathrm{FDR}(\mathcal S)\le\alpha$ for $\alpha\in(0,1)$ while maximizing $\mathrm{Power}(\mathcal S)$\footnote{within score-threshold rules induced by a learned alignment score.}.

The core challenge lies in the unobservability of $A_{n+j}$. This prevents the direct application of standard conformal selection or alignment methods, which require ground-truth error labels for calibration \citep{jin2023selection, gui2024conformal}. This motivates the need for a proxy-based strategy tailored to the causal setting, which we develop in the next section.

\section{Methodology: Denoised Conformal Alignment (DCA)}
\label{sec:method}

We present \emph{Denoised Conformal Alignment}  for reliable selection when test outcomes are unobserved. DCA bridges the gap between the unobservable target $A_i$ and covariates using an outcome-observed reference set $\mathcal D$, split into $\mathcal D_{\mathrm{tr1}}$ for base/nuisance estimation, $\mathcal D_{\mathrm{tr2}}$ for learning the denoised error signal, and $\mathcal D_{\mathrm{cal}}$ for conformal calibration. Throughout, $\hat\tau_i\coloneqq\hat\tau(X_i)$ denotes CATE prediction.

\subsection{Why Naive DR Proxies Lose Power}
\label{subsec:noise_challenge}

\begin{figure}[t]
\centering
\includegraphics[width=\textwidth]{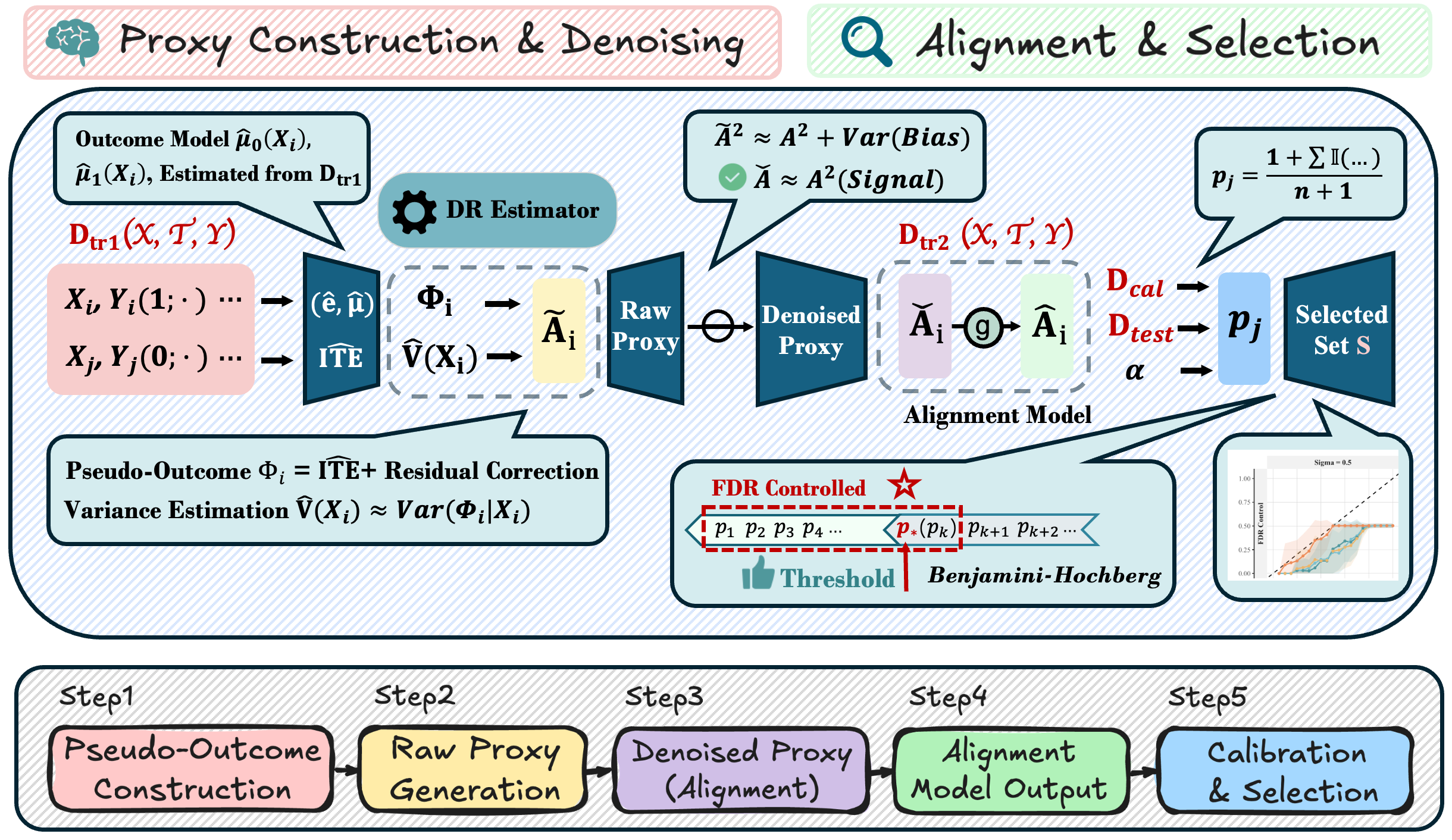}
\caption{\textbf{End-to-end pipeline for denoised proxy construction and selective inference.}
\textbf{(Left) \emph{Initialization and proxy construction.}}
Using $D_{\mathrm{tr}1}$, we estimate the base CATE predictor $\hat\tau(X)$ and nuisance components; DR pseudo-outcomes $\phi$ form the raw proxy $\widetilde A$.
\textbf{(Right) \emph{Denoising, alignment, and selection.}}
Variance subtraction denoises $\widetilde A$ into $\check A$, which trains an alignment model on $D_{\mathrm{tr}2}$ and is calibrated on $D_{\mathrm{cal}}$ to rank candidates and select a subset with controlled FDR.}
\label{fig:pipeline}
\end{figure}
To assess reliability on reference data, we proxy the CATE prediction error $A_i \coloneqq |\hat{\tau}_i-\tau(X_i)|$ using the doubly robust (DR) pseudo-outcome
\begin{equation}
\label{eq:dr_pseudo_outcome}
\phi_i :=
\hat{\mu}_1(X_i) - \hat{\mu}_0(X_i)
+ \frac{T_i(Y_i - \hat{\mu}_1(X_i))}{\hat{e}(X_i)}
- \frac{(1-T_i)(Y_i - \hat{\mu}_0(X_i))}{1 - \hat{e}(X_i)}.
\end{equation}
With correctly specified nuisance functions, and asymptotically under consistent sample-split nuisance estimation, the DR pseudo-outcome is conditionally centered at the CATE: $\mathbb{E}[\phi_i \mid X_i] = \tau(X_i)$ up to vanishing nuisance error.
However, pointwise centering does not ensure proxy reliability. For intuition, consider the corresponding bias-variance decomposition of the squared proxy error:

\begin{equation}
\label{eq:decomposition_proof}
\begin{aligned}
\mathbb{E}[(\hat{\tau}_i - \phi_i)^2 \mid X_i]
&= \mathbb{E}[(\hat{\tau}_i - \tau(X_i) + \tau(X_i) - \phi_i)^2 \mid X_i] \\
&= \underbrace{(\hat{\tau}_i - \tau(X_i))^2}_{A_i^2 \text{ (Signal)}} 
 + \underbrace{\mathbb{E}[(\tau(X_i) - \phi_i)^2 \mid X_i]}_{\mathrm{Var}(\phi_i \mid X_i) \text{ (Noise)}} \\
&= A_i^2 + \mathrm{Var}(\phi_i \mid X_i).
\end{aligned}
\end{equation}

It contains signal $A_i^2$ and noise $\mathrm{Var}(\phi_i\mid X_i)$. The cross-term vanishes because $\mathbb{E}[\tau(X_i) - \phi_i \mid X_i] = 0$. Thus, the raw squared proxy error is true error plus irreducible variance.
In heteroskedastic regimes, $\mathrm{Var}(\phi_i\mid X_i)$ can dominate $A_i^2$, causing selection to rank environmental variance rather than model accuracy and motivating conditional-variance subtraction to recover the ranking signal.

\subsection{Stage 1: Denoised Proxy Construction}
\label{subsec:stage1}

To decouple signal $A_i^2$ from noise, we train an auxiliary variance model on $\mathcal{D}_{\text{tr1}}$ to estimate $\widehat{V}(X_i) \approx \text{Var}(\phi_i \mid X_i)$ by regressing $(\phi_i - \hat{\tau}_i)^2$ on $X_i$. Under nuisance consistency, subtracting $\widehat{V}(X_i)$ yields a proxy that targets $A_i^2$ in conditional expectation.
Let the raw proxy error be $\tilde{A}_i = |\hat{\tau}_i - \phi_i|$. We introduce a \textbf{conservative denoising coefficient} $\rho \in [0, 1]$ and construct the denoised proxy $\check{A}_i$ as:
\begin{equation}
\label{eq:rho_denoising}
    \check{A}_i(\rho) = \sqrt{\max\left\{ \tilde{A}_i^2 - \rho \cdot \widehat{V}(X_i), \ 0 \right\}}.
\end{equation}
Here, $\rho$ controls the \textbf{safety--power trade-off}: larger $\rho$ removes more variance-induced noise, while smaller $\rho$ buffers variance-estimation error and threshold-label flips near $c$.
Thus, full subtraction is not automatically optimal for downstream selection when $\widehat V$ is estimated; the relevant criterion is stability of threshold labels near the deployment boundary.
In practice, we choose $\rho$ from a prespecified grid using only an outcome-observed validation slice, without test-outcome peeking; Appendix~\ref{app:rho_selection} reports sensitivity.


\textbf{Alignment via Prediction.} Since $\check A_i$ depends on observed outcomes, it is unavailable for test candidates. We therefore train an \textbf{alignment predictor} $g(\cdot)$ on $\mathcal D_{\text{tr2}}$ to map covariates to denoised scores, $\hat A=g(X)$, which serve as observable selection scores.

\subsection{Stage 2: Conformal Calibration and Selection}
\label{subsec:stage2}

In Stage~2, Conformal Alignment tests $H_{0,j}:A_{n+j}\ge c$ using predicted scores $\hat A_i=g(X_i)$ on $\mathcal D_{\text{cal}}\cup\mathcal D_{\text{test}}$ and calibration proxy labels $\check A_i$.
For each test point, we compute the left-tail conformal $p$-value by ranking $\hat A_{n+j}$ against calibration units labeled unreliable ($\check A_i\ge c$):
\begin{equation}\label{def_pj}
    p_j = \frac{1 + \sum_{i \in \mathcal{D}_{\text{cal}}} \mathbbm{1}\{\check{A}_i \ge c, \ \hat{A}_i \le \hat{A}_{n+j}\}}{1 + |\mathcal{D}_{\text{cal}}|}.
\end{equation}
Intuitively, since smaller $\hat{A}$ indicates higher predicted reliability, if $\hat{A}_{n+j}$ is smaller than the scores of most unreliable calibration units, the count in the numerator is small, yielding a small p-value that favors selection.

Finally, we apply the Benjamini--Hochberg (BH) procedure to $\{p_j\}_{j=1}^m$ at level $\alpha$, returning the selected set $\mathcal S$.
Combined with the $p$-value validity established in Section~\ref{sec:theory}, this yields asymptotic control of the false discovery rate (FDR) as defined in Section~\ref{sec:problem}.
In words, BH finds the largest rank $\hat k$ such that the $\hat k$-th smallest $p$-value is below $(\hat k/m)\alpha$, and selects exactly those $\hat k$ candidates.



\begin{algorithm}[t]
   \caption{DCA with sample splitting}
   \label{alg:causal_fdr}
\begin{algorithmic}[1]
   \REQUIRE Reference data $\mathcal{D}$; Candidate set $\mathcal{D}_{\text{test}}$; Learners $\mathcal{G}_{\tau}, \mathcal{G}_g$; Tolerance $c$; Target FDR $\alpha$.
   \ENSURE Selected set $\mathcal{S} \subseteq \{1, \dots, m\}$. Randomly split $\mathcal{D}$ into $\mathcal{D}_{\text{tr1}},\mathcal{D}_{\text{tr2}},\mathcal{D}_{\text{cal}}$.

\textbf{Stage 1: Proxy construction and denoising}
   \STATE Train $\hat{\mu}_0,\hat{\mu}_1$ (thus $\hat\tau=\hat\mu_1-\hat\mu_0$), propensity $\hat e$, and variance model $\hat V$ on $\mathcal{D}_{\text{tr1}}$.
   \STATE For each $i\in \mathcal{D}_{\text{tr2}}\cup\mathcal{D}_{\text{cal}}$, compute DR pseudo-outcome $\phi_i$ and denoised proxy:\\
   \STATE \quad $\check{A}_i \leftarrow \sqrt{\max\left\{ (\hat\tau(X_i)-\phi_i)^2 - \rho \cdot \widehat{V}(X_i), \ 0 \right\}}$

   \STATE Train alignment predictor $g$ on $\mathcal{D}_{\text{tr2}}$:  $g \leftarrow \mathcal{G}_g(\{(X_i,\check{A}_i)\}_{i\in\mathcal{D}_{\text{tr2}}})$.

\textbf{Stage 2: Conformal calibration and BH selection}
   \STATE Compute scores $\hat A_i \leftarrow g(X_i)$ for $i\in \mathcal{D}_{\text{cal}}\cup\mathcal{D}_{\text{test}}$.
   \STATE Compute $p_j \leftarrow \frac{1+\sum_{i\in\mathcal{D}_{\text{cal}}}\mathbbm{1}\{\check{A}_i \ge c,\ \hat A_i \le \hat A_{n+j}\}}{1+|\mathcal{D}_{\text{cal}}|}$ for $j = 1,2,...m$.
   \STATE $\mathcal{S} \leftarrow \mathrm{BH}(p_1,\dots,p_m,\alpha)$.
\end{algorithmic}
\small \textit{Note: Under covariate shift, replace the unweighted conformal counts by importance-weighted counts (Eq.~\eqref{eq:weighted_pvalue}).}
\end{algorithm}

\subsection{Extension: Covariate Shift with Weighted $p$-values}
\label{subsec:cov_shift}

Stage~2 uses split-conformal calibration and therefore relies on calibration and test candidates being comparable in covariate distribution. In some deployments, the outcome-observed reference sample and the candidate pool may differ in their marginal covariate distribution. We handle this case through an importance-weighted variant of the conformal $p$-values.
This extension is for covariate shift only: the marginal distribution of $X$ may change, while the conditional causal mechanism and the reliability target given $X$ remain stable.
The DCA pipeline is otherwise unchanged: proxy construction, denoising, alignment-score learning, and BH selection are the same; only the calibration counts are reweighted.

Let $P_{\mathrm{src}}$ and $P_{\mathrm{tgt}}$ denote source and target covariate distributions, and let
$w(x)=dP_{\mathrm{tgt}}(x)/dP_{\mathrm{src}}(x)$
be the density ratio. We replace the left-tail conformal $p$-value in~\eqref{def_pj} by

\begin{equation}
\label{eq:weighted_pvalue}
p^{(w)}_j=
\frac{w(X_{n+j})+\sum_{i\in\mathcal D_{\mathrm{cal}}}w(X_i)\,
\mathbbm{1}\{\check A_i\ge c,\ \hat A_i\le \hat A_{n+j}\}}
{w(X_{n+j})+\sum_{i\in\mathcal D_{\mathrm{cal}}}w(X_i)}.
\end{equation}

Intuitively, this reweights calibration units so that the rank comparison approximates one drawn from the target covariate distribution rather than the reference distribution.

In practice, $w$ can be estimated from unlabeled source/target covariates and clipped or normalized for stability. We use this weighted variant only in the covariate-shift experiments. The weighted version should be read as an asymptotic drop-in extension in the main text; the precise shift assumptions, regularity conditions, and finite-sample weighted-selection discussion are deferred to Appendix~\ref{app:setting3}, so that the main text focuses on the core DCA mechanism: denoising noisy causal proxies, controlling post-selection risk, and addressing sample splitting, variance estimation, and baseline robustness.

\section{Theoretical Analysis}
\label{sec:theory}

We seek a selection set $\mathcal{S}\subseteq\{1,\ldots,m\}$ such that
$
\mathrm{FDR}(\mathcal{S}) \le \alpha
~\text{while maximizing}~
\mathrm{Power}(\mathcal{S})
$
under score-threshold rules induced by a learned alignment score.
With tolerance $c>0$ labeling unreliability in \eqref{def_pj}, reliability is based on
$A_i := |\hat\tau(X_i)-\tau(X_i)|$:
a unit is good if $A_i<c$ and bad otherwise.
Our theories are formalized as follows:

(i) \textit{Oracle finite-sample validity.} If the true errors $A_i$ were observable, conformal alignment + BH yields \emph{finite-sample} FDR control (Lemma~\ref{lemma:oracle}), recovering \citet{gui2024conformal} as a special case. 

(ii) \textit{Finite-sample robustness to proxy error.} When $A_i$ is replaced by a proxy label, we quantify how much validity can degrade in finite samples via an explicit \emph{perturbation bound} (Proposition~\ref{prop:finite_sample_approx_fdr}). 

(iii) \textit{Main theorem (causal setting).} Under identification and nuisance consistency, our denoised proxy $\check A_i$ makes the perturbation vanish, yielding \emph{asymptotic} FDR control for DCA (Theorem~\ref{thm:main}).

(iv) \textit{Power and the signal-to-noise barrier.} We formalize why naive DR proxies can be uninformative for selection in heteroskedastic regimes, leading to power collapse, and why denoising restores separability (Proposition~\ref{prop:snr_barrier}) and induces optimal power (Proposition~\ref{prop:power_denoised} and Proposition~\ref{prop:optimal_threshold_power}).

\textbf{Assumptions.} Our results rely on (i) causal identification for $\tau(x)$, (ii) split-conformal exchangeability to yield rank-based $p$-values, and (iii) nuisance consistency to ensure $\check A_i$ approximates $A_i$.

\begin{assumption}[Split-conformal exchangeability]
\label{assum:exchangeability}
Conditional on the data used to fit the scoring rule $g$ (and the base CATE predictor $\hat\tau$), the calibration and test covariates are exchangeable:
$\{X_i\}_{i\in\mathcal{D}_{\mathrm{cal}}}\cup\{X_{n+j}\}_{j=1}^m$ are i.i.d.\ from the same distribution.
Equivalently, conditional on the trained $g$, the pairs $\{(\hat A_i,A_i)\}_{i\in\mathcal{D}_{\mathrm{cal}}\cup\mathcal{D}_{\mathrm{test}}}$ are exchangeable.
\end{assumption}

\begin{assumption}[Nuisance consistency]
\label{assum:nuisance}
Let $\hat{\mu}_t, \hat{e}, \hat{V}$ be nuisance and variance estimators trained on an independent split $\mathcal{D}_{\mathrm{tr1}}$ (via sample splitting to avoid leakage).
As $|\mathcal{D}_{\mathrm{tr1}}| \to \infty$, they are consistent in mean square error:
$
\|\hat{\mu}_t - \mu_t\|_2 = o_p(1), \quad
\|\hat{e} - e\|_2 = o_p(1), \quad
\|\hat{V} - \mathrm{Var}(\phi \mid X)\|_2 = o_p(1).
$
\end{assumption}

\begin{assumption}[No mass at the tolerance boundary]
\label{assum:continuity}
The distribution of $A_i$ is continuous at $c$, i.e., $\mathbb{P}(A_i=c)=0$.
\end{assumption}

Assumption~\ref{assum:exchangeability} is standard for split conformal inference; sample splitting makes it plausible by training $g$ without calibration or test candidates.
Assumptions~\ref{assum:nuisance}--\ref{assum:continuity} should be interpreted through boundary stability rather than pointwise accuracy of the variance estimator.
The FDR guarantee does not require $\widehat V$ to be exactly correct; concretely, variance misspecification matters only when it flips proxy/oracle threshold labels near $c$.
Appendix~\ref{app:boundary_stability} gives a margin-based interpretation.

\textbf{Covariate shift (drop-in weighted calibration).}
Assumption~\ref{assum:exchangeability} can fail under deployment shift, i.e., when $P_{\mathrm{src}}(X)\neq P_{\mathrm{tgt}}(X)$ while the conditional causal mechanism given $X$ remains stable.
A drop-in fix is to replace the unweighted conformal $p$-values by the importance-weighted version in Eq.~\eqref{eq:weighted_pvalue}, with density ratio $w(x)=dP_{\mathrm{tgt}}/dP_{\mathrm{src}}(x)$.
This yields the appropriate weighted ranking comparison under covariate shift. Under the weighted exchangeability and regularity conditions stated in Appendix~\ref{app:setting3}, the oracle weighted $p$-values are valid, proxy-to-oracle deviation is controlled by weighted calibration mislabeling, and the same asymptotic FDR argument carries over.
Formal statements, finite-sample weighted-selection details, and proofs are deferred to Appendix~\ref{app:setting3}.

\subsection{Finite Sample Oracle FDR control}
\label{sec:theory_oracle}

We first analyze an oracle setting where the true errors $A_i$ are observable on the calibration set.
In this case, we can compute \emph{oracle p-values}:
\begin{equation}
\label{eq:oracle_pvalues}
p_j^*
=
\frac{1 + \sum_{i \in \mathcal{D}_{\mathrm{cal}}}
\mathbbm{1}\{A_i \ge c,\ \hat A_i \le \hat A_{n+j}\}}
{1 + |\mathcal{D}_{\mathrm{cal}}|}.
\end{equation}

\begin{lemma}[Oracle FDR control]
\label{lemma:oracle}
Under Assumption~\ref{assum:exchangeability}, applying BH at level $\alpha$ to the oracle $p$-values $\{p_j^*\}_{j=1}^m$ in Eq.~\eqref{eq:oracle_pvalues} yields a selected set $\mathcal S^*$ satisfying $\mathrm{FDR}(\mathcal S^*)\le \alpha$.
\end{lemma}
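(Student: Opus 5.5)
The plan is to follow the conformal selection argument of \citet{jin2023selection} and \citet{gui2024conformal}. It rests on the leave-one-out (conditional calibration) analysis of BH with dependent conformal $p$-values. Throughout, condition on $\mathcal D_{\mathrm{tr1}}\cup\mathcal D_{\mathrm{tr2}}$, so that $\hat\tau$ and $g$ are fixed. By Assumption~\ref{assum:exchangeability}, the pairs $Z_i=(\hat A_i,A_i)$ for $i\in\mathcal D_{\mathrm{cal}}\cup\mathcal D_{\mathrm{test}}$ are then exchangeable. Write $n_c=|\mathcal D_{\mathrm{cal}}|$.

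First I decompose the FDR over test nulls:
\[
\mathrm{FDR}=\sum_{j=1}^m\mathbb E\!\left[\frac{\mathbbm 1\{j\in\mathcal S^*,\,A_{n+j}\ge c\}}{|\mathcal S^*|\vee1}\right].
\]
For a fixed $j$, I introduce the modified $p$-values $p_\ell^{(j)}$, which are the oracle $p$-values $p_\ell^*$ recomputed with test unit $j$'s own label treated as part of the reference. Concretely, add the term $\mathbbm 1\{A_{n+j}\ge c,\hat A_{n+j}\le \hat A_{n+\ell}\}$ to the numerator of each $p^*_\ell$.

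Two facts about this construction drive the argument:
\begin{enumerate}
\item On the event $A_{n+j}\ge c$, we have $p_j^{(j)}=p_j^*$, and $p_\ell^{(j)}\ge p_\ell^*$ for every $\ell$.
\item Let $\mathcal S^{(j)}$ be the BH rejection set computed from the vector $(p_1^{(j)},\dots,p_{j-1}^{(j)},0,p_{j+1}^{(j)},\dots,p_m^{(j)})$. On the event $\{j\in\mathcal S^*\}$, the set $\mathcal S^*$ coincides with $\mathcal S^{(j)}$, by the standard BH self-consistency monotonicity argument of \citet{jin2023selection}.
\end{enumerate}
Consequently $|\mathcal S^{(j)}|$ is a function of the unordered multiset $\mathcal Z_j=\{Z_i:i\in\mathcal D_{\mathrm{cal}}\cup\{n+j\}\}$ together with the other test points. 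In particular, it is invariant to which element of $\mathcal Z_j$ plays the role of the test unit.

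The key step is conditional validity. On $\{A_{n+j}\ge c\}$, BH selects $j$ only if $p_j^*\le \alpha|\mathcal S^{(j)}|/m$. So it suffices to show
\[
\mathbb P\!\left(p_j^*\le t,\ A_{n+j}\ge c \,\middle|\, \mathcal Z_j,\{Z_{n+\ell}\}_{\ell\ne j}\right)\le \frac{t\,(1+n_c)}{1+n_c}=t
\]
for any $t$ that is measurable with respect to the conditioning. Given the multiset, the test unit is uniformly distributed among the $n_c+1$ elements. Among the elements with $A\ge c$, $p_j^*\cdot(1+n_c)$ equals one plus the rank-based count of null references with $\hat A$ at most the test score. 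At most $\lfloor t(1+n_c)\rfloor$ null elements can achieve a value $\le t(1+n_c)$. With ties in $\hat A$ this holds only as an inequality, which suffices; alternatively, ties can be broken by random tiebreaking.

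Plugging in $t=\alpha|\mathcal S^{(j)}|/m$ gives
\[
\mathbb E\!\left[\frac{\mathbbm 1\{j\in\mathcal S^*, A_{n+j}\ge c\}}{|\mathcal S^*|\vee1}\right]\le \mathbb E\!\left[\frac{\alpha |\mathcal S^{(j)}|/m}{|\mathcal S^{(j)}|}\right]=\frac{\alpha}{m}.
\]
Summing over $j$ yields $\mathrm{FDR}(\mathcal S^*)\le\alpha$; indeed the bound is $\alpha\,\mathbb E[|\mathcal H_0|]/m$.

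I expect the main obstacle to be the replacement step, that is, justifying $\mathcal S^*=\mathcal S^{(j)}$ on $\{j\in\mathcal S^*\}$ while keeping $|\mathcal S^{(j)}|$ invariant under permutations of $\mathcal Z_j$. The test $p$-values are mutually dependent through the shared calibration set, so the PRDS-type route is unavailable, and this leave-one-out construction is what replaces it. If the direct argument proves messy, I would simply invoke Theorem~2.3 of \citet{jin2023selection}. That theorem is stated for exactly these $p$-values with the null indicator $\mathbbm 1\{A\ge c\}$ and the score $\hat A$, as also used by \citet{gui2024conformal}, and I would only check that its monotonicity condition holds: the score $\hat A=g(X)$ does not depend on $A$, so the condition is trivially satisfied.
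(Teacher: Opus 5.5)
Your architecture (leave-one-out BH set, conditional rank validity given the multiset, sum over $j$) is the same as the paper's adaptation of \citet{jin2023selection}, and your rank-counting step is fine. The gap is in the modified $p$-values: they move in the wrong direction.

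You keep the ``$+1$'' and \emph{add} $\mathbbm 1\{A_{n+j}\ge c,\hat A_{n+j}\le\hat A_{n+\ell}\}$. On the null event this raises by $1/(n_c+1)$ exactly the $p$-values of test units scored above $j$, which are the ones at or beyond the BH cutoff. That can shrink the rejection set. So your Fact~2 fails, and so does the implication ``$j\in\mathcal S^*,\ A_{n+j}\ge c\ \Rightarrow\ p_j^*\le\alpha|\mathcal S^{(j)}|/m$'' on which your final display rests.

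Here is a concrete counterexample. Take $n_c=99$, $m=3$, $\alpha=0.3$, and test scores $\hat A_{n+1}<\hat A_{n+2}<\hat A_{n+3}$ with unit $2$ null. Let $9$, $24$, $29$ null calibration units be scored below units $1,2,3$ respectively.
\begin{itemize}
\item Then $p^*=(0.10,0.25,0.30)$, and BH rejects all three.
\item Your vector for $j=2$ is $(0.10,0,0.31)$, on which BH rejects only two.
\item Hence $\alpha|\mathcal S^{(2)}|/m=0.2<0.25=p_2^*$.
\item The pointwise bound you need, $\mathbbm 1\{2\in\mathcal S^*\}/|\mathcal S^*|\le\mathbbm 1\{p_2^*\le\alpha|\mathcal S^{(2)}|/m\}/|\mathcal S^{(2)}|$, would read $1/3\le 0$.
\end{itemize}

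The missing idea, which is what the paper does in \eqref{eq:pell_j_aug} via Lemma~\ref{lem:pval_as_conformal}, is to \emph{replace} the ``$+1$'' by the test unit's term instead of adding it:
\[
p^{(j)}_\ell=\frac{1}{n_c+1}\sum_{i\in\mathcal D_{\mathrm{cal}}\cup\{n+j\}}\mathbbm 1\{A_i\ge c,\ \hat A_i\le\hat A_{n+\ell}\}.
\]
This is still a symmetric function of $\mathcal Z_j$, so your measurability claim survives. On $\{A_{n+j}\ge c\}$ it equals $p^*_\ell$ when $\hat A_{n+\ell}\ge\hat A_{n+j}$, and equals $p^*_\ell-1/(n_c+1)\le p_j^*$ otherwise. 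So only $p$-values already below $p_j^*$ move, and they move only downward. Consequently, on $\{j\in\mathcal S^*,A_{n+j}\ge c\}$, for every $u\ge\alpha|\mathcal S^*|/m$ the number of modified $p$-values at most $u$ equals the number of original ones. The step-up rule therefore stops at the same $\hat k$ and returns $\mathcal S^*$; in the example the corrected vector is $(0.09,0,0.30)$ and all three are rejected. With this fix, the rest of your proof is correct and matches the paper's, with your direct rank count playing the role of the paper's fully-oracle $p_j^\circ$.

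Two side remarks.
\begin{itemize}
\item Your aside that the bound is $\alpha\,\mathbb E[|\mathcal H_0|]/m$ does not follow from your argument and is false in general. With the global denominator, null $p$-values are not conditionally super-uniform; for $m=1$ and $g$ independent of $A$, the FDR is about $\min(\alpha,\mathbb P(A\ge c))$.
\item If you fall back on \citet{jin2023selection}, what must be checked is not that $\hat A$ ignores $A$; a label-free score would give $p$-values counting all calibration units. You need that $p_j^*$ is the conformal $p$-value of the label-dependent clipped score $V(x,a)=2\bar M\,\mathbbm 1\{a<c\}+g(x)$ with $\hat V(x)=V(x,c)$, which is monotone in $a$.
\end{itemize}
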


If we somehow knew which calibration individuals are bad (error $\ge c$),
then conformal alignment produces valid $p$-values and BH ensures that among the selected individuals, the expected fraction of truly bad ones is at most $\alpha$.
This is exactly the deployment safety promise we seek. It is not a guarantee about prediction intervals, and it does not assert that $\hat\tau$ is accurate for everyone.

\subsection{Finite-sample robustness to proxy labels}
\label{sec:theory_finite_sample}

In causal problems $A_i$ is unobservable, so we replace oracle labels by proxy labels from $\check A_i$ and use the proxy p-values in Eq.~\eqref{def_pj}.
The only way this can go wrong is if proxy labels $\mathbbm{1}\{\check A_i\ge c\}$ disagree with the (unobserved) oracle labels $\mathbbm{1}\{A_i\ge c\}$. Define the calibration mislabeling rate
\begin{equation}
\label{eq:mislabel_rate}
\widehat{\Delta}_{\mathrm{cal}}
\;:=\;
\frac{1}{|\mathcal{D}_{\mathrm{cal}}|}
\sum_{i\in\mathcal{D}_{\mathrm{cal}}}
\left|\mathbbm{1}\{\check A_i\ge c\}-\mathbbm{1}\{A_i\ge c\}\right|.
\end{equation}

\begin{proposition}[Finite-sample approximate validity via mislabeling]
\label{prop:finite_sample_approx_fdr}
For every test index $j$,
$\big|\tilde p_j - p_j^*\big| \le \widehat{\Delta}_{\mathrm{cal}}.$
In particular, for any $u\in[0,1]$ and any null $j$ (i.e., $A_{n+j}\ge c$),
\begin{equation}
\label{eq:approx_superuniform}
\mathbb{P}\!\left(\tilde p_j \le u \,\middle|\, g \right)
\;\le\;
u + \mathbb{E}\!\left[\widehat{\Delta}_{\mathrm{cal}} \,\middle|\, g\right].
\end{equation}
Consequently, BH applied to $\{\tilde p_j\}_{j=1}^m$ controls FDR up to an additive perturbation that vanishes whenever $\widehat{\Delta}_{\mathrm{cal}}$ is small, i.e., $FDR(\mathcal{S}) \leq \alpha + m\mathbb{E}\!\left[\widehat{\Delta}_{\mathrm{cal}} \,\middle|\, g\right].$
\end{proposition}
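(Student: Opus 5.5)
The proposition has three claims, and I will prove them in order.

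\textbf{Step 1: pointwise deviation bound.} Fix $j$ and write both $p$-values over the common denominator $1+|\mathcal D_{\mathrm{cal}}|$. Here $\tilde p_j$ denotes the proxy $p$-value of Eq.~\eqref{def_pj}. Then
\[
|\tilde p_j-p_j^*|
\le \frac{1}{1+|\mathcal D_{\mathrm{cal}}|}\sum_{i\in\mathcal D_{\mathrm{cal}}}\mathbbm 1\{\hat A_i\le \hat A_{n+j}\}\,\bigl|\mathbbm 1\{\check A_i\ge c\}-\mathbbm 1\{A_i\ge c\}\bigr| .
\]
The indicator $\mathbbm 1\{\hat A_i\le\hat A_{n+j}\}$ is at most one, and $|\mathcal D_{\mathrm{cal}}|/(1+|\mathcal D_{\mathrm{cal}}|)\le 1$. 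Together these give $|\tilde p_j-p_j^*|\le\widehat\Delta_{\mathrm{cal}}$ deterministically. This step is purely algebraic.

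\textbf{Step 2: approximate super-uniformity.} By Step 1, the event $\{\tilde p_j\le u\}$ is contained in $\{p_j^*\le u+\widehat\Delta_{\mathrm{cal}}\}$. Since $\widehat\Delta_{\mathrm{cal}}$ is random, I cannot simply plug in its expectation. Instead I will use the exact representation $p_j^*\ge$ the usual weighted-conformal rank statistic. For a null $j$, the standard argument behind Lemma~\ref{lemma:oracle} (exchangeability of $(\hat A,A)$ pairs, Assumption~\ref{assum:exchangeability}) shows $\mathbb P(p_j^*\le v\mid g)\le v$ for each fixed $v$.

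To handle the random shift, the cleanest route is to bound the number of mislabeled calibration units that sit below $\hat A_{n+j}$, call it $K_j\le |\mathcal D_{\mathrm{cal}}|\widehat\Delta_{\mathrm{cal}}$. Then $\tilde p_j\ge p_j^*-K_j/(1+|\mathcal D_{\mathrm{cal}}|)$, which gives
\[
\mathbb P(\tilde p_j\le u)\le \mathbb P(p_j^*\le u+\delta)+\mathbb P(\widehat\Delta_{\mathrm{cal}}>\delta).
\]
A cleaner alternative, which I will try first, is the expectation form
\[
\mathbb P(\tilde p_j\le u\mid g)\le \mathbb E\bigl[\mathbbm 1\{p_j^*\le u\}\bigr]+\mathbb E\bigl[\mathbbm 1\{u<p_j^*\le u+\widehat\Delta_{\mathrm{cal}}\}\bigr].
\]
Here $p_j^*$ takes values on the grid $\{k/(1+n_{\mathrm{cal}})\}$. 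Conditional on the calibration labels and on the unordered multiset of scores, the rank of the test score is uniform over positions. So the conditional probability that $p_j^*$ lands in an interval of length $\widehat\Delta_{\mathrm{cal}}$ is at most about $\widehat\Delta_{\mathrm{cal}}$. Integrating then yields $u+\mathbb E[\widehat\Delta_{\mathrm{cal}}\mid g]$.

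\textbf{Main obstacle.} This conditioning step is the one I expect to be delicate. $\widehat\Delta_{\mathrm{cal}}$ depends on the calibration data jointly with the ranks, so I must condition on the exchangeable multiset, under which $\widehat\Delta_{\mathrm{cal}}$ is nearly invariant. The only exception is the swapped point, which costs at most $1/n_{\mathrm{cal}}$ and can be absorbed or handled by a leave-one-out version of $\widehat\Delta_{\mathrm{cal}}$.

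\textbf{Step 3: FDR.} I will write
\[
\mathrm{FDR}=\sum_{j\in\mathcal H_0}\mathbb E\!\left[\frac{\mathbbm 1\{j\in\mathcal S\}}{|\mathcal S|\vee1}\right].
\]
The simplest bound compares with the oracle selection. On the event that all $\tilde p_j$ and $p_j^*$ lead to the same BH output, the oracle bound $\alpha$ from Lemma~\ref{lemma:oracle} applies. Otherwise the false discovery proportion is at most one.

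More directly, I will use the conditional-calibration decomposition $\mathbb E[\mathbbm 1\{j\in\mathcal S\}/(|\mathcal S|\vee 1)]=\sum_k \frac1k\mathbb P(\tilde p_j\le \alpha k/m,\ |\mathcal S|=k)$. Replacing $\tilde p_j$ by $p_j^*$ inside each term, using $|\tilde p_j-p_j^*|\le\widehat\Delta_{\mathrm{cal}}$, gives the oracle contribution, which is at most $\alpha\,|\mathcal H_0|/m\le\alpha$ by the PRDS/exchangeability argument of Lemma~\ref{lemma:oracle}. The remaining error terms, summed over at most $m$ nulls, are each bounded by $\mathbb E[\widehat\Delta_{\mathrm{cal}}\mid g]$ via Step 2. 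This yields $\mathrm{FDR}\le\alpha+m\,\mathbb E[\widehat\Delta_{\mathrm{cal}}\mid g]$. The crude factor $m$ is exactly what the union over null indices costs, which is why no sharper dependence is claimed.
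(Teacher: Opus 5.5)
Step 1 is correct and matches the paper. Step 2, however, starts from a false fact. You import from Lemma~\ref{lemma:oracle} the claim $\mathbb P(p_j^*\le v\mid g)\le v$ for a null $j$, and this fails if it means conditioning on $\{A_{n+j}\ge c\}$. Write $n=|\mathcal D_{\mathrm{cal}}|$. Given a null test point and $N_0$ null calibration points, the global denominator gives $p_j^*=R/(n+1)$ with $R$ uniform on $\{1,\dots,N_0+1\}$. Hence $\mathbb P(p_j^*\le \tfrac{1}{n+1}\mid A_{n+j}\ge c)=\mathbb E[1/(N_0+1)]>\tfrac{1}{n+1}$ whenever $\mathbb P(A<c)>0$, so the conditional display fails even when $\widehat\Delta_{\mathrm{cal}}\equiv 0$. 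What actually holds, what the BH argument needs, and what the paper proves is the joint bound $\mathbb P(\tilde p_j\le u,\ A_{n+j}\ge c\mid g)\le u+\mathbb E[\widehat\Delta_{\mathrm{cal}}\mid g]$. Your multiset idea proves this joint bound. It is also more careful than the paper's one-line appeal to the tower property, which ignores that the shift $\widehat\Delta_{\mathrm{cal}}$ depends on $p_j^*$. To make it work, condition on the unordered multiset of all $n+1$ triples $(\hat A_k,A_k,\check A_k)$, test point included, and let $W$ be its number of label flips; then $\mathbb E[\widehat\Delta_{\mathrm{cal}}\mid\text{multiset}]=W/(n+1)$. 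If the test is the null point of null-rank $r$, then $p_j^*=r/(n+1)$. Your inclusion $\tilde p_j\ge p_j^*-K_j/(n+1)$ with $K_j\le W$ then forces $r\le (n+1)u+W$. At most $\lfloor (n+1)u+W\rfloor$ of the $n+1$ equally likely test positions satisfy this, which gives $u+W/(n+1)$. Keep the factor $1/(n+1)$ here: shifting by $\widehat\Delta_{\mathrm{cal}}$ itself only yields $u+W/n$, which is strictly weaker than the stated bound, so the swapped point cannot simply be absorbed.

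Step 3 has a more serious gap. After you replace $\tilde p_j$ by $p_j^*$, the event $\{|\mathcal S|=k\}$ is still produced by BH on the proxy $p$-values. So $\sum_k\frac1k\mathbb P(p_j^*\le\alpha k/m,\ |\mathcal S|=k,\ A_{n+j}\ge c)$ is not the quantity Lemma~\ref{lemma:oracle} controls. Its leave-one-out argument needs the rejection count to come from the same vector whose validity is used, so that after replacement it is a function of the exchangeable multiset. No PRDS argument covers a rejection count computed from a different vector either. Your error terms also sit at the random threshold $\alpha|\mathcal S|/m$, which depends on $p_j^*$ itself, so the fixed-$u$ bound of Step 2 does not apply. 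Your first idea, comparing with $\mathcal S^*$, is valid but gives $\alpha+\mathbb P(\mathcal S\neq\mathcal S^*)\le\alpha+n\,\mathbb E[\widehat\Delta_{\mathrm{cal}}]$, with $n$ in place of $m$. The paper instead reruns the leave-one-out proof of Lemma~\ref{lemma:oracle} with $\tilde p_j$ and perturbs only its final super-uniformity step by $\widehat\Delta_{\mathrm{cal}}$. A cleaner, fully rigorous route observes that $\tilde p_j$ is exactly the oracle $p$-value of Lemma~\ref{lemma:oracle} for the labels $\mathbbm 1\{\check A\ge c\}$. Since $\check A$ is a fixed function of $(X,T,Y)$ once the training splits are fixed, the unobserved proxy labels $\tilde H_j:=\mathbbm 1\{\check A_{n+j}\ge c\}$ of test units are exchangeable with the calibration ones. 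If test outcomes do not exist, attach fictitious $(T,Y)$ drawn from the population law given $X_{n+j}$; this changes neither $\mathcal S$ nor $A_{n+j}$. Lemma~\ref{lemma:oracle} applied to $\check A$ then gives $\mathbb E[\sum_j\tilde H_j\mathbbm 1\{j\in\mathcal S\}/(|\mathcal S|\vee 1)\mid g]\le\alpha$. With $H_j=\mathbbm 1\{A_{n+j}\ge c\}$, $\mathrm{FDP}(\mathcal S)$ exceeds this proxy false discovery proportion by at most $\sum_j|H_j-\tilde H_j|$. By exchangeability, the conditional expectation of that sum is $m$ times the flip probability of a single unit, i.e.\ $m\,\mathbb E[\widehat\Delta_{\mathrm{cal}}\mid g]$. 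This gives exactly the claimed bound and explains the factor $m$, without using Steps 1--2 at all.
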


This bound isolates the \emph{only} statistical bottleneck introduced by counterfactual unobservability:
we do not need perfect proxy values, only \emph{stable classification} around the tolerance threshold $c$.
It also addresses a common (and important) skepticism:
\emph{``Isn’t DR unbiasedness already enough to justify proxy-based selection?''}
No---unbiasedness is a statement about conditional means, while post-selection validity depends on \emph{boundary stability} of $\mathbbm{1}\{\check A_i\ge c\}$.
Heteroskedastic variance can cause frequent label flips even when DR is unbiased, inflating $\widehat{\Delta}_{\mathrm{cal}}$ and collapsing usefulness.

Moreover, the additive inflation term in Proposition~\ref{prop:finite_sample_approx_fdr} is governed by the calibration error
$\widehat\Delta_{\mathrm{cal}}$.
Under standard empirical process arguments, when the calibration set has size
$|\mathcal D_{\mathrm{cal}}|=n_{\mathrm{cal}}$, we typically have
$
\widehat\Delta_{\mathrm{cal}}
=
O_p\!\left(\sqrt{\frac{\log m}{n_{\mathrm{cal}}}}\right),
$
uniformly over $m$ hypotheses.
Consequently,
$
m\,\mathbb E[\widehat\Delta_{\mathrm{cal}}\mid g]
=
O\!\left(m\sqrt{\frac{\log m}{n_{\mathrm{cal}}}}\right).
$
In particular, the FDR inflation term vanishes provided that
$n_{\mathrm{cal}}\gg m^2\log m$.
This condition characterizes the trade-off between the number of hypotheses and the
calibration sample size required for asymptotically exact FDR control.

\subsection{Asymptotic FDR control with Denoised Proxies}
\label{sec:theory_asymptotic}

We now show that our denoised proxy construction makes the mislabeling perturbation vanish, recovering oracle validity asymptotically.

\begin{theorem}[Asymptotic FDR control]
\label{thm:main}
Suppose Assumptions~\ref{assum:identifiability}--\ref{assum:continuity} hold.
Let $\mathcal{S}$ be the set selected by applying BH at level $\alpha$ to the proxy p-values $\{\tilde{p}_j\}_{j=1}^m$.
Then, as the sample sizes of the reference data (used to estimate nuisances/variance and to train $g$) and the test size $m$ grow,
$
\limsup_{n,m \to \infty} \mathrm{FDR}(\mathcal{S}) \le \alpha.
$
\end{theorem}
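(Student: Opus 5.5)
The plan is to reduce Theorem~\ref{thm:main} to Proposition~\ref{prop:finite_sample_approx_fdr} and Lemma~\ref{lemma:oracle}, and then show that the perturbation term vanishes. I will not use the crude bound $\alpha+m\,\mathbb E[\widehat\Delta_{\mathrm{cal}}\mid g]$ directly, because the factor $m$ is fatal when $m\to\infty$. Instead I will argue on a high-probability event.

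\textbf{Step 1: comparing the two BH selections.} Fix $\delta>0$ and let $E_\delta=\{\widehat\Delta_{\mathrm{cal}}\le\delta\}$. By the first claim of Proposition~\ref{prop:finite_sample_approx_fdr}, on $E_\delta$ we have $\tilde p_j\ge p_j^*-\delta$ for every $j$. Hence the BH set $\mathcal S$ on $\{\tilde p_j\}$ is contained in the step-up set $\mathcal S^*_\delta$ obtained from $\{p_j^*\}$ with thresholds $\alpha k/m+\delta$.

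\textbf{Step 2: FDR of the inflated oracle rule.} I will bound the FDR of $\mathcal S^*_\delta$ by repeating the leave-one-out/conditional-calibration argument behind Lemma~\ref{lemma:oracle}, in the style of \citet{jin2023selection}. Under Assumption~\ref{assum:exchangeability}, each null oracle $p$-value satisfies $\mathbb P(p_j^*\le u\mid g)\le u$, together with the PRDS-type monotonicity used there. Summing over nulls then gives
\[
\mathrm{FDR}(\mathcal S^*_\delta)\le \alpha+\delta\,\mathbb E\Big[\tfrac{m}{|\mathcal S^*_\delta|\vee1}\Big].
\]
The second term is small when the selection size grows proportionally to $m$, or more crudely after truncating on $\{|\mathcal S^*_\delta|\ge \epsilon m\}$. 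The complementary event contributes at most the FDR restricted to tiny selections, and I will handle it by letting $\epsilon\downarrow0$ after $\delta\downarrow0$. Combining with Step~1,
\[
\mathrm{FDR}(\mathcal S)\le \alpha+o(1)+\mathbb P(E_\delta^c),
\]
using only that the false discovery proportion is bounded by $1$ off $E_\delta$. If the containment argument gets too delicate, my fallback is to prove the weaker statement under the paper's regime $n_{\mathrm{cal}}\gg m^2\log m$, where the crude bound of Proposition~\ref{prop:finite_sample_approx_fdr} already suffices.

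\textbf{Step 3: $\widehat\Delta_{\mathrm{cal}}\to0$ in probability.} By exchangeability, $\mathbb E\widehat\Delta_{\mathrm{cal}}=\mathbb P(\mathbbm 1\{\check A_i\ge c\}\ne\mathbbm 1\{A_i\ge c\})$. Write
\[
\check A_i^2-A_i^2=R_{1,i}+R_{2,i},
\]
where $R_{1,i}$ collects the nuisance errors ($\hat\mu_t-\mu_t$, $\hat e-e$, $\widehat V-\mathrm{Var}(\phi\mid X)$). By Assumption~\ref{assum:nuisance} with overlap and sample splitting, $R_{1,i}=o_p(1)$, so its contribution to label flips vanishes by Assumption~\ref{assum:continuity}: a flip requires $|A_i-c|$ to be comparable to the perturbation, and $\mathbb P(|A_i-c|\le\eta)\to0$ as $\eta\to0$.

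\textbf{Main obstacle.} The hard part is $R_{2,i}$, the centered noise $(\hat\tau_i-\phi_i)^2-A_i^2-\mathrm{Var}(\phi_i\mid X_i)$. It is mean-zero given $X_i$ but does not vanish pointwise. Denoising removes the bias of the raw proxy, not its fluctuation. I expect this step to need the boundary-stability/margin condition of Appendix~\ref{app:boundary_stability}, interpreted as requiring that proxy labels agree with oracle labels except on a vanishing-probability set near $c$. My first attempt would be to assume this margin condition explicitly (equivalently, that $R_{2,i}\to0$ in probability, for example by averaging replicated pseudo-outcomes or vanishing outcome noise). I would then conclude $\mathbb P(E_\delta^c)\to0$ for each fixed $\delta$, and finish by sending $\delta\to0$ to obtain $\limsup\mathrm{FDR}(\mathcal S)\le\alpha$.
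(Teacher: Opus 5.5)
There is a genuine gap in your main route (Steps 1--2), and it is not a technicality.

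First, the containment $\mathcal S\subseteq\mathcal S^*_\delta$ on $E_\delta$ says nothing about false discovery proportions. A smaller selection can have a larger FDP: a single null alone, versus that same null plus many non-nulls. So ``combining with Step 1'' to get $\mathrm{FDR}(\mathcal S)\le\alpha+o(1)+\mathbb P(E_\delta^c)$ is unjustified.

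Second, the step-up rule with thresholds $\alpha k/m+\delta$ can admit on the order of $\delta m$ nulls when the oracle selection is small, for example with all nulls and nearly uniform null $p$-values. The ``tiny selections'' term is then of order one, and it does not vanish when you send $\delta\downarrow0$ and then $\epsilon\downarrow0$.

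In fact, $\widehat\Delta_{\mathrm{cal}}\xrightarrow{p}0$ alone cannot give the result without a rate. Take an all-null regime, and suppose the $n\widehat\Delta_{\mathrm{cal}}$ flipped calibration units are the ones with the smallest scores; nothing in the hypotheses forbids this. Then about $m\widehat\Delta_{\mathrm{cal}}$ null test units get $\tilde p_j=1/(n+1)$. BH rejects all of them once $n\widehat\Delta_{\mathrm{cal}}\ge1/\alpha$, so the FDR tends to $1$ whenever $m\widehat\Delta_{\mathrm{cal}}\to\infty$.

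This is why the paper's proof is essentially your fallback. It applies $\mathrm{FDR}(\mathcal S)\le\alpha+m\,\mathbb E[\widehat\Delta_{\mathrm{cal}}]$ from Proposition~\ref{prop:finite_sample_approx_fdr} and concludes only along growth regimes with $m\,\mathbb E[\widehat\Delta_{\mathrm{cal}}]\to0$. That is the right condition, not $n_{\mathrm{cal}}\gg m^2\log m$. As you note yourself in Step~3, $\mathbb E[\widehat\Delta_{\mathrm{cal}}]$ is the per-unit flip probability, and a larger calibration set does not reduce it.

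A rate-free statement would need extra regularity that makes the BH selection a positive fraction of $m$ (Assumption~\ref{assum:boundary_reg_app}). Under that regularity, cutoff consistency (Theorem~\ref{thm:bh_cutoff_consistency_proxy}) plus a law of large numbers yields $\mathrm{FDP}\to H(t(\alpha))/R(t(\alpha))\le\alpha$. That is the rigorous form of your ``selection grows proportionally to $m$'' remark, but it is not among the theorem's hypotheses.

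Your diagnosis in Step~3 is correct, and it exposes a weak point that the paper's proof shares. Lemma~\ref{lem:delta_to_zero} treats $\check A_i\xrightarrow{p}A_i$ as a consequence of Assumption~\ref{assum:nuisance} and then applies the continuous mapping theorem. But nuisance consistency only removes your $R_{1,i}$; the outcome noise in $\phi_i$ keeps $R_{2,i}$ of order one.

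Your decomposition also silently takes $\rho=1$. For $\rho<1$ there is an extra non-vanishing term $(1-\rho)\mathrm{Var}(\phi_i\mid X_i)$. Its flips are of the conservative oracle-good/proxy-bad kind, but they still count in the two-sided $\widehat\Delta_{\mathrm{cal}}$. So for $\rho<1$ you would also need a one-sided version of Proposition~\ref{prop:finite_sample_approx_fdr} that counts only oracle-bad/proxy-good flips.

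Finally, the extra hypothesis you propose is not ``equivalent'' to the margin condition of Appendix~\ref{app:boundary_stability}. Given Assumption~\ref{assum:continuity}, the margin condition adds nothing for a qualitative limit, and the margin lemma there lists $R_i(\rho)\xrightarrow{p}0$ as a separate assumption. What your argument actually requires is that perturbation condition (or directly $\mathbb E[\widehat\Delta_{\mathrm{cal}}]\to0$) together with $m\,\mathbb E[\widehat\Delta_{\mathrm{cal}}]\to0$. You should state these explicitly rather than present them as following from Assumptions~\ref{assum:identifiability}--\ref{assum:continuity}.
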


Theorem~\ref{thm:main} is a \emph{safety} result: it guarantees FDR control, not nontrivial power.
A method that selects nobody trivially satisfies FDR$\le\alpha$.
The next subsection explains why denoising is the key ingredient that prevents this vacuity in heteroskedastic regimes.

\subsection{Power and the Signal-to-noise Barrier}
\label{sec:theory_snr}

Why can naive proxy-based selection become useless?
The short answer is that reliability is a \emph{ranking} problem: to achieve power while controlling FDR, the score must separate good units ($A<c$) from bad ones ($A\ge c$).
Naive DR proxies can destroy exactly this separability. Recall from Section~\ref{sec:method} that the squared naive proxy error satisfies, conditionally on $X$,
$
\mathbb{E}\!\left[(\hat\tau(X)-\phi)^2 \mid X\right] = A(X)^2 + V(X),
$
where $V(X):=\mathrm{Var}(\phi\mid X)$ captures irreducible, heteroskedastic noise.

\begin{proposition}[Signal-to-noise barrier for proxy ranking]
\label{prop:snr_barrier}
Consider the stylized model
$
\tilde A^2 = A^2 + V(X) + \varepsilon,
$
where $\varepsilon$ is mean-zero noise independent of $(A^2,V(X))$, and suppose $A^2$ is weakly related to $V(X)$.
Then the correlation between the naive proxy $\tilde A^2$ and the target signal $A^2$ satisfies
$
\mathrm{Corr}(\tilde A^2, A^2)
=
{\mathrm{Var}(A^2)}/{\sqrt{\mathrm{Var}(A^2)\big(\mathrm{Var}(A^2)+\mathrm{Var}(V(X))+\mathrm{Var}(\varepsilon)\big)}}.
$
In particular, in the high-noise heteroskedastic regime where $\mathrm{Var}(V(X))+\mathrm{Var}(\varepsilon)\gg \mathrm{Var}(A^2)$,
$
\mathrm{Corr}(\tilde A^2, A^2)\to 0,
$
so any selection rule based primarily on ranking by $\tilde A$ becomes asymptotically uninformative for distinguishing $A<c$ from $A\ge c$, leading to vanishing power at fixed FDR level.
\end{proposition}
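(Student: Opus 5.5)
The plan is to compute the correlation directly from the stylized model, reading ``$A^2$ is weakly related to $V(X)$'' as $\mathrm{Cov}(A^2,V(X))=0$ in the exact identity. Afterwards I will note that a small nonzero covariance only perturbs the formula by lower-order terms.

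\textbf{Step 1 (covariance).} By bilinearity of covariance,
\[
\mathrm{Cov}(\tilde A^2,A^2)=\mathrm{Var}(A^2)+\mathrm{Cov}(V(X),A^2)+\mathrm{Cov}(\varepsilon,A^2).
\]
The last term vanishes because $\varepsilon$ is mean-zero and independent of $(A^2,V(X))$. The middle term vanishes under the uncorrelatedness reading, so $\mathrm{Cov}(\tilde A^2,A^2)=\mathrm{Var}(A^2)$.

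\textbf{Step 2 (variance).} Similarly,
\[
\mathrm{Var}(\tilde A^2)=\mathrm{Var}(A^2)+\mathrm{Var}(V(X))+\mathrm{Var}(\varepsilon)+2\,\mathrm{Cov}(A^2,V(X)).
\]
The two cross terms involving $\varepsilon$ vanish by independence, and the remaining covariance term vanishes by assumption. Dividing the covariance from Step 1 by $\sqrt{\mathrm{Var}(A^2)\mathrm{Var}(\tilde A^2)}$ gives the displayed formula. It simplifies to $\bigl(1+(\mathrm{Var}(V(X))+\mathrm{Var}(\varepsilon))/\mathrm{Var}(A^2)\bigr)^{-1/2}$, which tends to $0$ as the noise-to-signal ratio diverges.

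\textbf{Step 3 (from correlation to power).} This is the main obstacle, because correlation alone does not control ranking-based power, so the claim of vanishing power needs additional structure. I would make it precise in the stylized regime by a scaling argument. Write the noise component as $N=V(X)+\varepsilon$ with $\mathrm{Var}(N)\to\infty$, and consider the standardized score $(\tilde A^2-\mathbb E\tilde A^2)/\mathrm{sd}(\tilde A^2)$. This score differs from the standardized $N$ by a term whose variance is of order $\mathrm{Var}(A^2)/\mathrm{Var}(N)\to 0$. Hence the ranking is asymptotically that of $N$, which is independent of (or uncorrelated with) the good/bad label $\mathbbm 1\{A<c\}$.

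It follows that the conditional distributions of the score among good and bad units merge, i.e.\ the ROC curve approaches the diagonal, so that the score carries no information about which units are reliable. In that case any threshold rule whose selected set contains at most an $\alpha$ fraction of bad units must select only a vanishing fraction of units whenever the null proportion exceeds $\alpha$. Any non-vanishing selection would instead have a false discovery proportion near the null proportion, which exceeds $\alpha$. Either way, power vanishes at fixed FDR level $\alpha$.

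I would state this final step as a heuristic consequence under independence of $N$ from $A$ and a nondegenerate null proportion, rather than as a fully general theorem.
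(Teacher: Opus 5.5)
Your Steps 1--2 coincide with the paper's proof: bilinearity, $\mathrm{Cov}(\varepsilon,A^2)=0$ by independence, and ``weakly related'' read as $\mathrm{Cov}(V(X),A^2)=0$. So the correlation formula and its limit are fine. The paper stops there and never proves the power clause.

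Your Step 3, which tries to supply that clause, fails at one inference. You go from ``the standardized score differs from the standardized $N$ by a term of vanishing variance'' to ``the conditional laws of the score among good and bad units merge, so the ROC curve approaches the diagonal.'' An $L^2$-small perturbation only gives closeness in a weak (L\'evy) sense. Threshold rules and BH instead depend on the Kolmogorov distance $\sup_u|F_{\mathrm{good}}(u)-F_{\mathrm{bad}}(u)|$, where $F_{\mathrm{good}}(u):=\mathbb P(\tilde A^2\le u\mid A<c)$ and $F_{\mathrm{bad}}(u):=\mathbb P(\tilde A^2\le u\mid A\ge c)$. Wherever $N$ puts its mass in clumps, the order within a clump is still decided by $A^2$.

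Here is a concrete counterexample. Let $B$ be a fair coin independent of $A$ (a function of a covariate unrelated to $A$), and set $V(X)=kB$, $\varepsilon\equiv 0$. Then $N$ is independent of $A$, $\mathrm{Var}(N)=k^2/4\to\infty$, and $\mathrm{Corr}(\tilde A^2,A^2)\to 0$. Yet for $k>c^2$ the rule $\{\tilde A^2\le c^2\}$ selects exactly the units with $B=0$ and $A\le c$. Using $\mathbb P(A=c)=0$, it has FDR $0$ and power $1/2$ for every $k$. So independence plus a null proportion above $\alpha$, the conditions you propose, do not suffice, and the power clause is false in the generality stated.

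The missing hypothesis is anti-concentration of the noise at the scale of the signal. For example, require $Q_N(L):=\sup_u\mathbb P(u-L<N\le u)\to 0$ for each fixed $L$. This holds when $N=sZ$ with $Z$ of fixed law having a bounded density and $s\to\infty$; a diverging variance alone does not give it.

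With $N$ independent of $A$ and $F_N$ its CDF, the argument closes as follows:
\begin{itemize}
\item $F_{\mathrm{good}}(u)=\mathbb E[F_N(u-A^2)\mid A<c]\le F_N(u)$.
\item $F_{\mathrm{bad}}(u)\ge F_N(u-L)\,\mathbb P(A^2\le L\mid A\ge c)$.
\item Hence $\delta:=\sup_u\{F_{\mathrm{good}}(u)-F_{\mathrm{bad}}(u)\}\le Q_N(L)+\mathbb P(A^2>L\mid A\ge c)$, which is small once you take $L$ large and then the noise large.
\end{itemize}

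Your last paragraph then becomes exact. Let $\pi_0:=\mathbb P(A\ge c)>\alpha$ and $\pi_1:=1-\pi_0$. The feasibility constraint $\pi_0F_{\mathrm{bad}}(t)\le\alpha\{\pi_0F_{\mathrm{bad}}(t)+\pi_1F_{\mathrm{good}}(t)\}$, together with $F_{\mathrm{good}}\le F_{\mathrm{bad}}+\delta$, gives $F_{\mathrm{bad}}(t)\le\alpha\pi_1\delta/(\pi_0-\alpha)$. Therefore the power $F_{\mathrm{good}}(t)\le\delta\{1+\alpha\pi_1/(\pi_0-\alpha)\}$ vanishes uniformly over feasible thresholds.

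As you partly anticipated, this step also needs $V(X)$ independent of $\mathbbm{1}\{A<c\}$. The $\mathrm{Cov}(A^2,V(X))=0$ reading used for the correlation formula is not enough.
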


Proposition~\ref{prop:snr_barrier} formalizes the failure mode we highlight in the introduction:
even if DR is unbiased, heteroskedastic variance can overwhelm the reliability signal.
Selection then prioritizes variance rather than model accuracy, forcing BH to be conservative and collapsing power.

\textbf{How denoising fixes it, and why $\rho$ matters.}
DCA replaces $\tilde A^2$ by a variance-subtracted proxy
$
\check A^2(\rho) = \big(\tilde A^2 - \rho \hat V(X)\big)_+.
$
Under perfect variance knowledge (oracle $\hat V=V$) and ignoring truncation for intuition,
$
\mathbb{E}[\check A^2(\rho)\mid X] \approx A^2 + (1-\rho)V(X),
$
so increasing $\rho$ directly increases the signal-to-noise ratio of the ranking score, improving separability and power.
At the same time, $\rho<1$ provides robustness to variance estimation error:
it reduces the risk of over-subtraction (which could misclassify truly bad units as good), thereby protecting the FDR guarantee.
This is precisely the safety--usefulness trade-off controlled by $\rho$ in practice.


\textbf{Asymptotic Power after Denoising} We now characterize the \emph{asymptotic power} of Denoised Conformal Alignment (DCA) and make precise how denoising restores nontrivial selection power under FDR control.
Our result parallels the power analysis of \citet{gui2024conformal}, while addressing the additional difficulty that, in causal settings, reliability labels are counterfactual and must be inferred through noisy proxies. Recall that the alignment score $\hat A = g(X)$ is trained to predict the denoised proxy $\check A$, and that selection is based on the conformalized ranking of $\hat A$.
Let
$
H(t) := \mathbb{P}(A \ge c,\ g(X) \le t),
\qquad
t(\alpha) := \sup\Big\{t:\ \frac{H(t)}{\mathbb{P}(g(X)\le t)} \le \alpha \Big\},
$
where $A = |\hat\tau(X)-\tau(X)|$ denotes the oracle CATE prediction error.
The threshold $t(\alpha)$ is the population-level cutoff implicitly targeted by the BH procedure under conformal calibration.

\begin{proposition}[Asymptotic power of denoised conformal alignment]
\label{prop:power_denoised}
Under the conditions of Theorem~\ref{thm:main}, assume further that

    (i) $(X_i, A_i)_{i\ge 1}$ are i.i.d.;
    (ii) the alignment score $g(X)$ has a continuous distribution;
    (iii) there exists $\varepsilon>0$ such that
    $
    \frac{H(t)}{\mathbb{P}(g(X)\le t)} < \alpha
    \quad\text{for all } t\in[t(\alpha)-\varepsilon,\ t(\alpha)].
    $ Then, as $|\mathcal D_{\mathrm{cal}}|,\, m \to \infty$, the power of DCA converges to
$
\lim \mathrm{Power}
=
\mathbb{P}\!\left(g(X)\le t(\alpha)\ \middle|\ A < c\right).
$

\end{proposition}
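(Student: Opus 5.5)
The plan is to follow the power analysis of \citet{gui2024conformal} (itself adapted from conformalized selection, \citet{jin2023selection}). First the proxy $p$-values are replaced by oracle ones, and then the oracle BH threshold is shown to converge to the population cutoff $t(\alpha)$.

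\emph{Step 1 (reduction to oracle $p$-values).} By Proposition~\ref{prop:finite_sample_approx_fdr}, $\sup_j|\tilde p_j-p_j^*|\le\widehat\Delta_{\mathrm{cal}}$. Under the conditions of Theorem~\ref{thm:main}, the argument of that theorem gives $\widehat\Delta_{\mathrm{cal}}\to 0$ in probability. It therefore suffices to show that BH is stable under a uniform perturbation $\delta\to0$ of its inputs. This stability will come out of Step 3, by running the argument with $\alpha$ replaced by $\alpha\pm o(1)$ and invoking continuity of the limit in $\alpha$.

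\emph{Step 2 (empirical-process representation).} Write $n_c=|\mathcal D_{\mathrm{cal}}|$. Then $p_j^*=\widehat H(\hat A_{n+j})+O(1/n_c)$, where
\[
\widehat H(t)=\frac{1}{n_c}\sum_{i\in\mathcal D_{\mathrm{cal}}}\mathbbm 1\{A_i\ge c,\ g(X_i)\le t\}.
\]
Since $p_j^*$ is monotone in $\hat A_{n+j}$, BH selects exactly the units with $\hat A_{n+j}\le \hat T$ for a data-dependent threshold. Here
\[
\hat T=\sup\Big\{t:\ \frac{m\,\widehat H(t)}{1\vee\sum_{j}\mathbbm 1\{\hat A_{n+j}\le t\}}\le\alpha\Big\},
\]
up to $O(1/n_c)$ corrections. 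Conditional on $g$, Glivenko--Cantelli gives
\[
\sup_t|\widehat H(t)-H(t)|\to0,\qquad \sup_t\Big|\tfrac1m\textstyle\sum_j\mathbbm 1\{\hat A_{n+j}\le t\}-F(t)\Big|\to0
\]
almost surely, where $F(t)=\mathbb P(g(X)\le t)$. Assumption (ii) makes $F$ continuous, and (i) supplies the i.i.d. structure.

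\emph{Step 3 (threshold convergence; main obstacle).} I will show $\hat T\to t(\alpha)$ in probability. The upper direction follows from the definition of $t(\alpha)$ as a supremum together with uniform convergence. Care is needed for $t>t(\alpha)$, where the ratio $H/F$ may only weakly exceed $\alpha$, but the sup definition handles this. The lower direction is the delicate one. Assumption (iii) gives a strict margin $H(t)/F(t)\le\alpha-\eta$ on $[t(\alpha)-\varepsilon,t(\alpha)]$, by compactness and continuity. Uniform convergence then forces the empirical ratio below $\alpha$ (or $\alpha\pm\delta$ after Step 1) near $t(\alpha)$ with probability tending to one, so $\hat T\ge t(\alpha)-\varepsilon'$ for any small $\varepsilon'$. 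The same margin absorbs the $\widehat\Delta_{\mathrm{cal}}$ perturbation, which is why (iii) is indispensable.

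\emph{Step 4 (power limit).} The power equals
\[
\frac{\frac1m\sum_j\mathbbm 1\{A_{n+j}<c,\ \hat A_{n+j}\le\hat T\}}{\frac1m\sum_j\mathbbm 1\{A_{n+j}<c\}\vee\frac1m}.
\]
By uniform LLN over $t$ and $\hat T\to t(\alpha)$, combined with continuity of $F$, this ratio converges in probability to $\mathbb P(g(X)\le t(\alpha),A<c)/\mathbb P(A<c)$. Here $\mathbb P(A<c)>0$ is assumed implicitly for the conditional probability to be defined. Since the ratio is bounded in $[0,1]$, dominated convergence upgrades convergence in probability to convergence of expectations. This yields the limit $\mathbb P(g(X)\le t(\alpha)\mid A<c)$. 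Convergence holds conditionally on $g$ and then unconditionally.
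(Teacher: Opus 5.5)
\textbf{Overall assessment.} Your proof has the same architecture as the paper's: replace proxy by oracle $p$-values through $\sup_j|\tilde p_j-p_j^\ast|\le\widehat\Delta_{\mathrm{cal}}\to0$, show that the BH-induced score cutoff converges to $t(\alpha)$, and finish with a law of large numbers and bounded convergence. It does not fall short of the paper's proof.

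\textbf{How your cutoff argument differs.} The paper (Theorems~\ref{thm:bh_cutoff_consistency_oracle} and~\ref{thm:bh_cutoff_consistency_proxy}) works on the $p$-value scale. It writes BH as the functional $\sup\{u:u/\widehat F_m(u)\le\alpha\}$ of the empirical CDF of the $p$-values and proves $\widehat F_m\to F_P$, where $F_P$ is the law of $H(\hat A)$. It then invokes continuity of that functional and maps back to scores through $H^{-1}$. It handles the proxy by asserting that a uniform horizontal shift of the $p$-values keeps their empirical CDF uniformly close. You stay on the score scale with $m\widehat H(t)/\#\{j:\hat A_{n+j}\le t\}$.

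\textbf{What your route buys.}
\begin{itemize}
\item The test-side score CDF is identical for proxy and oracle. The proxy only shifts the calibration null-mass function vertically, by at most $\widehat\Delta_{\mathrm{cal}}$ uniformly in $t$, so the margins absorb it directly. Your Step~3 framing is therefore the right one, and the ``continuity of the limit in $\alpha$'' detour in Step~1 is unnecessary.
\item The paper's transfer and translation steps implicitly need $F_P$ continuous and $H$ invertible. Both fail if $H$ is flat on a region where $g(X)$ has mass.
\item Your uniform LLN over $t$, combined with continuity of $t\mapsto\mathbb P(A<c,\,g(X)\le t)$, handles the data-dependent cutoff correctly. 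The paper instead conditions on $\tilde t_{n,m}$ and treats the selection indicators as i.i.d., which is not literally valid because the cutoff is built from those same test scores.
\end{itemize}
What the paper's route buys is reuse of the Storey-type functional argument of \citet{jin2023selection}, with the $p$-value cutoff limit $u_\star(\alpha)$ as a by-product.

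\textbf{Smaller points.}
\begin{itemize}
\item By (ii), $H(t)/\mathbb P(g(X)\le t)$ is continuous wherever the denominator is positive. Outside degenerate cases, strict inequality at $t(\alpha)$ itself then contradicts $t(\alpha)$ being a finite supremum. Read (iii) with the appendix's half-open interval $[t(\alpha)-\varepsilon,t(\alpha))$: that still gives strict inequality at each $t(\alpha)-\varepsilon'$, which is all your lower direction uses.
\item For the upper direction, the uniform gap $\inf_{t\ge t(\alpha)+\varepsilon'}\{H(t)-\alpha\,\mathbb P(g(X)\le t)\}>0$ does not follow from the supremum definition alone. It comes from continuity plus compactness, using the paper's w.l.o.g.\ boundedness of $g$.
\item Your Step~1, exactly like the paper, imports $\widehat\Delta_{\mathrm{cal}}\to0$ from Lemma~\ref{lem:delta_to_zero}, whose proof asserts $\check A_i\to A_i$ in probability. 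Assumption~\ref{assum:nuisance} does not deliver this. Even with oracle nuisances, $\phi_i-\tau(X_i)$ keeps unit-level noise of conditional variance $\mathrm{Var}(\phi\mid X)$, and subtracting $\rho\widehat V$ only corrects the conditional mean of $\tilde A_i^2$. That input is therefore an extra assumption shared by both proofs, not a consequence of the conditions of Theorem~\ref{thm:main}.
\end{itemize}
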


Proposition~\ref{prop:power_denoised} shows that, asymptotically, DCA selects all units whose alignment score $g(X)$ is no larger than the cutoff $t(\alpha)$, and that the resulting power is exactly the mass of truly reliable units ($A<c$) lying to the left of this cutoff.

Compared to the power analysis of \citet{gui2024conformal}, the present result is more delicate: the event $A<c$ is counterfactual and unobservable, and $g(X)$ is trained on proxy labels.
Denoising is used to stabilize threshold labels near $c$ and align the learned score with the true reliability ordering, rather than to require pointwise perfect recovery of $A$.
Without denoising, heteroskedastic noise in DR pseudo-outcomes can dominate the proxy signal, making the conditional distributions of $g(X)\mid A\ge c$ and $g(X)\mid A<c$ overlap and driving $t(\alpha)$ to the extreme tail, so limiting power degenerates to zero even though FDR control still holds\footnote{Our score-threshold formulation is equivalent to \citet{gui2024conformal} by monotonicity of the score transform.}.
We next formalize optimality of power.

\begin{proposition}[Optimal power among score-threshold rules]
\label{prop:optimal_threshold_power}
Consider selection rules induced by a fixed alignment score $g(X)$:
$\mathcal S_t:=\{j:g(X_{n+j})\le t\}$, $t\in\mathbb R$.
Define $H(t):=\mathbb P(A\ge c,g(X)\le t)$ and $R(t):=\mathbb P(g(X)\le t)$, and let
$t(\alpha):=\sup\Big\{t:\frac{H(t)}{R(t)}\le\alpha\Big\}$.
Then, among all threshold rules satisfying the population-level constraint $H(t)/R(t)\le\alpha$,
$
\mathcal S_{t(\alpha)}
\in
\arg\max_{t:\,H(t)/R(t)\le\alpha}
\mathbb P\!\left(g(X)\le t\mid A<c\right).
$
\end{proposition}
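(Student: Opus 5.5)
The plan is to reduce the statement to monotonicity of the objective in $t$. Write $F_1(t):=\mathbb P(g(X)\le t\mid A<c)=\mathbb P(A<c,\,g(X)\le t)/\mathbb P(A<c)$, assuming $\mathbb P(A<c)>0$; otherwise the power is identically zero and every feasible $t$ is a maximizer. Since $\{g(X)\le t\}$ increases in $t$, $F_1$ is a nondecreasing, right-continuous function of $t$. Maximizing $F_1$ over the feasible set $\mathcal T_\alpha:=\{t: H(t)/R(t)\le\alpha\}$ is therefore the same as choosing the largest feasible threshold, and by definition that threshold is $t(\alpha)=\sup\mathcal T_\alpha$.

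First I will check that $t(\alpha)$ is itself feasible, so the supremum is attained. Take $t_k\in\mathcal T_\alpha$ with $t_k\downarrow t(\alpha)$; if $t(\alpha)\in\mathcal T_\alpha$ already, there is nothing to do. Right-continuity of the CDF-type functions $H$ and $R$ gives $H(t_k)\to H(t(\alpha))$ and $R(t_k)\to R(t(\alpha))$. If $R(t(\alpha))>0$, the ratio converges, and the closed constraint passes to the limit, so $H(t(\alpha))/R(t(\alpha))\le\alpha$.

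The main obstacle is that a supremum of a feasible set is approached from below, not above. If feasible points only accumulate at $t(\alpha)$ from the left, right-continuity does not help. In that case I will use left limits, $H(t(\alpha)^-)\le\alpha R(t(\alpha)^-)$. Under the continuity of $g(X)$ assumed in Proposition~\ref{prop:power_denoised}, $R$ is continuous. Also $H(t)-H(s)\le R(t)-R(s)$ for $s<t$, since $H(t)-H(s)=\mathbb P(A\ge c,\,s<g(X)\le t)$, so $H$ is continuous as well. The limit then passes and $t(\alpha)$ is feasible. Without continuity, I will state the result with the convention that the maximizer is the right-continuous version $t(\alpha)$, or add continuity as a standing assumption. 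I will also adopt the convention $0/0:=0$ when $R(t)=0$, which makes every small $t$ trivially feasible and the set $\mathcal T_\alpha$ nonempty. If $t(\alpha)=+\infty$, so that all $t$ are feasible, the rule selects everything and attains the maximal power $1$.

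Finally, for any feasible $t$ we have $t\le t(\alpha)$ by the definition of the supremum, and monotonicity gives $F_1(t)\le F_1(t(\alpha))$. Together with feasibility of $t(\alpha)$, this shows $\mathcal S_{t(\alpha)}$ lies in the argmax. I will remark that the result needs no assumption on the proxy or denoising step: it is a population-level statement for a fixed $g$. It complements Proposition~\ref{prop:power_denoised}, which shows that BH on conformal $p$-values asymptotically attains exactly this threshold.
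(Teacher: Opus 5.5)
Your argument matches the paper's: power $\mathbb P(g(X)\le t\mid A<c)$ is nondecreasing in $t$, so the largest feasible threshold $t(\alpha)$ maximizes it. What you add is the check that $t(\alpha)$ is actually feasible, which the paper's proof takes for granted, and your left-limit argument under continuity of $g(X)$ is the right way to do it; note that the $t_k\downarrow t(\alpha)$ attempt is vacuous, and that without continuity an atom of $g(X)$ at the cutoff carrying mostly units with $A\ge c$ genuinely makes the supremum infeasible.
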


Proposition~\ref{prop:optimal_threshold_power} formalizes a natural but nontrivial optimality principle for selective deployment under score-threshold rules $\mathcal S_t=\{j:g(X_{n+j})\le t\}$.
Increasing $t$ selects more units and increases power, but also admits more unreliable units and raises the population-level error rate.
The cutoff $t(\alpha)$ is precisely the largest threshold satisfying the population-level constraint: increasing it further would violate the constraint, while decreasing it would unnecessarily discard reliable units.
Thus, $\mathcal S_{t(\alpha)}$ exhausts the admissible population-level error budget and achieves the largest fraction of truly reliable units among admissible threshold rules.

\section{Experiments}
\label{sec:experiments}

We evaluate Denoised Conformal Alignment (DCA) for \emph{deployment-first} reliable CATE selection: selecting candidates with trustworthy CATE predictions while controlling post-selection FDR at level $\alpha$ (Section~\ref{sec:problem}). We report realized FDR on the selected set and power (selection yield) as $\alpha$ varies.

Experiments are designed to support four claims from Section~\ref{sec:theory}: \textbf{(G1)}\emph{ \textbf{Safety:}} stable FDR control across data regimes; \textbf{(G2)}\emph{ \textbf{Usefulness under proxy contamination:}} naive DR-based proxies can lose power when heteroskedasticity or heavy tails inflate proxy noise (Proposition~\ref{prop:snr_barrier}), while variance-aware denoising restores informative ranking; \textbf{(G3)}\emph{ \textbf{Robustness to covariate shift:}} importance-weighted conformal $p$-values provide a drop-in calibration modification when $P_{\rm src}(X)\neq P_{\rm tgt}(X)$ under stable conditional causal mechanisms; \textbf{(G4)}\emph{ \textbf{External validity:}} DCA remains effective on semi-synthetic benchmarks with realistic covariates and ground-truth CATE, and produces meaningful qualitative diagnostics on real data.

The two-stage pipeline uses disjoint sample splitting: $\mathcal D_{\mathrm{tr1}}$ fits nuisances and the conditional variance model, $\mathcal D_{\mathrm{tr2}}$ trains the alignment predictor $g$, and $\mathcal D_{\mathrm{cal}}$ calibrates conformal $p$-values for selecting from an unlabeled candidate pool $\mathcal D_{\mathrm{test}}$. This validity-first split preserves split-conformal exchangeability and prevents outcome leakage; Appendix~\ref{app:exp_details} reports full sample-size and cross-fitting checks, showing that gains are not driven by a favorable split and outlining data-reuse variants. Unless stated otherwise, we use Random Forests for $(\hat\mu_t,\hat V)$, logistic regression for $\hat e$, and Random Forest regressor for $g$.
Full setup, robustness checks (split/cross-fitting, denoising strength, variance misspecification, tolerance, and alternative proxies), and IHDP/NLSM/NSW details are in Appendix~\ref{app:exp_details}. 
The tolerance $c$ is an application-level acceptability threshold (Section~\ref{sec:problem}). Denoising strength $\rho$ controls variance subtraction in Eq.~\eqref{eq:rho_denoising} (theory uses $\rho\in[0,1]$); we include $\rho>1$ only as a stress test. Representative $\rho$ values are chosen by a fixed rule on an outcome-observed validation slice, with no test-outcome peeking, since proxy contamination varies by regime.

\textbf{Baselines.} We compare proxy- and uncertainty-based scores: \emph{Naive (plug-in)} uses the raw DR proxy $\tilde A=|\hat\tau-\phi|$, \emph{Ensemble Std} uses predictive disagreement, and \emph{CQR width} uses conformalized quantile regression interval width \citep{romano2019conformalized}. All methods use the same conformal alignment and BH selection; only the scalar score differs (smaller scores indicate higher reliability).

\begin{figure}[t]
    \centering
    \includegraphics[width=\textwidth]{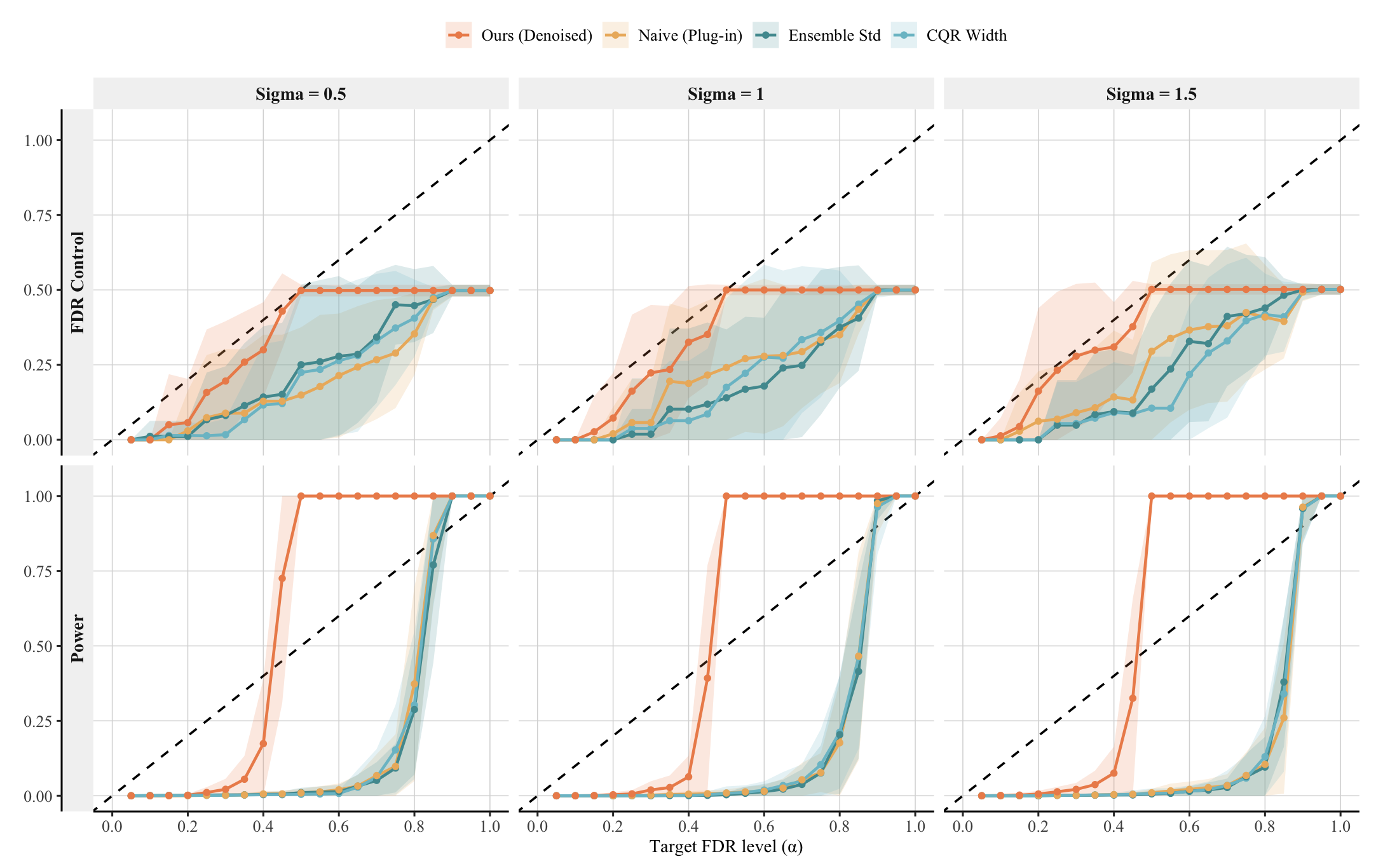}
    \caption{\textbf{Gaussian outcomes with heteroskedastic proxy noise.}
Realized FDR (top) and power (bottom) versus target $\alpha$ for $\sigma\in\{0.5,1.0,1.5\}$ with $\rho=0.65$.
    Across all $\sigma$, DCA stays close to or below the nominal FDR line while achieving much higher power, indicating that variance-aware denoising mitigates the heteroskedastic proxy-noise barrier. More results are in Appendix~\ref{app:exp_details}.}
    \label{fig:synth_noshift}
\end{figure}

\textbf{Heteroskedastic proxy noise.} We simulate a nonlinear causal model with heteroskedastic outcomes and vary $\sigma$. Figure~\ref{fig:synth_noshift} shows a proxy-noise barrier: proxy-based baselines have lower power at moderate $\alpha$, improving only at larger $\alpha$, consistent with distorted ranking signal (Section~\ref{sec:theory_snr}). In contrast, \textbf{DCA achieves higher power at comparable realized FDR}, supporting (G1--G2).

\begin{figure}[htbp]
    \centering
    \includegraphics[width=0.9\textwidth]{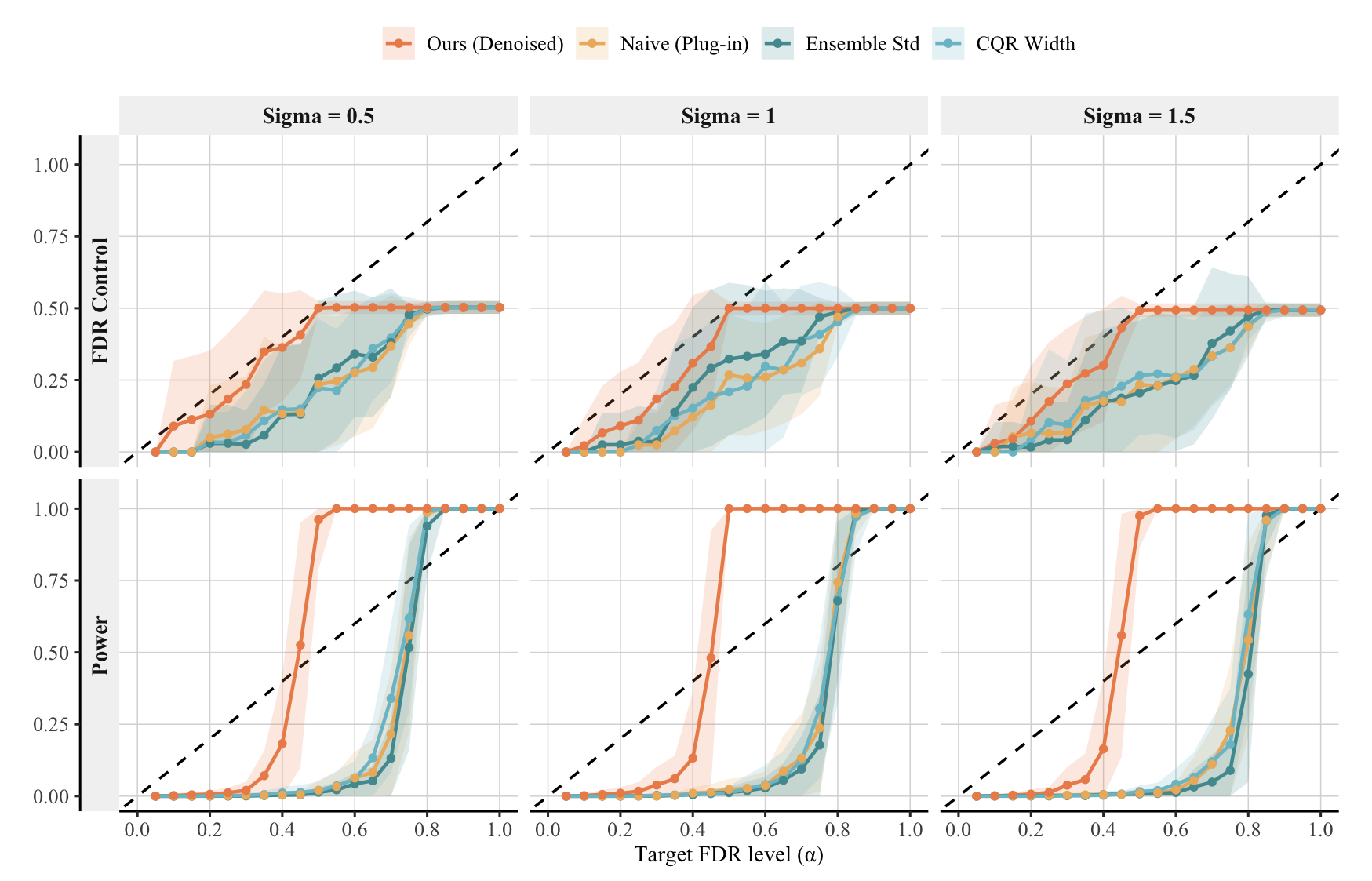}
    \caption{\textbf{Hard overlap with heavy-tailed outcomes.}
    Realized FDR (top) and power (bottom) versus target $\alpha$ for $\sigma\in\{0.5,1.0,1.5\}$ with a conservative denoising strength $\rho=0.15$.
    Even when inverse-propensity factors and heavy-tailed residuals make raw DR proxies unstable, DCA recovers nontrivial selection yield while keeping realized FDR controlled or conservative.}
    \label{fig:main_setting2_heavytail}
\end{figure}

\textbf{Hard overlap + heavy tails.} This setting compounds two failure modes for proxy-based selection: near-extreme propensity scores amplify DR corrections, and heavy-tailed residuals create occasional proxy spikes. 
Figure~\ref{fig:main_setting2_heavytail} shows that \textbf{DCA remains useful in this difficult regime, but prefers a smaller denoising strength}.
This is consistent with our boundary-stability view: when $\widehat V$ is harder to estimate, modest subtraction can reduce variance-driven ranking errors without over-collapsing proxy labels near the tolerance boundary.

\begin{figure}[t]
    \centering
    \begin{minipage}[t]{0.49\textwidth}
        \centering
        \includegraphics[width=\linewidth]{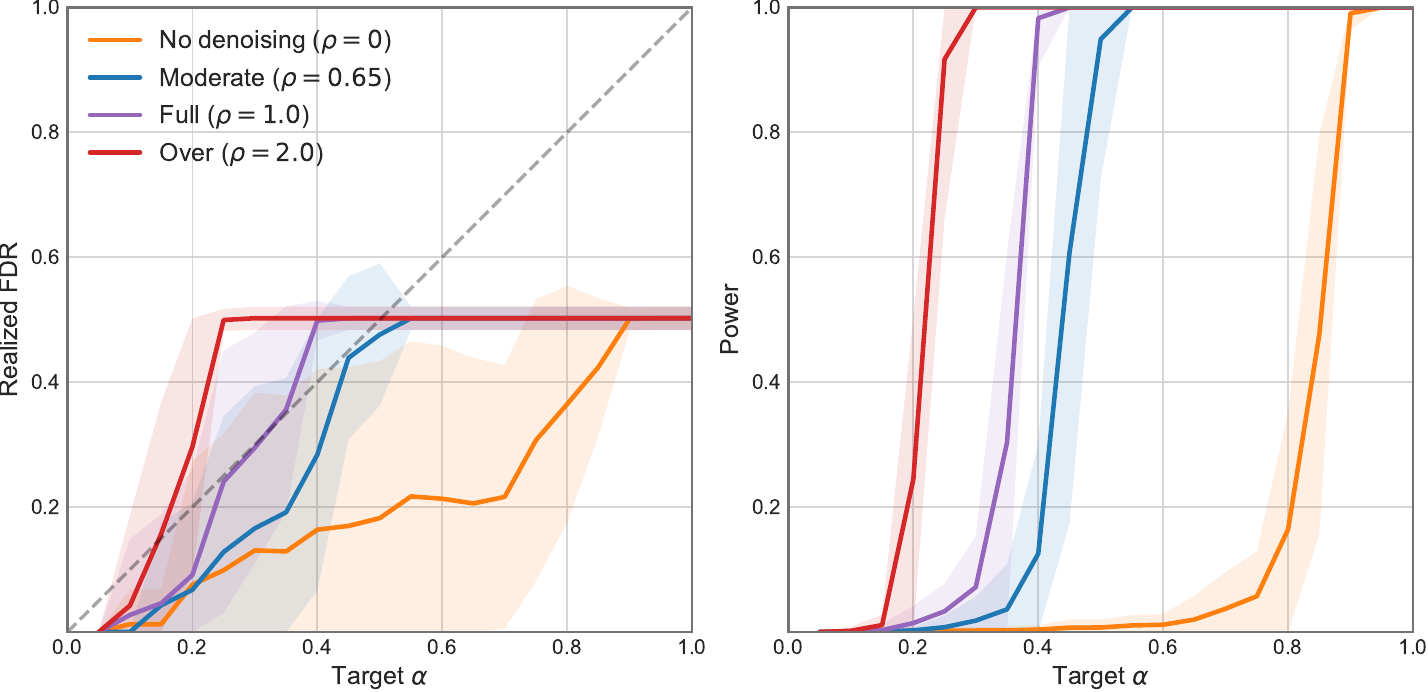}
        \vspace{-0.5em}
        \centerline{\small \textbf{(a)} Denoising strength}
    \end{minipage}
    \hfill
    \begin{minipage}[t]{0.49\textwidth}
        \centering
        \includegraphics[width=\linewidth]{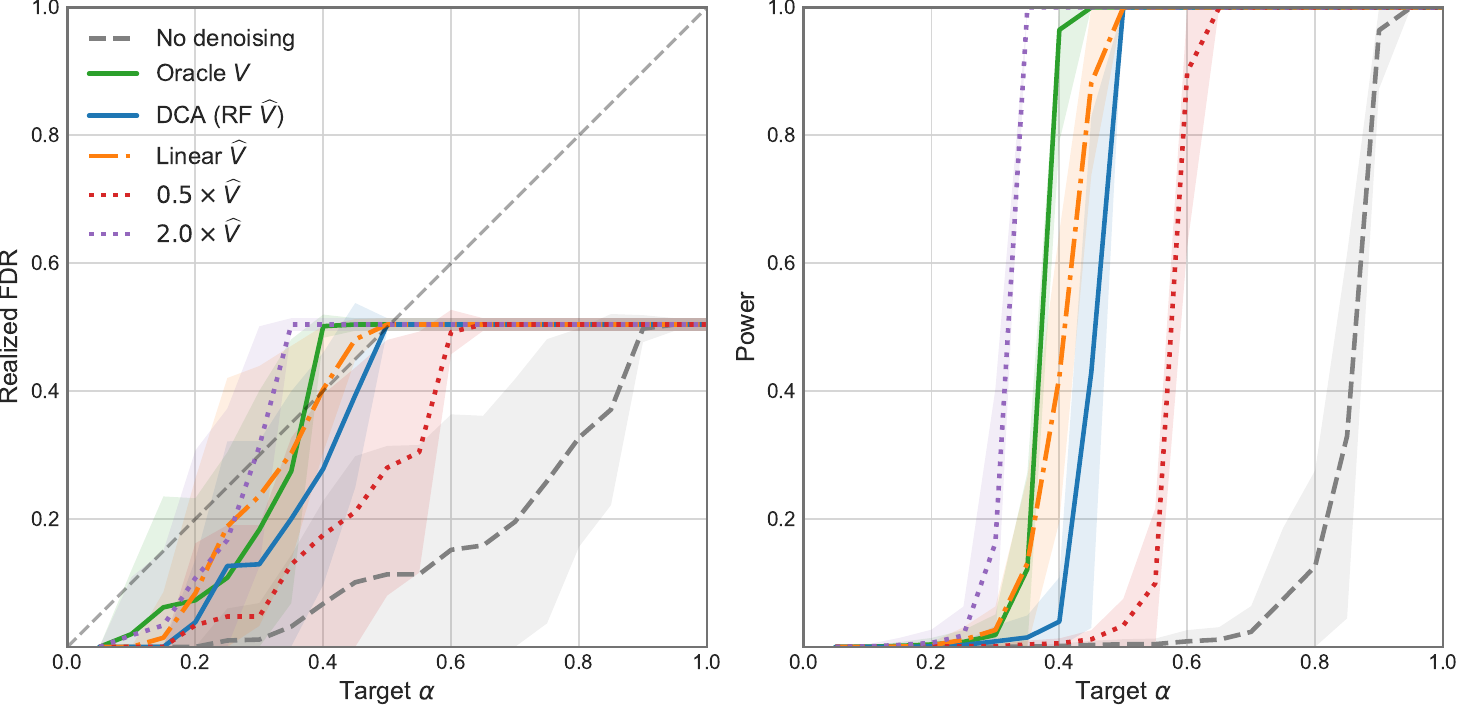}
        \vspace{-0.5em}
        \centerline{\small \textbf{(b)} Variance misspecification}
    \end{minipage}
    \caption{\textbf{Denoising ablations and robustness to imperfect variance estimates.}
    Panel (a) compares no denoising, moderate subtraction, full subtraction, and over-subtraction.
    Panel (b) perturbs the variance model through well-specified, misspecified, under-estimated, and over-estimated variants.
    DCA's gain does not rely on perfect recovery of $\widehat V$: imperfect variance estimates mainly affect the procedure through boundary-label stability near $c$, leading to graceful degradation rather than catastrophic loss of calibration.}
    \label{fig:main_denoising_ablation_misspec}
\end{figure}

\textbf{Denoising strength and variance misspecification.} Figure~\ref{fig:main_denoising_ablation_misspec} directly tests the role of the variance model. Moderate subtraction improves the safety--power trade-off relative to $\rho=0$, while overly aggressive subtraction can reduce reliability when many proxy scores are truncated or when $\widehat V$ is noisy. Under misspecified variance models, DCA degrades smoothly: exact conditional-variance recovery is a sufficient route to asymptotic exactness, but the finite-sample object that matters for FDR is the induced proxy/oracle threshold-label agreement near $c$.

\begin{figure}[t]
    \centering
    \includegraphics[width=\textwidth]{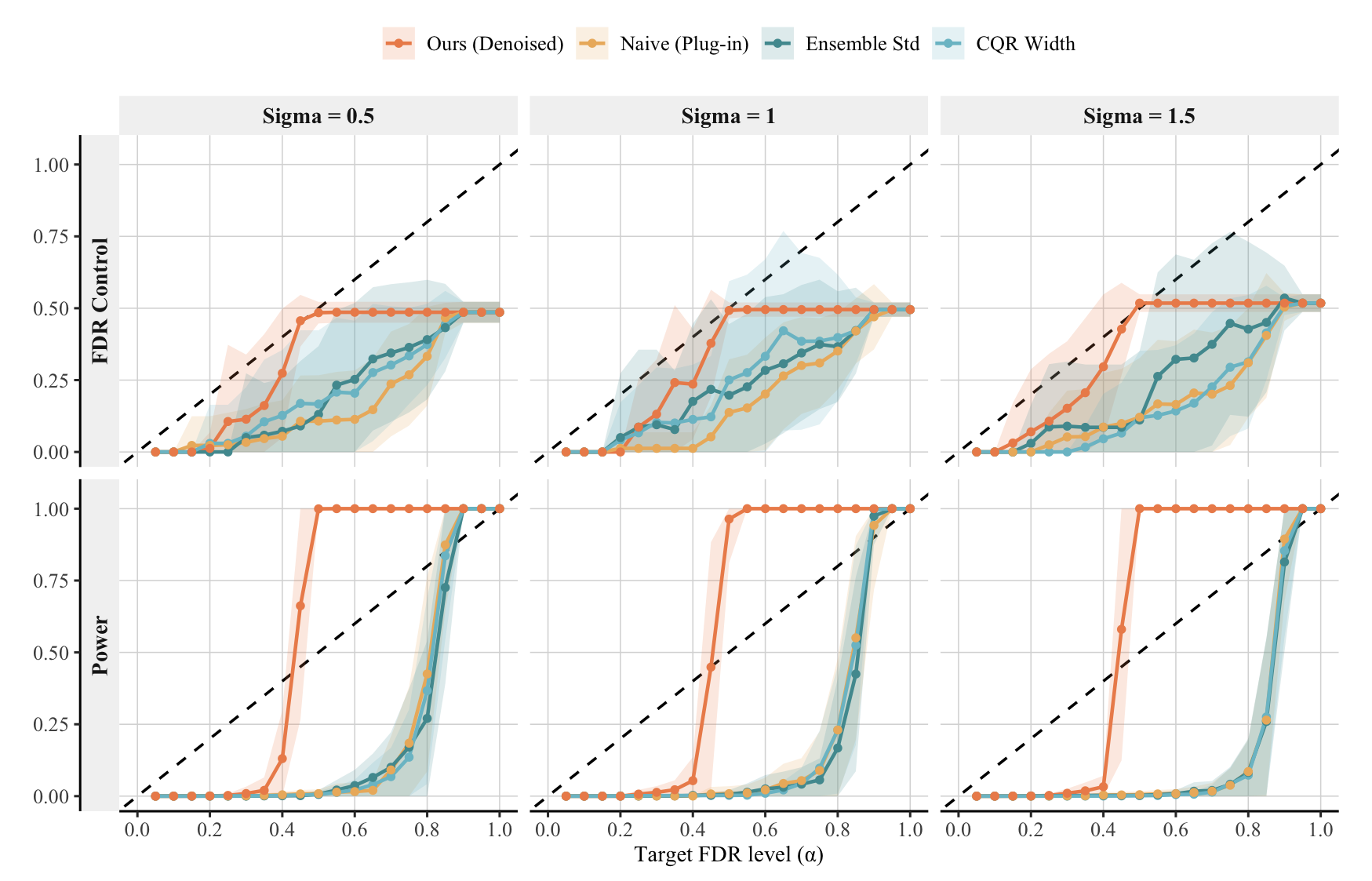}
    \caption{\textbf{Covariate shift with weighted conformal calibration.}
    Under $P_{\mathrm{src}}(X)\neq P_{\mathrm{tgt}}(X)$ with invariant conditional causal mechanism, importance-weighted conformal $p$-values provide a drop-in calibration modification.
    DCA maintains higher selection yield than proxy and uncertainty baselines while the weighted procedure keeps realized FDR stable in this representative shift experiment.}
    \label{fig:main_setting3_shift}
\end{figure}

\textbf{Covariate shift (weighted $p$-values).} We keep the conditional causal mechanism fixed but shift the candidate covariate distribution ($P_{\rm src}(X)\neq P_{\rm tgt}(X)$). Using unlabeled covariates, we compute density ratios and use weighted conformal $p$-values (Eq.~\eqref{eq:weighted_pvalue}). Figure~\ref{fig:main_setting3_shift} shows that the weighted variant remains empirically well-behaved under representative shifts: \textbf{DCA retains its power advantage without sacrificing realized FDR stability}. Full shift sensitivity and the weighted-validity discussion are deferred to Appendix~\ref{app:setting3}.

\textbf{Semi-synthetic and real-data benchmarks.} We evaluate on IHDP and NLSM, where realistic covariate/outcome structures come with simulator-provided ground-truth CATE, and on NSW as a qualitative real-data deployment analysis where true CATE errors are unobserved. Across IHDP and NLSM, \textbf{DCA consistently attains higher selection yield than proxy-based baselines at comparable realized FDR}, suggesting that denoised proxies remain informative beyond fully synthetic designs. On NSW, we report selected-subpopulation and policy-relevant diagnostics rather than oracle FDR/power. Additional comparisons of selection paradigms and downstream policy value are also reported in Appendix~\ref{app:robustness_selection_value}; they show that heuristic Top-$K$ rules can select many units but do not provide the same post-selection error control as conformal/BH procedures.

\section{Discussion and Limitations}
\label{sec:discussion}

Denoised Conformal Alignment (DCA) is a \emph{deployment-first} wrapper: it targets \emph{who to act on} with controlled post-selection risk, rather than uniformly improving CATE estimation. A CATE model can have good average accuracy while still being unreliable on the model-selected tail where actions are taken. DCA addresses this post-selection question by combining denoised causal proxies with conformal calibration and FDR control; variance subtraction prevents DR proxy noise from dominating the ranking in high-noise regimes.

The reliability target is the CATE prediction error $|\hat\tau(X)-\tau(X)|$, not the realized individual treatment effect error. This distinction is necessary because unit-level treatment effects are not point-identifiable under the standard potential-outcomes setup without stronger assumptions. Our validity also relies on stable proxy labeling around the tolerance boundary $c$ and on exchangeability between calibration and candidate pools. Weak nuisance estimation, heavy tails, poor overlap, or overly aggressive denoising can destabilize proxy/oracle threshold agreement, reducing power and affecting finite-sample behavior.

The covariate-shift extension assumes that the conditional causal mechanism remains stable given $X$ and that importance weights are estimable and not too extreme; mechanism shift is outside the current theory. Empirically, semi-synthetic benchmarks provide oracle CATE errors, while purely real data such as NSW can only support qualitative deployment diagnostics. Future work includes more robust proxy construction, sharper overlap and weight conditions, mechanism-shift extensions, principled data reuse, and overlap-aware multi-treatment DCA.


\clearpage

\bibliography{neurips2026}
\bibliographystyle{plainnat}

\appendix

\section{Related Work}
\label{sec:related}

We position our work at the intersection of heterogeneous treatment effect estimation, conformal prediction, and selective inference, bridging the gap between marginal uncertainty quantification and reliable post-selection decision-making.

\textbf{Heterogeneous Treatment Effect Estimation.}
Estimating ITEs from observational data is central to personalized decision-making. Classical approaches rely on matching or reweighting \citep{rosenbaum1983central}, while modern methods leverage machine learning to handle high-dimensional covariates. Prominent examples include tree-based methods like Causal Forests \citep{wager2018estimation}, meta-learners (T/S/X-learner) \citep{kunzel2019metalearners}, and representation learning frameworks \citep{shalit2017estimating}. While these methods optimize for estimation accuracy (e.g., minimizing PEHE), they typically yield point estimates without finite-sample reliability guarantees. Our framework is model-agnostic and wraps around these estimators to operationalize them for safe selective deployment.

\textbf{Conformal Prediction in Causal Inference.}
Conformal prediction (CP) \citep{vovk2005algorithmic} provides distribution-free uncertainty quantification. In the causal domain, recent works have extended CP to construct valid counterfactual intervals \citep{lei2021conformal} and robust predictive inference under misspecification \citep{chernozhukov2021exact}. However, these guarantees are \emph{marginal}: they ensure validity on average over the population but do not guarantee reliability conditional on the selection event. As shown by \citet{geifman2017selective}, selection bias can invalidate marginal guarantees, leading to uncontrolled risk in the subset of individuals chosen for intervention. Our work addresses this limitation by shifting the target from marginal coverage to \emph{post-selection} FDR control.

\textbf{Selective Inference and FDR Control.}
Selective prediction allows models to abstain when uncertainty is high, a concept foundational to reliable machine learning \citep{el2010foundations}. In the context of multiple hypothesis testing, controlling the False Discovery Rate (FDR) is the standard for selecting promising candidates \citep{benjamini1995controlling}. Recent advances have unified CP with FDR control: \citet{candes2023conformalized} introduced conformalized selection for outlier detection, and \citet{jin2023selection} and \citet{gui2024conformal} developed the Conformal Alignment framework.
Related work has further studied data-efficient conformal selection through score optimization, full-data-efficient multiple testing, interactive selection, and derandomized e-value aggregation \citep{bai2024optimized, huo2025unified, gui2025acs, bashari2023derandomized}.
Under covariate shift, \citet{jin2023modelfree} develop weighted conformal $p$-values for selective inference; we use this line of work to motivate our weighted extension, while distinguishing our standard FDR target from mFDR-type threshold feasibility notions \citep{sun2007oracle}.
Crucially, these methods assume ground-truth error labels are observable for calibration. A fundamental barrier in causal inference is that the relevant CATE prediction error depends on unobservable counterfactuals. We resolve this by introducing a denoised proxy mechanism that recovers the ranking signal from noisy doubly robust surrogates, extending conformal selection theory to the counterfactual domain.

\section{Additional Proofs for Section~\ref{sec:theory}}
\label{app:proofs}

\subsection{Notation, sigma-fields, and the conformal selection setup}
\label{app:proofs:setup}

\textbf{Indices and sample splitting.}
Write $n:=|\mathcal D_{\mathrm{cal}}|$ for the calibration size and $m$ for the number of test candidates.
For notational simplicity, index calibration units by $i\in[n]:=\{1,\ldots,n\}$ and test units by $n+j$ with $j\in[m]$.
The (random) data used to fit the base CATE predictor $\hat\tau$ and the alignment score $g$ are collected in an external training split
and are assumed \emph{independent} of the calibration/test candidates.
Let $\mathcal F_{\mathrm{tr}}$ denote the sigma-field generated by this training split and the fitted objects $(\hat\tau,g)$.

\textbf{Truth and reliability.}
For each candidate unit $i$, define the (unobserved) CATE error
\[
A_i \;:=\; \big|\hat\tau(X_i)-\tau(X_i)\big|,
\qquad \tau(x):=\mathbb E[Y(1)-Y(0)\mid X=x].
\]
Fix a tolerance $c>0$. A unit is \emph{good} (reliable) if $A_i<c$ and \emph{bad} (unreliable) if $A_i\ge c$.
For each test index $j\in[m]$, define the null indicator
\[
H_j \;:=\; \mathbbm 1\{A_{n+j}\ge c\}.
\]
In the deployment-first interpretation, a \emph{false discovery} is the selection of a bad unit ($H_j=1$).

\textbf{Score and p-values.}
Let $\hat A_i:=g(X_i)$ be the learned alignment score; in this paper we use the convention that
\emph{smaller} $\hat A_i$ indicates \emph{more} reliable (smaller)  CATE prediction error.
(Everything below is invariant to monotone reparameterizations; if one prefers ``larger is better'',
replace $g$ by $-g$ and reverse inequalities.)

We adopt the denominator $n+1$ (as in \citet{jin2023selection} and \citet{gui2024conformal}), and we assume almost surely no ties:
\begin{equation}
\label{eq:no_ties_assump}
\mathbb P\big(\hat A_i=\hat A_{n+j}\big)=0,\qquad \forall i\in[n],\ \forall j\in[m].
\end{equation}
Under \eqref{eq:no_ties_assump}, using ``$\le$'' vs ``$<$'' in the indicator is immaterial; for definiteness,
the displayed proofs below use strict comparisons. The randomized tie-safe version, which removes
\eqref{eq:no_ties_assump}, is stated after the p-value definitions.

\textbf{Oracle and proxy p-values (global denominator).}
In the \emph{oracle} setting where the calibration errors $A_i$ are observable, define
\begin{equation}
\label{eq:oracle_pvals_app}
p_j^\ast
\;:=\;
\frac{1+\sum_{i=1}^n \mathbbm 1\{A_i\ge c,\ \hat A_i < \hat A_{n+j}\}}
{n+1},
\qquad j\in[m].
\end{equation}
In the causal setting where $A_i$ is unobserved, we form a denoised proxy $\check A_i$ on the calibration set
and define the proxy p-values
\begin{equation}
\label{eq:proxy_pvals_app}
\tilde p_j
\;:=\;
\frac{1+\sum_{i=1}^n \mathbbm 1\{\check A_i\ge c,\ \hat A_i < \hat A_{n+j}\}}
{n+1},
\qquad j\in[m].
\end{equation}

\paragraph{Randomized tie-breaking (tie-safe implementation).}
\label{app:randomized_ties}
The no-ties assumption in \eqref{eq:no_ties_assump} is used only to keep the main proof notation clean.
It is not a substantive restriction on the method. When truncation at zero or a discrete learner creates ties,
we attach i.i.d.\ auxiliary variables
\[
U_1,\ldots,U_{n+m}\stackrel{\mathrm{i.i.d.}}{\sim}\mathrm{Unif}(0,1),
\]
independent of all data and training randomness, and compare scores lexicographically:
\[
(a,u)\prec_{\mathrm{lex}}(b,v)
\quad\Longleftrightarrow\quad
a<b\ \text{or}\ (a=b\ \text{and}\ u<v).
\]
Write $\preceq_{\mathrm{lex}}$ for the associated non-strict order.
Thus every comparison $\hat A_i<\hat A_{n+j}$ in \eqref{eq:oracle_pvals_app}--\eqref{eq:proxy_pvals_app}
is replaced by
\[
(\hat A_i,U_i)\prec_{\mathrm{lex}}(\hat A_{n+j},U_{n+j}).
\]
The corresponding tie-safe oracle and proxy p-values are
\begin{align}
\label{eq:oracle_pvals_rand_app}
p_{j,\mathrm{rand}}^\ast
&:=
\frac{1+\sum_{i=1}^n \mathbbm 1\{A_i\ge c,\,
(\hat A_i,U_i)\prec_{\mathrm{lex}}(\hat A_{n+j},U_{n+j})\}}
{n+1},\\
\label{eq:proxy_pvals_rand_app}
\tilde p_{j,\mathrm{rand}}
&:=
\frac{1+\sum_{i=1}^n \mathbbm 1\{\check A_i\ge c,\,
(\hat A_i,U_i)\prec_{\mathrm{lex}}(\hat A_{n+j},U_{n+j})\}}
{n+1}.
\end{align}
Under \eqref{eq:no_ties_assump}, these randomized p-values coincide almost surely with
\eqref{eq:oracle_pvals_app}--\eqref{eq:proxy_pvals_app}. With ties, they are the exact-rank version:
deterministically counting all tied calibration scores using ``$\le$'' is conservative, while randomizing within tied blocks
restores the uniform-rank argument. Lemma~\ref{lem:randomized_tie_validity} formalizes this claim after the auxiliary-score representation below.
The experimental tie-handling check in Appendix~\ref{app:robustness_ties}
(Figure~\ref{fig:tie_breaking}) reports that the deterministic, jittered, and smoothed variants give nearly identical curves in our simulations.

\textbf{BH selection, FDP, and FDR.}
Given p-values $q_1,\ldots,q_m\in[0,1]$, the (standard) BH procedure at level $\alpha$ is the step-up rule:
let $q_{(1)}\le\cdots\le q_{(m)}$ be the order statistics and set
\[
\hat k \;:=\; \max\Big\{k\in[m]: q_{(k)}\le \alpha k/m\Big\},
\qquad \hat k:=0\ \text{if the set is empty},
\]
and reject (select) $\mathcal S=\{j: q_j\le \alpha \hat k/m\}$.
Write $\mathrm{BH}(q_{1:m};\alpha)$ for this rejection set.
For any (random) selection set $\mathcal S\subseteq[m]$, define
\[
\mathrm{FDP}(\mathcal S)
\;:=\;
\frac{\sum_{j=1}^m H_j\mathbbm 1\{j\in\mathcal S\}}{|\mathcal S|\vee 1},
\qquad
\mathrm{FDR}(\mathcal S)
\;:=\;
\mathbb E[\mathrm{FDP}(\mathcal S)].
\]
(Our power definition in the main text will be treated in Appendix~\ref{app:proofs:power} after establishing cutoff consistency.)

\textbf{Exchangeability.}
We will use the following conditional exchangeability condition, which is implied by i.i.d.\ calibration/test candidates
conditional on $\mathcal F_{\mathrm{tr}}$ (Assumption~\ref{assum:exchangeability} in the main text).

\begin{assumption}[Conditional exchangeability for conformal selection]
\label{assum:cond_exch_app}
Conditional on $\mathcal F_{\mathrm{tr}}$, the candidate pairs $\{(X_i,A_i)\}_{i=1}^{n+m}$ are i.i.d.
Equivalently, for each $j\in[m]$, the multiset $\{(X_i,A_i)\}_{i=1}^{n}\cup\{(X_{n+j},A_{n+j})\}$ is exchangeable
conditional on $\{X_{n+\ell}\}_{\ell\neq j}$ and $\mathcal F_{\mathrm{tr}}$.
\end{assumption}

\subsection{A key representation: oracle p-values are conformal p-values for an auxiliary score}
\label{app:proofs:aux_score}

The denominator choice $n+1$ in \eqref{eq:oracle_pvals_app} is not arbitrary: it is the canonical normalization of
\citet{jin2023selection} for \emph{selection by prediction} and is exactly the one used in
\citet{gui2024conformal} (Conformal Alignment).
The analysis becomes transparent after rewriting \eqref{eq:oracle_pvals_app} as a standard conformal p-value
for a carefully constructed auxiliary score.

\textbf{Boundedness (w.l.o.g.).}
Fix $\bar M>0$ such that $\hat A=g(X)\in[0,\bar M]$ almost surely.\footnote{
If $g$ is unbounded, one may apply a strictly increasing squashing map (e.g.\ sigmoid) to $g$
without changing rankings and then rescale to $[0,\bar M]$. All proofs remain unchanged since only order comparisons are used.
}
Define the auxiliary map
\begin{equation}
\label{eq:V_def_app}
V(x,a)
\;:=\;
2\bar M\cdot \mathbbm 1\{a<c\} \;+\; g(x),
\qquad
\hat V(x)
\;:=\;
V(x,c)=g(x).
\end{equation}
Note that $V(x,a)$ is \emph{monotone non-increasing} in $a$: if $a_1\ge a_2$, then $V(x,a_1)\le V(x,a_2)$.

For each unit $i$, write $V_i:=V(X_i,A_i)$ and $\hat V_i:=\hat V(X_i)=g(X_i)=\hat A_i$.

\begin{lemma}[Oracle p-values as conformal p-values]
\label{lem:pval_as_conformal}
Under \eqref{eq:no_ties_assump}, for every $j\in[m]$,
\begin{equation}
\label{eq:pval_conformal_form}
p_j^\ast
\;=\;
\frac{1+\sum_{i=1}^n \mathbbm 1\{V_i<\hat V_{n+j}\}}{n+1}.
\end{equation}
Moreover, on the null event $\{A_{n+j}\ge c\}$ we have $V_{n+j}\le \hat V_{n+j}$ and in fact $V_{n+j}=\hat V_{n+j}$
for the particular choice \eqref{eq:V_def_app}.
\end{lemma}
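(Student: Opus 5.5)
The proof is a direct case analysis on the indicator $\mathbbm 1\{A_i<c\}$ inside the auxiliary score $V_i=2\bar M\,\mathbbm 1\{A_i<c\}+g(X_i)$. We show that, for each calibration unit $i$, the event $\{V_i<\hat V_{n+j}\}$ coincides with the event $\{A_i\ge c,\ \hat A_i<\hat A_{n+j}\}$ appearing in \eqref{eq:oracle_pvals_app}. Summing over $i\in[n]$ then gives \eqref{eq:pval_conformal_form} term by term, with the common numerator offset $1$ and the common denominator $n+1$.

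\textbf{Case $A_i\ge c$.} The indicator vanishes, so $V_i=g(X_i)=\hat A_i$. Since $\hat V_{n+j}=g(X_{n+j})=\hat A_{n+j}$ by \eqref{eq:V_def_app}, the event $\{V_i<\hat V_{n+j}\}$ is exactly $\{\hat A_i<\hat A_{n+j}\}$. Hence it equals the oracle event $\{A_i\ge c,\ \hat A_i<\hat A_{n+j}\}$.

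\textbf{Case $A_i<c$.} Here $V_i=2\bar M+\hat A_i\ge 2\bar M$. By the boundedness convention, $\hat V_{n+j}=\hat A_{n+j}\le\bar M<2\bar M$; this requires $\bar M>0$, which we have assumed. Therefore $\mathbbm 1\{V_i<\hat V_{n+j}\}=0$, matching the oracle indicator, which is also $0$ because $A_i<c$.

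The two cases are exhaustive, so the summands agree for every $i\in[n]$ and the identity \eqref{eq:pval_conformal_form} follows. The no-ties condition \eqref{eq:no_ties_assump} is used only so that the strict inequality in \eqref{eq:oracle_pvals_app} can stand in for the "$\le$" of the main-text definition \eqref{eq:oracle_pvalues}. The identity itself is pointwise and needs no probabilistic input. The only delicate point is the margin in the second case: the offset $2\bar M$ must strictly dominate the range of $g$. This is exactly why the offset is chosen as $2\bar M$ rather than $\bar M$; with offset $\bar M$, equality could occur when $\hat A_i=0$ and $\hat A_{n+j}=\bar M$.

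For the null-event claim, suppose $A_{n+j}\ge c$. Then $\mathbbm 1\{A_{n+j}<c\}=0$, so $V_{n+j}=g(X_{n+j})=\hat V_{n+j}$, and in particular $V_{n+j}\le\hat V_{n+j}$. The same inequality also follows from the monotonicity remark: $V(x,\cdot)$ is non-increasing and $\hat V(x)=V(x,c)$, so $a\ge c$ implies $V(x,a)\le V(x,c)$. This inequality is what later allows $p_j^\ast$ to be dominated by the exact conformal p-value built from $V_{n+j}$, which is super-uniform under Assumption~\ref{assum:cond_exch_app}.
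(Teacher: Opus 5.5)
Your proof is correct and follows the paper's argument essentially verbatim. Like the paper, you do a two-case analysis on $\mathbbm 1\{A_i<c\}$, use $V_i\ge 2\bar M>\bar M\ge \hat V_{n+j}$ for good calibration units, and use $\mathbbm 1\{A_{n+j}<c\}=0$ to get $V_{n+j}=\hat V_{n+j}$ on the null. One small quibble with your aside: for the strict comparison in \eqref{eq:oracle_pvals_app}, an offset of $\bar M$ would already suffice, since $V_i\ge\bar M\ge\hat V_{n+j}$ forces $\mathbbm 1\{V_i<\hat V_{n+j}\}=0$; the strict margin $2\bar M$ only matters if one uses the non-strict ``$\le$'' comparison.
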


\begin{proof}
Fix $j\in[m]$.
By construction, for any calibration index $i\in[n]$:
\[
V_i<\hat V_{n+j}
\iff
2\bar M\cdot \mathbbm 1\{A_i<c\}+g(X_i) < g(X_{n+j}).
\]
Since $\hat V_{n+j}=g(X_{n+j})\in[0,\bar M]$, if $A_i<c$ then $V_i\ge 2\bar M> \bar M\ge \hat V_{n+j}$, so $V_i<\hat V_{n+j}$ is false.
Thus the event $V_i<\hat V_{n+j}$ can hold only when $A_i\ge c$, in which case $V_i=g(X_i)=\hat A_i$.
Therefore
\[
\mathbbm 1\{V_i<\hat V_{n+j}\}
=
\mathbbm 1\{A_i\ge c,\ \hat A_i<\hat A_{n+j}\},
\]
and substituting into \eqref{eq:oracle_pvals_app} yields \eqref{eq:pval_conformal_form}.

Finally, on $\{A_{n+j}\ge c\}$, the monotonicity in $a$ gives $V(X_{n+j},A_{n+j})\le V(X_{n+j},c)=\hat V_{n+j}$.
For \eqref{eq:V_def_app}, if $A_{n+j}\ge c$ then $\mathbbm 1\{A_{n+j}<c\}=0$, hence $V_{n+j}=g(X_{n+j})=\hat V_{n+j}$.
\end{proof}

\textbf{A notion of validity.}
Because \eqref{eq:oracle_pvals_app} uses a \emph{global} denominator $n+1$, $p_j^\ast$ is not (and need not be) a classical p-value
in the conditional sense $\mathbb P(p_j^\ast\le t\mid A_{n+j}\ge c)\le t$.
Instead, the relevant validity notion for BH-FDR in \citet{jin2023selection} is the \emph{joint} (unconditional) bound
\[
\mathbb P\big(p_j^\ast\le t,\ A_{n+j}\ge c\big)\le t,\qquad \forall t\in[0,1],
\]
which follows from the conformal representation above and monotonicity; see Lemma~\ref{lem:joint_validity_oracle} below.
This is the correct mathematical framework behind Lemma~\ref{lemma:oracle} in the main text and matches
the proof template in Appendix~B.3 of \citet{jin2023selection}.

\begin{lemma}[Joint validity of oracle p-values]
\label{lem:joint_validity_oracle}
Assume Assumption~\ref{assum:cond_exch_app} and \eqref{eq:no_ties_assump}.
Then for every $j\in[m]$ and every $t\in[0,1]$,
\begin{equation}
\label{eq:joint_validity_oracle}
\mathbb P\big(p_j^\ast\le t,\ A_{n+j}\ge c \,\big|\, \mathcal F_{\mathrm{tr}}\big)
\;\le\; t
\qquad \text{a.s.}
\end{equation}
\end{lemma}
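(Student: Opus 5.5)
We work on the null event and reduce the claim to a standard conformal rank bound for the auxiliary score $V$ of Lemma~\ref{lem:pval_as_conformal}. Fix $j\in[m]$ and condition throughout on $\mathcal F_{\mathrm{tr}}$ (and on $\{X_{n+\ell}\}_{\ell\neq j}$, which do not enter $p_j^\ast$). By Lemma~\ref{lem:pval_as_conformal}, on $\{A_{n+j}\ge c\}$ we have $\hat V_{n+j}=V_{n+j}$. Consequently
\[
\{p_j^\ast\le t,\ A_{n+j}\ge c\}\subseteq \Big\{\tfrac{1+\sum_{i=1}^n \mathbbm 1\{V_i<V_{n+j}\}}{n+1}\le t\Big\}=:E_j(t).
\]
The set $E_j(t)$ is defined purely in terms of the exchangeable scores $V_1,\ldots,V_n,V_{n+j}$. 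It therefore suffices to show $\mathbb P(E_j(t)\mid\mathcal F_{\mathrm{tr}})\le t$.

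\textbf{Step 1: exchangeability of the scores.} Conditional on $\mathcal F_{\mathrm{tr}}$, the map $(x,a)\mapsto V(x,a)=2\bar M\mathbbm 1\{a<c\}+g(x)$ is fixed, because $g$ is measurable with respect to $\mathcal F_{\mathrm{tr}}$. Assumption~\ref{assum:cond_exch_app} makes $\{(X_i,A_i)\}_{i\in[n]}\cup\{(X_{n+j},A_{n+j})\}$ exchangeable, and hence so are $V_1,\ldots,V_n,V_{n+j}$.

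\textbf{Step 2: rank bound.} Let $R:=1+\sum_{i=1}^n\mathbbm 1\{V_i<V_{n+j}\}$, which is the (lower) rank of $V_{n+j}$ among the $n+1$ scores. The event $E_j(t)$ is $\{R\le \lfloor t(n+1)\rfloor\}$. If the scores had no ties, exchangeability would make $R$ uniform on $\{1,\ldots,n+1\}$, giving $\mathbb P(R\le k)=k/(n+1)\le t$.

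\textbf{Step 3: handling ties (the main obstacle).} The no-ties condition \eqref{eq:no_ties_assump} is stated for the $\hat A$'s, not for the $V$'s. Ties in $V$ are still possible only when two units have equal $g$-values, and that cannot happen with positive probability under \eqref{eq:no_ties_assump}. Two scores $V_i$ and $V_{n+j}$ could also coincide by differing in the indicator while matching overall, but this is impossible: the indicator contributes $2\bar M$, and $g\in[0,\bar M]$ cannot offset it. So the $V$'s are a.s. distinct.

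If I prefer not to rely on this, I will use the robust argument instead. The strict-inequality rank $R$ is stochastically no smaller than a uniform rank obtained by breaking ties at random. Concretely, $\sum_{k}\mathbbm 1\{R_k\le r\}\le r$ holds for the strict lower ranks of any $n+1$ numbers. Averaging this over exchangeable positions gives $\mathbb P(R\le r)\le r/(n+1)$ without any no-ties assumption. This is the step whose details I would check most carefully, including for the tie-safe variant that uses $(\hat A,U)$ lexicographically.

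Combining Steps 1--3 gives $\mathbb P(p_j^\ast\le t,\ A_{n+j}\ge c\mid\mathcal F_{\mathrm{tr}})\le \mathbb P(R\le \lfloor t(n+1)\rfloor\mid \mathcal F_{\mathrm{tr}})\le \lfloor t(n+1)\rfloor/(n+1)\le t$ almost surely, which is \eqref{eq:joint_validity_oracle}.
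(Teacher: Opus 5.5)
Your main argument is correct and is essentially the paper's proof: on the null event $\hat V_{n+j}=V_{n+j}$, so the event is contained in $\{R\le \lfloor t(n+1)\rfloor\}$, where $R$ is the rank of $V_{n+j}$ among $n+1$ exchangeable, almost surely distinct auxiliary scores and is therefore uniform (the paper expresses the same inclusion through the fully-oracle $p$-value $p_j^{\circ}$ and monotonicity in $v$). Discard the fallback in Step~3, however: for strict lower ranks the count $\#\{k: R_k\le r\}$ can exceed $r$ (if all $n+1$ values tie, every $R_k=1$), and the strict rank is stochastically \emph{smaller}, not larger, than the randomly tie-broken rank, so the strict-inequality $p$-value is anti-conservative under ties and \eqref{eq:no_ties_assump} (or the randomized ranks of Lemma~\ref{lem:randomized_tie_validity}) is genuinely needed; your counting bound holds only for the non-strict rank $\#\{l: v_l\le v_k\}$.
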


\begin{proof}
Fix $j\in[m]$ and condition on $\mathcal F_{\mathrm{tr}}$ throughout.
Define the \emph{fully-oracle} conformal p-value
\begin{equation}
\label{eq:fully_oracle_p}
p_j^{\circ}
\;:=\;
\frac{1+\sum_{i=1}^n \mathbbm 1\{V_i<V_{n+j}\}}{n+1},
\end{equation}
which is generally uncomputable because it uses $V_{n+j}=V(X_{n+j},A_{n+j})$.
By Lemma~\ref{lem:pval_as_conformal}, on the null event $\{A_{n+j}\ge c\}$ we have $V_{n+j}\le \hat V_{n+j}$.
Since the map $v\mapsto 1+\sum_{i=1}^n\mathbbm 1\{V_i<v\}$ is non-decreasing in $v$, it follows that
\[
\{A_{n+j}\ge c\}\ \implies\ p_j^{\circ}\le p_j^\ast.
\]
Hence
\[
\{p_j^\ast\le t,\ A_{n+j}\ge c\}
\subseteq
\{p_j^{\circ}\le t\}.
\]
Therefore it suffices to show $\mathbb P(p_j^{\circ}\le t\mid \mathcal F_{\mathrm{tr}})\le t$.

Under Assumption~\ref{assum:cond_exch_app}, conditional on $\mathcal F_{\mathrm{tr}}$ the $(n+1)$-tuple
$(V_1,\ldots,V_n,V_{n+j})$ is exchangeable (indeed i.i.d.).
Under the no-ties condition \eqref{eq:no_ties_assump} (which implies in particular $\mathbb P(V_i=V_{n+j})=0$),
the rank
\[
R_{n+j}:=1+\sum_{i=1}^n \mathbbm 1\{V_i<V_{n+j}\}
\]
is uniform on $\{1,2,\ldots,n+1\}$.
Thus $p_j^{\circ}=R_{n+j}/(n+1)$ is discrete-uniform on $\{1/(n+1),\ldots,1\}$, which implies
$\mathbb P(p_j^{\circ}\le t\mid\mathcal F_{\mathrm{tr}})\le t$ for all $t\in[0,1]$.
Combining with the inclusion above yields \eqref{eq:joint_validity_oracle}.
\end{proof}

\begin{lemma}[Randomized ranks remove the no-ties restriction]
\label{lem:randomized_tie_validity}
Assume Assumption~\ref{assum:cond_exch_app}, but not \eqref{eq:no_ties_assump}.
Use the randomized p-values \eqref{eq:oracle_pvals_rand_app}--\eqref{eq:proxy_pvals_rand_app}.
Then, for every $j\in[m]$ and $t\in[0,1]$,
\[
\mathbb P\big(p_{j,\mathrm{rand}}^\ast\le t,\ A_{n+j}\ge c \,\big|\, \mathcal F_{\mathrm{tr}}\big)
\le t
\qquad \text{a.s.}
\]
Moreover, the oracle BH guarantee and the proxy perturbation guarantee continue to hold with randomized p-values:
\[
\mathrm{FDR}\!\left(\mathrm{BH}(p_{1:m,\mathrm{rand}}^\ast;\alpha)\right)\le \alpha,
\qquad
\mathrm{FDR}\!\left(\mathrm{BH}(\tilde p_{1:m,\mathrm{rand}};\alpha)\right)
\le \alpha+m\,\mathbb E[\widehat\Delta_{\mathrm{cal}}].
\]
\end{lemma}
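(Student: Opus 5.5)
The plan is to lift the deterministic no-ties argument of Lemma~\ref{lem:joint_validity_oracle} to the randomized setting by absorbing the auxiliary uniforms into the score. Define the augmented score $\hat V_i^{U}:=(g(X_i),U_i)$ and the augmented auxiliary score $V_i^{U}:=(V(X_i,A_i),U_i)$, with $V$ as in \eqref{eq:V_def_app}. Compare both lexicographically. The $U_i$ are i.i.d.\ uniform and independent of everything else, so two distinct indices tie in the lexicographic order with probability zero. Moreover, the pairs $(X_i,A_i,U_i)$ remain i.i.d.\ conditional on $\mathcal F_{\mathrm{tr}}$ by Assumption~\ref{assum:cond_exch_app}.

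\textbf{Joint validity.} I first repeat the proof of Lemma~\ref{lem:pval_as_conformal} with $\prec_{\mathrm{lex}}$ in place of $<$. If $A_i<c$, the first coordinate of $V_i^U$ is at least $2\bar M>\hat A_{n+j}$, so $V_i^U\prec_{\mathrm{lex}}\hat V_{n+j}^U$ fails. If $A_i\ge c$, then $V_i^U=\hat V_i^U$. This gives
\[
p^\ast_{j,\mathrm{rand}}=\frac{1+\sum_{i=1}^n\mathbbm 1\{V_i^U\prec_{\mathrm{lex}}\hat V^U_{n+j}\}}{n+1}.
\]
On the null event $\{A_{n+j}\ge c\}$ we also have $V_{n+j}^U=\hat V^U_{n+j}$. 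Hence, on the null event, $p^\ast_{j,\mathrm{rand}}$ equals the fully-oracle randomized rank $p^{\circ}_{j,\mathrm{rand}}=(1+\sum_i\mathbbm 1\{V_i^U\prec_{\mathrm{lex}}V_{n+j}^U\})/(n+1)$. The tuple $(V_1^U,\dots,V_n^U,V_{n+j}^U)$ is exchangeable and almost surely tie-free under $\prec_{\mathrm{lex}}$, so this rank is uniform on $\{1,\dots,n+1\}$. The bound $\mathbb P(p^\ast_{j,\mathrm{rand}}\le t,\,A_{n+j}\ge c\mid\mathcal F_{\mathrm{tr}})\le t$ then follows exactly as before.

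\textbf{Oracle FDR.} Next I invoke the BH argument behind Lemma~\ref{lemma:oracle}, namely the template of \citet{jin2023selection}, Appendix~B.3. That argument uses joint validity together with a leave-one-out/monotonicity property. The property is that replacing the $j$-th oracle p-value by its fully-oracle version, and the others by versions computed with $V_{n+j}^U$ inserted into the calibration set, gives p-values for which the BH rejection count is determined conditionally on the exchangeable multiset. This step is the main obstacle. I must check that the conditional-calibration / PRDS-type step still holds when the order is lexicographic. I expect it does, because the argument only uses that the comparison is a total order on a tie-free exchangeable sample and that the p-values are monotone in the test score. Both properties are preserved, since $\prec_{\mathrm{lex}}$ is a strict total order almost surely and each count is non-decreasing in $\hat V^U_{n+j}$ under it. I therefore intend to reproduce the argument verbatim with $<$ replaced by $\prec_{\mathrm{lex}}$, conditioning additionally on the $U$'s through the same exchangeability.

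\textbf{Proxy FDR.} For the proxy bound, the randomized proxy and oracle p-values share the same comparison indicator and differ only in the labels $\mathbbm 1\{\check A_i\ge c\}$ versus $\mathbbm 1\{A_i\ge c\}$. Thus $|\tilde p_{j,\mathrm{rand}}-p^\ast_{j,\mathrm{rand}}|\le n^{-1}\sum_i|\mathbbm 1\{\check A_i\ge c\}-\mathbbm 1\{A_i\ge c\}|=\widehat\Delta_{\mathrm{cal}}$, which is the deterministic part of Proposition~\ref{prop:finite_sample_approx_fdr}. The argument of that proposition is pathwise in the p-values, so rerunning it with the randomized oracle guarantee just proved yields $\mathrm{FDR}\le\alpha+m\,\mathbb E[\widehat\Delta_{\mathrm{cal}}]$.
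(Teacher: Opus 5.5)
Your proposal follows the paper's proof essentially step for step: absorb the auxiliary uniforms into lexicographically compared scores, get a uniform rank for the randomized fully-oracle p-value and compare it with $p^\ast_{j,\mathrm{rand}}$ on the null event (your equality is the paper's $\preceq_{\mathrm{lex}}$ inclusion specialized to this $V$), rerun the leave-one-out BH argument verbatim with $\prec_{\mathrm{lex}}$, and carry over the $\widehat\Delta_{\mathrm{cal}}$ perturbation bound. Two small precisions: in the multiset step you should condition on the unordered multiset of pairs $(V_i,U_i)$ (and on the other test points), not on the labeled $U_i$'s themselves, because when $V$ has atoms the rank is not uniform given the $U_i$'s; and the proxy FDR bound comes from rerunning the leave-one-out argument with slack $\widehat\Delta_{\mathrm{cal}}$ in the super-uniformity step, not from a purely pathwise comparison of BH outputs, which BH's discontinuity in the p-values would not allow.
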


\begin{proof}
Define the randomized fully-oracle p-value
\[
p_{j,\mathrm{rand}}^\circ
:=
\frac{1+\sum_{i=1}^n
\mathbbm 1\{(V_i,U_i)\prec_{\mathrm{lex}}(V_{n+j},U_{n+j})\}}
{n+1}.
\]
Conditional on $\mathcal F_{\mathrm{tr}}$, the augmented pairs
$\{(V_i,U_i)\}_{i=1}^{n}\cup\{(V_{n+j},U_{n+j})\}$ are exchangeable and are almost surely distinct because of the independent continuous uniforms.
Hence the randomized rank of $(V_{n+j},U_{n+j})$ is uniform on $\{1,\ldots,n+1\}$, which implies
\[
\mathbb P(p_{j,\mathrm{rand}}^\circ\le t\mid \mathcal F_{\mathrm{tr}})\le t.
\]
On the null event $\{A_{n+j}\ge c\}$, Lemma~\ref{lem:pval_as_conformal}'s monotonicity argument gives
$V_{n+j}\le \hat V_{n+j}$, and the same auxiliary uniform $U_{n+j}$ is used on both sides. Therefore
\[
(V_{n+j},U_{n+j})\preceq_{\mathrm{lex}}(\hat V_{n+j},U_{n+j}),
\]
so
\[
\{p_{j,\mathrm{rand}}^\ast\le t,\ A_{n+j}\ge c\}
\subseteq
\{p_{j,\mathrm{rand}}^\circ\le t\}.
\]
This proves the joint validity inequality.

The proof of Lemma~\ref{lemma:oracle} uses only exchangeability of the conformal ranks, monotonicity of the p-values in the score,
and the fact that the score order is total. Lexicographic ordering supplies such a total order with no ties almost surely, so the
leave-one-out argument applies verbatim after replacing scalar comparisons by $\prec_{\mathrm{lex}}$.
Finally, for every $j$, replacing the oracle labels $\mathbbm 1\{A_i\ge c\}$ by proxy labels
$\mathbbm 1\{\check A_i\ge c\}$ can change at most one summand per mislabeled calibration point, exactly as in
Proposition~\ref{prop:finite_sample_approx_fdr}; hence
$\sup_j|\tilde p_{j,\mathrm{rand}}-p_{j,\mathrm{rand}}^\ast|\le \widehat\Delta_{\mathrm{cal}}$
and the same perturbation bound follows.
\end{proof}

\subsection{Proof of Lemma~\ref{lemma:oracle}: finite-sample oracle FDR control}
\label{app:proofs:oracle_fdr}

\begin{proof}
This proof is a careful adaptation of Appendix~B.3 of \citet{jin2023selection} (Theorem~2.6 therein) to our notation.
We reproduce the argument in full detail for completeness.

\textbf{Step 0: BH as a functional of p-values.}
For any vector $a=(a_1,\ldots,a_m)\in[0,1]^m$, write
\[
\mathcal R(a)\subseteq[m]
\]
for the BH rejection set at level $\alpha$ computed from $a$.
Let $\mathcal R:=\mathcal R(p^\ast)$ denote the oracle rejection set, where $p^\ast=(p_1^\ast,\ldots,p_m^\ast)$.
Write $R_j:=\mathbbm 1\{j\in\mathcal R\}$ and $|\mathcal R|=\sum_{j=1}^m R_j$.

\textbf{Step 1: A leave-one-out comparison construction.}
Fix $j\in[m]$.
Define the fully-oracle p-value $p_j^{\circ}$ as in \eqref{eq:fully_oracle_p}.
For each $\ell\neq j$, define the ``$j$-augmented'' p-values
\begin{equation}
\label{eq:pell_j_aug}
p^{(j)}_{\ell}
\;:=\;
\frac{\sum_{i=1}^n \mathbbm 1\{V_i<\hat V_{n+\ell}\} \;+\; \mathbbm 1\{V_{n+j}<\hat V_{n+\ell}\}}
{n+1}.
\end{equation}
Note that these p-values are \emph{only} for analysis: they depend on $V_{n+j}$ and are generally uncomputable.

Define the modified p-value vector
\[
p^{(j)}
\;:=\;
\big(p^{(j)}_1,\ldots,p^{(j)}_{j-1},p_j^{\circ},p^{(j)}_{j+1},\ldots,p^{(j)}_m\big),
\]
and let
\[
\mathcal R_j^{\ast}:=\mathcal R(p^{(j)})
\]
be the BH rejection set computed from this modified vector.

\textbf{Step 2: A ``replacement invariance'' claim.}
We claim that on the event $\{A_{n+j}\ge c,\ j\in\mathcal R\}$,
\begin{equation}
\label{eq:replacement_invariance}
\mathcal R=\mathcal R_j^{\ast}.
\end{equation}
This claim is the key combinatorial step.

\begin{proof}[Proof of \eqref{eq:replacement_invariance}]
Work on the event $\{A_{n+j}\ge c,\ j\in\mathcal R\}$.
By Lemma~\ref{lem:pval_as_conformal}, $A_{n+j}\ge c$ implies $V_{n+j}\le \hat V_{n+j}$.
Since the mapping $v\mapsto \sum_{i=1}^n \mathbbm 1\{V_i<v\}$ is non-decreasing, we have
\begin{equation}
\label{eq:pj_oracle_le}
p_j^{\circ}
=
\frac{1+\sum_{i=1}^n \mathbbm 1\{V_i<V_{n+j}\}}{n+1}
\;\le\;
\frac{1+\sum_{i=1}^n \mathbbm 1\{V_i<\hat V_{n+j}\}}{n+1}
=
p_j^\ast.
\end{equation}

Now fix any $\ell\neq j$. We compare $p_\ell^\ast$ to $p_\ell^{(j)}$.
By \eqref{eq:oracle_pvals_app} and Lemma~\ref{lem:pval_as_conformal},
\[
p_\ell^\ast=\frac{1+\sum_{i=1}^n \mathbbm 1\{V_i<\hat V_{n+\ell}\}}{n+1}.
\]
By \eqref{eq:pell_j_aug},
\[
p_\ell^{(j)}=\frac{\sum_{i=1}^n \mathbbm 1\{V_i<\hat V_{n+\ell}\}+\mathbbm 1\{V_{n+j}<\hat V_{n+\ell}\}}{n+1}.
\]
Because there are no ties, exactly one of the relations $\hat V_{n+\ell}<\hat V_{n+j}$ or $\hat V_{n+\ell}>\hat V_{n+j}$ holds.

\emph{Case (i): $\hat V_{n+\ell}>\hat V_{n+j}$.}
On $\{A_{n+j}\ge c\}$ we have $V_{n+j}\le \hat V_{n+j}<\hat V_{n+\ell}$, hence $\mathbbm 1\{V_{n+j}<\hat V_{n+\ell}\}=1$.
Therefore
\[
p_\ell^{(j)}
=
\frac{\sum_{i=1}^n \mathbbm 1\{V_i<\hat V_{n+\ell}\}+1}{n+1}
=
p_\ell^\ast.
\]

\emph{Case (ii): $\hat V_{n+\ell}<\hat V_{n+j}$.}
Then, because $j\in\mathcal R$ and BH is a step-up procedure, every p-value no larger than $p_j^\ast$ is also rejected.
Indeed, if $j\in\mathcal R$, then by definition of BH there exists a data-dependent threshold $T_{\mathrm{BH}}=\alpha |\mathcal R|/m$
such that $p_j^\ast\le T_{\mathrm{BH}}$ and $\mathcal R=\{r: p_r^\ast\le T_{\mathrm{BH}}\}$.
Since $p_\ell^\ast$ is non-decreasing in $\hat V_{n+\ell}$ and $\hat V_{n+\ell}<\hat V_{n+j}$, it follows that $p_\ell^\ast\le p_j^\ast\le T_{\mathrm{BH}}$,
so $\ell\in\mathcal R$.

Moreover, in this case $p_\ell^{(j)}\le p_\ell^\ast$ always, since the extra indicator in \eqref{eq:pell_j_aug} is either $0$ or $1$ and
\[
p_\ell^{(j)} \in \Big\{p_\ell^\ast-\tfrac{1}{n+1},\ p_\ell^\ast\Big\}\le p_\ell^\ast\le p_j^\ast.
\]
Thus, after the replacement, every p-value $\le p_j^\ast$ remains $\le p_j^\ast$.

Putting the two cases together:
\begin{itemize}
\item all p-values \emph{strictly larger} than $p_j^\ast$ are unchanged (Case (i));
\item all p-values \emph{no larger} than $p_j^\ast$ (including $p_j^\ast$ itself) remain $\le p_j^\ast$ after replacement (Case (ii) and \eqref{eq:pj_oracle_le}).
\end{itemize}
Because BH is a step-up procedure whose rejection set depends only on how many p-values fall below each candidate threshold,
this implies that the BH rejection set is unchanged by the replacement, proving \eqref{eq:replacement_invariance}.
\end{proof}

\textbf{Step 3: Leave-one-out expansion of the FDR.}
By definition,
\[
\mathrm{FDR}(\mathcal R)
=
\mathbb E\!\left[
\frac{\sum_{j=1}^m H_j R_j}{|\mathcal R|\vee 1}
\right]
=
\sum_{j=1}^m
\mathbb E\!\left[
\frac{H_j R_j}{|\mathcal R|\vee 1}
\right].
\]
Next, expand according to the value of $|\mathcal R|$:
\begin{equation}
\label{eq:fdr_expand_k}
\mathrm{FDR}(\mathcal R)
=
\sum_{j=1}^m\sum_{k=1}^m \frac{1}{k}\,
\mathbb E\!\left[
\mathbbm 1\{|\mathcal R|=k\}\, H_j\,\mathbbm 1\{j\in\mathcal R\}
\right].
\end{equation}

If $|\mathcal R|=k$ and $j\in\mathcal R$, then by BH step-up we necessarily have
\begin{equation}
\label{eq:bh_threshold_implication}
p_j^\ast \le \alpha k/m.
\end{equation}
Therefore
\[
\mathbbm 1\{|\mathcal R|=k\}\,H_j\,\mathbbm 1\{j\in\mathcal R\}
\le
\mathbbm 1\{|\mathcal R|=k\}\,H_j\,\mathbbm 1\{p_j^\ast\le \alpha k/m\}\,\mathbbm 1\{j\in\mathcal R\}.
\]
Substituting into \eqref{eq:fdr_expand_k} gives
\begin{equation}
\label{eq:fdr_expand_k2}
\mathrm{FDR}(\mathcal R)
\le
\sum_{j=1}^m\sum_{k=1}^m \frac{1}{k}\,
\mathbb E\!\left[
\mathbbm 1\{|\mathcal R|=k\}\, H_j\,\mathbbm 1\{p_j^\ast\le \alpha k/m\}\,\mathbbm 1\{j\in\mathcal R\}
\right].
\end{equation}

\textbf{Step 4: Replace $\mathcal R$ by $\mathcal R_j^\ast$ on the null event.}
On the event $\{H_j=1,\ j\in\mathcal R\}$ we have \eqref{eq:replacement_invariance}, hence
$|\mathcal R|=|\mathcal R_j^\ast|$ and $p_j^\ast\ge p_j^{\circ}$ by \eqref{eq:pj_oracle_le}.
Thus the integrand in \eqref{eq:fdr_expand_k2} is bounded by
\[
\mathbbm 1\{|\mathcal R_j^\ast|=k\}\,\mathbbm 1\{p_j^{\circ}\le \alpha k/m\}.
\]
Dropping the factor $H_j\le 1$ yields
\begin{equation}
\label{eq:fdr_after_replace}
\mathrm{FDR}(\mathcal R)
\le
\sum_{j=1}^m\sum_{k=1}^m \frac{1}{k}\,
\mathbb E\!\left[
\mathbbm 1\{|\mathcal R_j^\ast|=k\}\,\mathbbm 1\{p_j^{\circ}\le \alpha k/m\}
\right].
\end{equation}

\textbf{Step 5: Convert the event $\{p_j^{\circ}\le \alpha k/m\}$ into membership in $\mathcal R_j^\ast$.}
On the event $\{|\mathcal R_j^\ast|=k\}$, BH rejects exactly those $\ell$ with $p_\ell^{(j)}\le \alpha k/m$ and the $j$-th p-value is $p_j^\circ$.
Hence
\[
\mathbbm 1\{|\mathcal R_j^\ast|=k\}\,\mathbbm 1\{p_j^\circ\le \alpha k/m\}
=
\mathbbm 1\{|\mathcal R_j^\ast|=k\}\,\mathbbm 1\{p_j^\circ\in \mathcal R_j^\ast\}.
\]
Substitute into \eqref{eq:fdr_after_replace} to obtain
\begin{equation}
\label{eq:fdr_membership}
\mathrm{FDR}(\mathcal R)
\le
\sum_{j=1}^m\sum_{k=1}^m \frac{1}{k}\,
\mathbb E\!\left[
\mathbbm 1\{|\mathcal R_j^\ast|=k\}\,\mathbbm 1\{p_j^{\circ}\in \mathcal R_j^\ast\}
\right].
\end{equation}

\textbf{Step 6: A second step-up invariance and a conditional super-uniform bound.}
If $p_j^\circ\in\mathcal R_j^\ast$, then by step-up monotonicity, setting the $j$-th p-value to $0$ cannot reduce the rejection set.
Define
\[
\mathcal R_{j\to 0}^\ast
\;:=\;
\mathcal R\big(p_1^{(j)},\ldots,p_{j-1}^{(j)},0,p_{j+1}^{(j)},\ldots,p_m^{(j)}\big).
\]
Then on $\{p_j^\circ\in\mathcal R_j^\ast\}$ we have $\mathcal R_j^\ast=\mathcal R_{j\to 0}^\ast$.
Therefore
\[
\mathbbm 1\{|\mathcal R_j^\ast|=k\}\,\mathbbm 1\{p_j^\circ\in\mathcal R_j^\ast\}
\le
\mathbbm 1\{|\mathcal R_{j\to 0}^\ast|=k\}\,\mathbbm 1\{p_j^\circ\in\mathcal R_{j\to 0}^\ast\}.
\]
Summing over $k$ in \eqref{eq:fdr_membership} yields
\begin{equation}
\label{eq:fdr_pre_final}
\mathrm{FDR}(\mathcal R)
\le
\sum_{j=1}^m
\mathbb E\!\left[
\frac{\mathbbm 1\{p_j^\circ\in\mathcal R_{j\to 0}^\ast\}}{|\mathcal R_{j\to 0}^\ast|\vee 1}
\right].
\end{equation}

By BH, the event $\{p_j^\circ\in\mathcal R_{j\to 0}^\ast\}$ is equivalent to
\[
p_j^\circ \le \alpha |\mathcal R_{j\to 0}^\ast|/m.
\]
Thus
\begin{equation}
\label{eq:fdr_pre_final2}
\mathrm{FDR}(\mathcal R)
\le
\sum_{j=1}^m
\mathbb E\!\left[
\frac{\mathbbm 1\{p_j^\circ \le \alpha |\mathcal R_{j\to 0}^\ast|/m\}}{|\mathcal R_{j\to 0}^\ast|\vee 1}
\right].
\end{equation}

\textbf{Step 7: Conditional independence given the unordered multiset and the final bound.}
Fix $j$ and condition on $\mathcal F_{\mathrm{tr}}$.
Let $[V_1,\ldots,V_n,V_{n+j}]$ denote the unordered multiset (equivalently, the sigma-field generated by all symmetric functions of these values).
Observe:
\begin{itemize}
\item $p_j^\circ$ depends on the ordered vector $(V_1,\ldots,V_n,V_{n+j})$ only through the rank of $V_{n+j}$ among them,
hence is measurable with respect to $[V_1,\ldots,V_n,V_{n+j}]$.
\item For $\ell\neq j$, the augmented p-values $p_\ell^{(j)}$ in \eqref{eq:pell_j_aug} depend on $(V_1,\ldots,V_n,V_{n+j})$
only through the unordered multiset $[V_1,\ldots,V_n,V_{n+j}]$ because the defining expression is invariant under permutations of these $(n+1)$ values.
Consequently, $\mathcal R_{j\to 0}^\ast$ and hence $|\mathcal R_{j\to 0}^\ast|$ are measurable with respect to
the sigma-field generated by $\{p_\ell^{(j)}\}_{\ell\neq j}$ and thus measurable with respect to $[V_1,\ldots,V_n,V_{n+j}]$ and $\{\hat V_{n+\ell}\}_{\ell\neq j}$.
\end{itemize}

Moreover, under Assumption~\ref{assum:cond_exch_app}, the multiset $[V_1,\ldots,V_n,V_{n+j}]$ is exchangeable (indeed i.i.d.),
so the rank of $V_{n+j}$ among these $n+1$ values is uniform on $\{1,\ldots,n+1\}$ conditionally on the unordered multiset.
This implies the following conditional super-uniform statement:
for any random variable $T\in[0,1]$ measurable with respect to $[V_1,\ldots,V_n,V_{n+j}]$,
\begin{equation}
\label{eq:cond_superunif_rank}
\mathbb P\!\left(p_j^\circ \le T \,\middle|\, [V_1,\ldots,V_n,V_{n+j}],\ \mathcal F_{\mathrm{tr}}\right) \le T.
\end{equation}
(Indeed, conditional on the multiset, $p_j^\circ$ is discrete-uniform on $\{1/(n+1),\ldots,1\}$.)

Apply \eqref{eq:cond_superunif_rank} with
$T=\alpha |\mathcal R_{j\to 0}^\ast|/m$, which is measurable with respect to $[V_1,\ldots,V_n,V_{n+j}]$ and $\mathcal F_{\mathrm{tr}}$.
Then taking conditional expectations in \eqref{eq:fdr_pre_final2} yields
\[
\mathbb E\!\left[
\frac{\mathbbm 1\{p_j^\circ \le \alpha |\mathcal R_{j\to 0}^\ast|/m\}}{|\mathcal R_{j\to 0}^\ast|\vee 1}
\ \middle|\ [V_1,\ldots,V_n,V_{n+j}],\ \mathcal F_{\mathrm{tr}}
\right]
\le
\frac{\alpha |\mathcal R_{j\to 0}^\ast|/m}{|\mathcal R_{j\to 0}^\ast|\vee 1}
\le
\frac{\alpha}{m}.
\]
Taking expectations and summing over $j\in[m]$ in \eqref{eq:fdr_pre_final2} gives
\[
\mathrm{FDR}(\mathcal R)\le \sum_{j=1}^m \frac{\alpha}{m}=\alpha.
\]
This proves Lemma~\ref{lemma:oracle}.
\end{proof}

\subsection{Proof of Proposition~\ref{prop:finite_sample_approx_fdr}: perturbation by proxy mislabeling}
\label{app:proofs:proxy_perturb}

Recall the proxy p-values $\tilde p_j$ in \eqref{eq:proxy_pvals_app}.
Define the calibration mislabeling rate (as in \eqref{eq:mislabel_rate}):
\[
\widehat\Delta_{\mathrm{cal}}
=
\frac{1}{n}\sum_{i=1}^n
\left|\mathbbm 1\{\check A_i\ge c\}-\mathbbm 1\{A_i\ge c\}\right|.
\]
Let $M_{\mathrm{cal}}:=n\,\widehat\Delta_{\mathrm{cal}}$ denote the number of mislabeled calibration units.

\begin{proof}[Proof of Proposition~\ref{prop:finite_sample_approx_fdr}]
Fix $j\in[m]$. Let
\[
U_j^\ast := \sum_{i=1}^n \mathbbm 1\{A_i\ge c,\ \hat A_i<\hat A_{n+j}\},
\qquad
\tilde U_j := \sum_{i=1}^n \mathbbm 1\{\check A_i\ge c,\ \hat A_i<\hat A_{n+j}\}.
\]
Then $p_j^\ast=(1+U_j^\ast)/(n+1)$ and $\tilde p_j=(1+\tilde U_j)/(n+1)$.
Hence
\begin{equation}
\label{eq:p_diff_basic}
|\tilde p_j-p_j^\ast|
=
\frac{|\,\tilde U_j-U_j^\ast\,|}{n+1}.
\end{equation}
For each $i\in[n]$,
\[
\left|
\mathbbm 1\{\check A_i\ge c,\ \hat A_i<\hat A_{n+j}\}
-
\mathbbm 1\{A_i\ge c,\ \hat A_i<\hat A_{n+j}\}
\right|
\le
\left|\mathbbm 1\{\check A_i\ge c\}-\mathbbm 1\{A_i\ge c\}\right|.
\]
Summing over $i$ yields
\[
|\,\tilde U_j-U_j^\ast\,|
\le
\sum_{i=1}^n
\left|\mathbbm 1\{\check A_i\ge c\}-\mathbbm 1\{A_i\ge c\}\right|
=
M_{\mathrm{cal}}.
\]
Plug into \eqref{eq:p_diff_basic}:
\[
|\tilde p_j-p_j^\ast|
\le
\frac{M_{\mathrm{cal}}}{n+1}
=
\frac{n}{n+1}\widehat\Delta_{\mathrm{cal}}
\le
\widehat\Delta_{\mathrm{cal}}.
\]
This proves the first claim of Proposition~\ref{prop:finite_sample_approx_fdr}.

\medskip
\noindent
\textbf{A correct ``approximate validity'' inequality (joint form).}
By Lemma~\ref{lem:joint_validity_oracle} and the bound above, for any $u\in[0,1]$ and any $j$,
\[
\{ \tilde p_j\le u,\ A_{n+j}\ge c\}
\subseteq
\{ p_j^\ast \le u+\widehat\Delta_{\mathrm{cal}},\ A_{n+j}\ge c\},
\]
hence
\[
\mathbb P\big(\tilde p_j\le u,\ A_{n+j}\ge c \,\big|\, \mathcal F_{\mathrm{tr}}\big)
\le
\mathbb P\big(p_j^\ast\le u+\widehat\Delta_{\mathrm{cal}},\ A_{n+j}\ge c \,\big|\, \mathcal F_{\mathrm{tr}}\big)
\le
u+\mathbb E[\widehat\Delta_{\mathrm{cal}}\mid \mathcal F_{\mathrm{tr}}],
\]
where the last step uses \eqref{eq:joint_validity_oracle} and the tower property.

\medskip
\noindent
\textbf{Approximate FDR control.}
Repeat the proof of Lemma~\ref{lemma:oracle} with $p_j^\ast$ replaced by $\tilde p_j$, and note that the only step
that uses exact conditional super-uniformity is \eqref{eq:cond_superunif_rank}. Under the perturbation bound
$|\tilde p_j-p_j^\ast|\le \widehat\Delta_{\mathrm{cal}}$, this step becomes
\[
\mathbb P\!\left(\tilde p_j \le T \,\middle|\, [V_1,\ldots,V_n,V_{n+j}],\ \mathcal F_{\mathrm{tr}}\right)
\le
\mathbb P\!\left(p_j^\circ \le T+\widehat\Delta_{\mathrm{cal}} \,\middle|\, [V_1,\ldots,V_n,V_{n+j}],\ \mathcal F_{\mathrm{tr}}\right)
\le
T+\widehat\Delta_{\mathrm{cal}}.
\]
Consequently, the end of the argument yields
\[
\mathrm{FDR}(\mathcal S)
\le
\alpha + m\cdot \mathbb E[\widehat\Delta_{\mathrm{cal}}],
\]
which is the stated perturbation form.
\end{proof}

\subsection{Boundary stability under variance misspecification}
\label{app:boundary_stability}

Assumptions~\ref{assum:nuisance}--\ref{assum:continuity} should be read as sufficient conditions for boundary stability, not as requiring pointwise-perfect variance estimation.
By Proposition~\ref{prop:finite_sample_approx_fdr}, validity loss is governed by the calibration mislabeling rate $\widehat\Delta_{\mathrm{cal}}$, so variance misspecification affects FDR only through threshold-label flips between $\mathbbm 1\{\check A_i\ge c\}$ and $\mathbbm 1\{A_i\ge c\}$.
The primitive object is therefore the proxy/oracle label agreement on calibration, not exact recovery of $\Var(\phi\mid X)$.

To make this precise, write
\[
R_i(\rho):=\check A_i^2(\rho)-A_i^2
\]
for the squared-scale perturbation induced by denoising, proxy noise, and variance-estimation error.
Since $A_i,\check A_i(\rho)\ge 0$, a label flip can only occur when the oracle error lies within the perturbation radius of the tolerance boundary:
\[
\left\{
\mathbbm 1\{\check A_i(\rho)\ge c\}\neq \mathbbm 1\{A_i\ge c\}
\right\}
\subseteq
\left\{
|A_i^2-c^2|\le |R_i(\rho)|
\right\}.
\]
Thus, under a standard margin condition around the boundary,
\[
    \mathbb P\big(|A_i^2-c^2|\le t\big)\le C t^\gamma,\qquad t>0,
\]

small perturbations imply small boundary mislabeling even when $\widehat V$ is imperfect.
Variance consistency should thus be viewed as one convenient sufficient route to reducing
proxy perturbations, rather than as the primitive requirement itself. The fundamental
requirement for FDR control is that the denoised proxy preserve the reliable/unreliable
label except on a vanishing fraction of calibration points near $c$.

\begin{lemma}[Margin condition implies proxy boundary stability]
Suppose that for some constants $C>0$ and $\gamma>0$,
\[
    \mathbb P(|A_i^2-c^2|\le t)\le Ct^\gamma,\qquad t>0,
\]
and that $R_i(\rho)\to0$ in probability. Then
\[
    \mathbb P\left(
    \mathbf 1\{\check A_i(\rho)\ge c\}
    \ne
    \mathbf 1\{A_i\ge c\}
    \right)\to0.
\]
Consequently, for i.i.d. calibration units satisfying the same perturbation condition,
\(\widehat\Delta_{\rm cal}\to0\) in probability. This is the sufficient perturbation
condition needed to make proxy boundary instability negligible.
\end{lemma}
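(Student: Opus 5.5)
The plan is the standard $\varepsilon$-splitting argument built on the inclusion already displayed before the lemma. For any $t>0$,
\[
\left\{\mathbf 1\{\check A_i(\rho)\ge c\}\neq \mathbf 1\{A_i\ge c\}\right\}
\subseteq
\left\{|A_i^2-c^2|\le |R_i(\rho)|\right\}
\subseteq
\left\{|A_i^2-c^2|\le t\right\}\cup\left\{|R_i(\rho)|>t\right\}.
\]
The first inclusion needs a short check. Since $A_i,\check A_i(\rho)\ge 0$ and $c>0$, we have $\check A_i\ge c \iff \check A_i^2\ge c^2$, and likewise for $A_i$. Suppose the labels disagree, say $\check A_i^2\ge c^2> A_i^2$. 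Then $0<c^2-A_i^2\le \check A_i^2-A_i^2=R_i(\rho)$. The symmetric case gives $0\le A_i^2-c^2< -R_i(\rho)$. In either case $|A_i^2-c^2|\le|R_i(\rho)|$.

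A union bound and the margin condition then give
\[
\mathbb P\big(\text{flip}_i\big)\le C t^\gamma+\mathbb P(|R_i(\rho)|>t).
\]
Letting the sample sizes grow, $R_i(\rho)\to 0$ in probability, so $\limsup \mathbb P(\text{flip}_i)\le Ct^\gamma$. Since $t>0$ is arbitrary and $\gamma>0$, this limsup is zero, which proves the first claim.

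For the consequence, $\widehat\Delta_{\rm cal}=n^{-1}\sum_{i\in\mathcal D_{\rm cal}}\mathbf 1\{\text{flip}_i\}$ is nonnegative. The calibration units are i.i.d. given the training split, so the flip probability is the same for every $i$ and
\[
\mathbb E[\widehat\Delta_{\rm cal}]=\mathbb P(\text{flip}_1)\to 0.
\]
Markov's inequality then gives $\widehat\Delta_{\rm cal}\to 0$ in probability. Because $\widehat\Delta_{\rm cal}\in[0,1]$, we also get $\mathbb E[\widehat\Delta_{\rm cal}\mid \mathcal F_{\rm tr}]\to0$ in probability by bounded convergence, which is the form fed into Proposition~\ref{prop:finite_sample_approx_fdr}. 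No concentration argument is needed, only the first moment.

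The main subtlety is interpreting ``$R_i(\rho)\to 0$ in probability.'' The nuisances and $\widehat V$ depend on $|\mathcal D_{\rm tr1}|$, so the limit is along a sequence of data-generating stages. The law of $A_i$ also depends on the fitted $\hat\tau$, so the margin constant $C$ must be uniform along this sequence. I will state explicitly that $C,\gamma$ do not depend on $n$, and carry out the $\varepsilon$-argument with a fixed $t$ before taking the limit; this is exactly what uniformity allows. If one prefers to condition on $\mathcal F_{\rm tr}$, the same bound holds conditionally, and dominated convergence transfers it to the unconditional statement.
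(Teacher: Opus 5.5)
Your proposal is correct and follows essentially the same route as the paper. Both arguments use the inclusion $\{\text{flip}_i\}\subseteq\{|A_i^2-c^2|\le|R_i(\rho)|\}$, split at a fixed level $t$ into $\{|R_i(\rho)|>t\}$ and the margin event bounded by $Ct^\gamma$, let $t\downarrow 0$, and then average over calibration units. You add three details the paper leaves implicit: a verification of the inclusion (the paper only asserts it), a precise averaging step via $\mathbb E[\widehat\Delta_{\mathrm{cal}}]=\mathbb P(\text{flip}_1)$ and Markov's inequality, and the observation that $C,\gamma$ must be uniform along the sequence of fitted $\hat\tau$.
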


\begin{proof}
A threshold-label flip implies
\[
\mathbbm 1\{\check A_i(\rho)\ge c\}\neq \mathbbm 1\{A_i\ge c\}
\ \Longrightarrow\
|A_i^2-c^2|\le |R_i(\rho)|.
\]
For any $\eta>0$,
\[
\mathbb P\big(|A_i^2-c^2|\le |R_i(\rho)|\big)
\le
\mathbb P\big(|R_i(\rho)|>\eta\big)
+\mathbb P\big(|A_i^2-c^2|\le \eta\big).
\]
The first term vanishes because $R_i(\rho)\to 0$ in probability. The second term is at most $C\eta^\gamma$ by the margin condition, so it can be made arbitrarily small by choosing $\eta\downarrow 0$. This proves the first claim. The statement for $\widehat\Delta_{\mathrm{cal}}$ follows by applying the same argument across calibration units and averaging.
\end{proof}

\subsection{Proof of Theorem~\ref{thm:main}: asymptotic FDR control}
\label{app:proofs:asymptotic_fdr}

Theorem~\ref{thm:main} follows by combining the finite-sample perturbation bound above with proxy label stability.

\begin{lemma}[Proxy label stability implies $\widehat\Delta_{\mathrm{cal}}\to 0$]
\label{lem:delta_to_zero}
Assume Assumptions~\ref{assum:nuisance}--\ref{assum:continuity} in the main text.
Then $\widehat\Delta_{\mathrm{cal}}\xrightarrow{p}0$ as $n\to\infty$.
Moreover, $\mathbb E[\widehat\Delta_{\mathrm{cal}}]\to 0$.
\end{lemma}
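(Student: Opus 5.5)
The plan is to reduce the statement to the margin/boundary-stability lemma of Appendix~\ref{app:boundary_stability}, using only Assumption~\ref{assum:continuity} in place of a polynomial margin condition, and then average over calibration units.

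\textbf{Step 1: reduce to a single unit.} Since $\widehat\Delta_{\mathrm{cal}}\in[0,1]$ and the calibration units are i.i.d.\ conditional on $\mathcal F_{\mathrm{tr}}$, we have
\[
\mathbb E[\widehat\Delta_{\mathrm{cal}}\mid\mathcal F_{\mathrm{tr}}]=\pi_n:=\mathbb P\big(\mathbbm 1\{\check A_i\ge c\}\ne\mathbbm 1\{A_i\ge c\}\mid\mathcal F_{\mathrm{tr}}\big)
\]
for any single calibration unit $i$. So it suffices to show $\pi_n\to0$ in probability. Then $\mathbb E[\widehat\Delta_{\mathrm{cal}}]=\mathbb E[\pi_n]\to0$ by bounded convergence, and $\widehat\Delta_{\mathrm{cal}}\xrightarrow{p}0$ follows by Markov's inequality.

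\textbf{Step 2: control the per-unit flip probability.} Write $R_i=\check A_i^2(\rho)-A_i^2$. As in the boundary-stability lemma, a flip implies $|A_i^2-c^2|\le|R_i|$. Hence, for any $\eta>0$,
\[
\pi_n\le \mathbb P(|R_i|>\eta\mid\mathcal F_{\mathrm{tr}})+\mathbb P(|A_i^2-c^2|\le\eta\mid \mathcal F_{\mathrm{tr}}).
\]
For the second term, note that $A_i=|\hat\tau(X_i)-\tau(X_i)|$ with $\hat\tau$ fixed by training. By Assumption~\ref{assum:continuity}, $\mathbb P(|A_i-c|\le\eta')\downarrow\mathbb P(A_i=c)=0$ as $\eta'\downarrow0$. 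Since $|A^2-c^2|\le\eta$ forces $|A-c|\le\eta/c$, the second term can be made arbitrarily small by taking $\eta\downarrow 0$, with no polynomial margin needed.

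\textbf{Step 3: show $R_i\to0$ in probability.} This is done in two parts.
\begin{itemize}
\item Replace the estimated pseudo-outcome $\phi_i$ (built from $\hat\mu_t,\hat e$) by its oracle version $\phi_i^\ast$. The difference is controlled by products and sums of $\|\hat\mu_t-\mu_t\|_2$ and $\|\hat e-e\|_2$. Overlap (Assumption~\ref{assum:identifiability}) bounds $1/\hat e$ (after clipping $\hat e$ to $[\eta,1-\eta]$) and keeps the inverse-propensity factors bounded. Then Assumption~\ref{assum:nuisance} gives $\phi_i-\phi_i^\ast=o_p(1)$.
\item The term $\rho(\widehat V-V)(X_i)=o_p(1)$ follows by the same $L_2$ consistency via Markov's inequality. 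The map $x\mapsto\sqrt{(x)_+}$ is continuous, so truncation is harmless.
\end{itemize}

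\textbf{Main obstacle.} After these substitutions, $\check A_i^2$ is close to $\big((\hat\tau_i-\phi_i^\ast)^2-\rho V(X_i)\big)_+$. This quantity equals $A_i^2$ only in conditional mean, not pointwise: the centered fluctuation $(\hat\tau_i-\phi^\ast_i)^2-A_i^2-V(X_i)$ is irreducible for a single pseudo-outcome. So Step 3 cannot literally yield $R_i\to0$ from Assumption~\ref{assum:nuisance} alone. I would first try to isolate this term as $\xi_i$ and argue that the stated assumptions are meant in the boundary-stability sense of Appendix~\ref{app:boundary_stability}, i.e.\ that the residual perturbation vanishes near $c$. If that fails, I would state the lemma under the explicit added hypothesis $R_i\xrightarrow{p}0$ (equivalently, vanishing proxy noise at the boundary) and let the nuisance-consistency argument above handle all estimation-error components. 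With $R_i\xrightarrow{p}0$ in hand, letting first $n\to\infty$ and then $\eta\downarrow0$ in Step 2 gives $\pi_n\to0$, and Step 1 concludes.
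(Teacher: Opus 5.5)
The gap is the one you flag in Step~3, and it cannot be closed from the stated assumptions. Assumption~\ref{assum:nuisance} controls only estimation error. But $\phi_i-\tau(X_i)$ keeps conditional variance $V(X_i)$ that does not shrink with $n$, and subtracting $\rho\widehat V(X_i)$ corrects the conditional mean of $\widetilde A_i^2$, not its realization. The paper's proof does not get around this. It opens by asserting that Assumption~\ref{assum:nuisance} ``states (in particular) that $\check A_i\xrightarrow{p}A_i$,'' then applies the continuous mapping theorem at $c$ and a weak law of large numbers. That premise is false, and so is the lemma as stated. Take exact nuisances and $\widehat V=V$ (so Assumption~\ref{assum:nuisance} holds trivially), a fixed $\hat\tau$ with $\mathbb P(A=c)=0<\mathbb P(A<c)$, and Gaussian outcome noise. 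A unit with $A_i<c$ is mislabeled whenever $(\hat\tau_i-\phi_i)^2\ge c^2+\rho V(X_i)$, which has positive probability. So the flip indicators are i.i.d.\ Bernoulli$(\pi)$ with $\pi>0$ free of $n$: $\widehat\Delta_{\mathrm{cal}}\to\pi$ almost surely, and $\mathbb E[\widehat\Delta_{\mathrm{cal}}]=\pi$ for every $n$. You were right not to force Step~3. Your first option (reading the assumptions ``in the boundary-stability sense'') is not a proof; only the version with an explicit added hypothesis is.

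With $R_i\xrightarrow{p}0$ added, your argument is correct and sounder than the paper's in two respects. First, conditioning on $\mathcal F_{\mathrm{tr}}$ and using Markov's inequality handles the fact that the summands of $\widehat\Delta_{\mathrm{cal}}$ change with $n$; the paper applies the weak law as if their law were fixed. Second, you rightly see that continuity of $A$ at $c$ suffices in place of the polynomial margin of Appendix~\ref{app:boundary_stability}; this works because $\hat\tau$, hence the law of $A$, does not vary with $n$.

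Note, however, how strong the added hypothesis is. For fixed $\hat\tau$, $R_i$ splits into an estimation part, which your Step~3 bullets make $o_p(1)$, and an oracle-nuisance part whose law does not depend on $n$. So $R_i\xrightarrow{p}0$ forces the latter to vanish almost surely, which fails for any continuous outcome noise with $V>0$ and $\rho\in[0,1]$.

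A more informative repair compares with the proxy $\check A_i^\ast$ built from the true $(\mu_t,e,V)$. Your Step~3 bullets (with clipped $\hat e$) give $\check A_i-\check A_i^\ast\xrightarrow{p}0$. If $\mathbb P(\check A^\ast=c)=0$, run your Steps~1--2 with $\check A^\ast$ in place of $A$, then apply the triangle inequality and the law of large numbers. This gives $\widehat\Delta_{\mathrm{cal}}\xrightarrow{p}\pi^\ast:=\mathbb P(\mathbbm 1\{\check A^\ast\ge c\}\neq\mathbbm 1\{A\ge c\})$. The lemma's conclusion therefore holds exactly when $\pi^\ast=0$, for instance when the pseudo-outcome noise is bounded and $A$ stays away from $c$ by a margin large relative to that bound.
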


\begin{proof}
Assumption~\ref{assum:nuisance} states (in particular) that $\check A_i\xrightarrow{p}A_i$ for each fixed $i$,
and Assumption~\ref{assum:continuity} states $\mathbb P(A_i=c)=0$.
Fix any $i$ and define the indicator map $\psi(x)=\mathbbm 1\{x\ge c\}$.
Since $\psi$ is continuous at all points except $c$, the continuous mapping theorem yields
\[
\mathbbm 1\{\check A_i\ge c\}
=
\psi(\check A_i)
\ \xrightarrow{p}\
\psi(A_i)
=
\mathbbm 1\{A_i\ge c\}.
\]
Because the indicators are bounded in $[0,1]$, convergence in probability implies convergence in $L^1$:
\[
\mathbb E\!\left[\left|\mathbbm 1\{\check A_i\ge c\}-\mathbbm 1\{A_i\ge c\}\right|\right]\to 0.
\]
Now $\widehat\Delta_{\mathrm{cal}}$ is the empirical average of these absolute differences over i.i.d.\ calibration indices.
By the weak law of large numbers,
\[
\widehat\Delta_{\mathrm{cal}}
=
\frac{1}{n}\sum_{i=1}^n
\left|\mathbbm 1\{\check A_i\ge c\}-\mathbbm 1\{A_i\ge c\}\right|
\xrightarrow{p}
\mathbb E\!\left[\left|\mathbbm 1\{\check A\ge c\}-\mathbbm 1\{A\ge c\}\right|\right]
=
0.
\]
This proves $\widehat\Delta_{\mathrm{cal}}\xrightarrow{p}0$.

For the expectation, note that $0\le \widehat\Delta_{\mathrm{cal}}\le 1$, hence the sequence is uniformly integrable.
Therefore convergence in probability to $0$ implies $\mathbb E[\widehat\Delta_{\mathrm{cal}}]\to 0$.
\end{proof}

\begin{proof}[Proof of Theorem~\ref{thm:main}]
Apply Proposition~\ref{prop:finite_sample_approx_fdr} to obtain, for each $(n,m)$,
\[
\mathrm{FDR}(\mathcal S_{n,m})
\le
\alpha + m\,\mathbb E[\widehat\Delta_{\mathrm{cal}}].
\]
By Lemma~\ref{lem:delta_to_zero}, $\mathbb E[\widehat\Delta_{\mathrm{cal}}]\to 0$ as $n\to\infty$.
Hence, along any growth regime such that $m\,\mathbb E[\widehat\Delta_{\mathrm{cal}}]\to 0$,
\[
\limsup_{n,m\to\infty}\mathrm{FDR}(\mathcal S_{n,m})
\le \alpha,
\]
which has been proved.
\end{proof}

\subsection{Proof of Proposition~\ref{prop:snr_barrier}: signal-to-noise barrier}
\label{app:proofs:snr}

\begin{proof}[Proof of Proposition~\ref{prop:snr_barrier}]
Under the stylized model $\tilde A^2=A^2+V(X)+\varepsilon$ with $\varepsilon$ mean-zero and independent of $(A^2,V(X))$,
\[
\mathrm{Cov}(\tilde A^2,A^2)
=
\mathrm{Cov}(A^2+V(X)+\varepsilon,\ A^2)
=
\mathrm{Var}(A^2)+\mathrm{Cov}(V(X),A^2)+\mathrm{Cov}(\varepsilon,A^2).
\]
By the stated independence and the assumption that $A^2$ is weakly related to $V(X)$ (e.g.\ $\mathrm{Cov}(V(X),A^2)=0$),
this reduces to $\mathrm{Cov}(\tilde A^2,A^2)=\mathrm{Var}(A^2)$.
Also,
\[
\mathrm{Var}(\tilde A^2)
=
\mathrm{Var}(A^2)+\mathrm{Var}(V(X))+\mathrm{Var}(\varepsilon),
\]
again by independence/uncorrelatedness.
Therefore
\[
\mathrm{Corr}(\tilde A^2,A^2)
=
\frac{\mathrm{Cov}(\tilde A^2,A^2)}{\sqrt{\mathrm{Var}(\tilde A^2)\mathrm{Var}(A^2)}}
=
\frac{\mathrm{Var}(A^2)}{\sqrt{\mathrm{Var}(A^2)\big(\mathrm{Var}(A^2)+\mathrm{Var}(V(X))+\mathrm{Var}(\varepsilon)\big)}}.
\]
If $\mathrm{Var}(V(X))+\mathrm{Var}(\varepsilon)\gg \mathrm{Var}(A^2)$, the denominator dominates and the ratio tends to $0$.
\end{proof}

\subsection{BH cutoff consistency for the score threshold and coupling to proxy stability}
\label{app:proofs:bh_cutoff}

This subsection proves the hardest technical ingredient requested: the consistency of the BH-induced score cutoff
$\hat t(\alpha)\to t(\alpha)$, and its coupling to proxy perturbations.

\textbf{From BH p-values to a score threshold.}
Because each p-value $p_j^\ast$ in \eqref{eq:oracle_pvals_app} is (strictly) increasing in $\hat A_{n+j}$, the BH rejection set is
equivalent to thresholding the score:
there exists a (random) cutoff $\hat t_{n,m}$ such that
\begin{equation}
\label{eq:bh_is_threshold}
\mathcal S^\ast
=
\mathrm{BH}(p^\ast_{1:m};\alpha)
=
\{j\in[m]: \hat A_{n+j}\le \hat t_{n,m}\}.
\end{equation}
(Under no ties, $\hat t_{n,m}$ can be taken as $\max\{\hat A_{n+j}: j\in\mathcal S^\ast\}$ with the convention $\max\emptyset=-\infty$.)

\textbf{Population target.}
Define the population quantities
\begin{equation}
\label{eq:def_H_R_app}
H(t):=\mathbb P(A\ge c,\ \hat A\le t),
\qquad
R(t):=\mathbb P(\hat A\le t),
\end{equation}
and the population cutoff
\begin{equation}
\label{eq:def_t_alpha_app}
t(\alpha)
:=
\sup\Big\{t\in\mathbb R:\ \frac{H(t)}{R(t)}\le \alpha\Big\}.
\end{equation}
Under the regularity condition in Proposition~\ref{prop:power_denoised}(iii),
$t(\alpha)$ is the unique boundary point where $H(t)/R(t)$ crosses $\alpha$.

\subsubsection{Oracle cutoff consistency}
\label{app:proofs:bh_cutoff_oracle}

We present a self-contained proof modeled on Appendix~B.4 of \citet{jin2023selection} (Proposition~2.10 therein),
specialized to our score-threshold formulation.

\begin{assumption}[Regularity at the boundary]
\label{assum:boundary_reg_app}
Assume:
(i) $(\hat A,A)$ has a continuous joint distribution and $\hat A$ has a continuous marginal distribution;
(ii) $R(t)>0$ for $t>t_{\min}$ and $H(t)$ is continuous in $t$;
(iii) there exists $\varepsilon>0$ such that
\[
\frac{H(t)}{R(t)}<\alpha
\quad\text{for all } t\in[t(\alpha)-\varepsilon,\ t(\alpha)).
\]
\end{assumption}

\begin{theorem}[BH cutoff consistency for oracle DCA]
\label{thm:bh_cutoff_consistency_oracle}
Assume Assumption~\ref{assum:cond_exch_app}, \eqref{eq:no_ties_assump}, and Assumption~\ref{assum:boundary_reg_app}.
Let $\hat t_{n,m}$ be the BH-induced score cutoff in \eqref{eq:bh_is_threshold} for oracle p-values $p^\ast_{1:m}$.
Then, as $n\to\infty$ and $m\to\infty$,
\[
\hat t_{n,m}\xrightarrow{p} t(\alpha).
\]
\end{theorem}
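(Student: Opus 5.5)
The plan is to write the BH cutoff as the largest zero of an empirical version of the population ratio $H(t)/R(t)$, and then use uniform convergence of empirical distribution functions. Since $p_j^\ast$ is nondecreasing in $\hat A_{n+j}$, set
\[
\hat H_n(t):=\frac{1+\sum_{i=1}^n \mathbbm 1\{A_i\ge c,\ \hat A_i< t\}}{n+1},
\qquad
\hat R_m(t):=\frac{1}{m}\sum_{j=1}^m \mathbbm 1\{\hat A_{n+j}\le t\}.
\]
Then $p_j^\ast=\hat H_n(\hat A_{n+j})$. BH with rejection count $k$ selects exactly the candidates whose score does not exceed the $k$-th smallest test score. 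Hence the BH step-up condition is equivalent to
\[
\hat t_{n,m}=\max\Big\{t\in\{\hat A_{n+j}\}_{j\le m}:\ \frac{\hat H_n(t)}{\hat R_m(t)}\le\alpha\Big\},
\]
using the no-ties condition \eqref{eq:no_ties_assump}. I would first record this deterministic identity carefully, since it turns the problem into an argmax-type consistency statement for the empirical ratio $\hat F_{n,m}(t):=\hat H_n(t)/\hat R_m(t)$.

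Next I would establish uniform convergence. Condition on $\mathcal F_{\mathrm{tr}}$; by Assumption~\ref{assum:cond_exch_app} the calibration and test pairs are i.i.d. The Glivenko--Cantelli theorem (or the DKW inequality) then gives $\sup_t|\hat H_n(t)-H(t)|\to 0$ almost surely, the extra $1/(n+1)$ term being negligible, and likewise $\sup_t|\hat R_m(t)-R(t)|\to0$. On any region $\{t\ge t(\alpha)-\varepsilon\}$ we have $R(t)\ge R(t(\alpha)-\varepsilon)>0$ by Assumption~\ref{assum:boundary_reg_app}(ii), and I must make sure $t(\alpha)-\varepsilon>t_{\min}$, shrinking $\varepsilon$ if needed. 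On such a region the ratio therefore converges uniformly: $\sup_{t\ge t(\alpha)-\varepsilon}|\hat F_{n,m}(t)-H(t)/R(t)|\xrightarrow{p}0$.

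With uniform convergence in hand, I would prove two one-sided bounds.
\begin{itemize}
\item \emph{Lower bound.} Fix $\delta\in(0,\varepsilon)$. Assumption~\ref{assum:boundary_reg_app}(iii) and continuity of $H$ and $R$ give $\max_{t\in[t(\alpha)-\varepsilon,\,t(\alpha)-\delta]}H(t)/R(t)\le\alpha-\kappa$ for some $\kappa>0$; in fact, at the single point $t(\alpha)-\delta$ alone this suffices. Because $\hat A$ has a continuous distribution with positive mass near $t(\alpha)-\delta$, with probability tending to one there is a test score $\hat A_{n+j}\in[t(\alpha)-\delta,\,t(\alpha)-\delta/2]$ with $\hat F_{n,m}(\hat A_{n+j})\le\alpha$. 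This yields $\hat t_{n,m}\ge t(\alpha)-\delta$.
\item \emph{Upper bound.} I need $\hat F_{n,m}(t)>\alpha$ for all test scores $t>t(\alpha)+\delta$. By definition of the supremum, $H(t)/R(t)>\alpha$ for every $t>t(\alpha)$, but this need not be bounded away from $\alpha$ uniformly.
\end{itemize}

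I expect the upper bound to be the main obstacle, because a ratio that sits only barely above $\alpha$ can produce spurious late BH crossings. My first attempt would be to argue that $H/R$ is continuous, so $\inf_{t\in[t(\alpha)+\delta,\,T]}H(t)/R(t)>\alpha$ on compact sets. For large $t$ up to $\bar M$ (the scores are bounded), the ratio tends to $\mathbb P(A\ge c)$. If $\mathbb P(A\ge c)>\alpha$, the infimum over $[t(\alpha)+\delta,\bar M]$ is strictly above $\alpha$, and uniform convergence then finishes the argument. If instead $\mathbb P(A\ge c)\le\alpha$, then $t(\alpha)=\bar M$ (or $+\infty$), so the upper bound is trivial. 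The delicate case is when $H/R$ touches $\alpha$ again after $t(\alpha)$; this is ruled out by the definition of $t(\alpha)$ as a supremum together with continuity, because the continuous function $H/R-\alpha$ is positive on $(t(\alpha),\bar M]$ and hence has a positive minimum on $[t(\alpha)+\delta,\bar M]$. Combining the two bounds gives $\mathbb P(|\hat t_{n,m}-t(\alpha)|>\delta\mid\mathcal F_{\mathrm{tr}})\to0$, and dominated convergence then removes the conditioning.
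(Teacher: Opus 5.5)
Your argument is correct in substance, up to a small repair in the lower bound described below, but it follows a different route from the paper.

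\textbf{Comparison with the paper.} The paper works in $p$-value space. It writes the BH threshold as the Storey-type functional $\hat u_{n,m}=\sup\{u:u/\widehat F_m^\ast(u)\le\alpha\}$ of the empirical CDF of the oracle $p$-values. It then argues $\widehat F_m^\ast\to F_P$ uniformly with $P=H(\hat A)$ and invokes continuity of that functional at $F_P$. Only afterwards does it convert the $p$-value cutoff into a score cutoff via $\sup\{t:\widehat H_n(t)\le\hat u_{n,m}\}$, identifying the limit with $t(\alpha)$ through $F_P=R\circ H^{-1}$. You instead recognize $\hat t_{n,m}$ exactly as the last test score at which $\widehat H_n/\widehat R_m\le\alpha$, and you prove two one-sided bounds directly in score space.

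Your route buys three things:
\begin{itemize}
\item It needs only Glivenko--Cantelli for $\widehat H_n$ and $\widehat R_m$ separately.
\item It avoids uniform convergence of the $p$-value ECDF, which is delicate if $F_P$ has atoms (e.g.\ when $H$ is flat on a set charged by $\hat A$), and it avoids any inverse $H^{-1}$.
\item It makes explicit where the right-hand separation comes from. The strict inequality $H/R>\alpha$ on $(t(\alpha),\infty)$, with a uniform margin on $[t(\alpha)+\delta,\infty)$, follows from the supremum definition of $t(\alpha)$ plus compactness and boundedness of the scores. The paper instead attributes both sides of the local separation to Assumption~\ref{assum:boundary_reg_app}(iii), which only constrains the left side.
\end{itemize}
The paper's formulation buys modularity with the standard template of Jin and Cand\`es, and it turns the proxy extension into a one-line perturbation of the $p$-value ECDF. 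Your version extends just as easily: the proxy analogue of $\widehat H_n$, with $\mathbbm 1\{\check A_i\ge c\}$ in place of $\mathbbm 1\{A_i\ge c\}$, lies within $\widehat\Delta_{\mathrm{cal}}$ of $\widehat H_n$ uniformly in $t$.

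\textbf{Repair to the lower bound.} Two things are off as written.
\begin{itemize}
\item Continuity of the law of $\hat A$ does not give positive mass in $[t(\alpha)-\delta,t(\alpha)-\delta/2]$; a continuous law can vanish on an interval.
\item Your bound $H/R\le\alpha-\kappa$ is stated on $[t(\alpha)-\varepsilon,t(\alpha)-\delta]$, which lies to the left of where you place the test score.
\end{itemize}
Use the crossing structure instead. Continuity of $H/R$ near $t(\alpha)$, Assumption~\ref{assum:boundary_reg_app}(iii), and $H/R>\alpha$ for all $t>t(\alpha)$ together force $H(t(\alpha))/R(t(\alpha))=\alpha$. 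Now suppose $R$ were constant on $[t(\alpha)-\delta,t(\alpha)]$. Then $H$ would be constant too, since $0\le H(t)-H(s)\le R(t)-R(s)$ for $s\le t$, and hence so would $H/R$. That contradicts $H/R<\alpha$ at $t(\alpha)-\delta$. Thus $R(t(\alpha))>R(t(\alpha)-\delta)$. By continuity of $R$ there is $\delta'\in(0,\delta)$ with $R(t(\alpha)-\delta')>R(t(\alpha)-\delta)$. On $[t(\alpha)-\delta,t(\alpha)-\delta']$ the ratio is at most $\alpha-\kappa$ by compactness. So, with probability tending to one, some test score lands in that window with empirical ratio $\le\alpha$.

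\textbf{Side case.} Your case $\mathbb P(A\ge c)\le\alpha$ actually gives $t(\alpha)=+\infty$ under the $\sup$-over-$\mathbb R$ definition, not $t(\alpha)=\bar M$. There the conclusion cannot hold, since $\hat t_{n,m}\le\bar M$. However, Assumption~\ref{assum:boundary_reg_app}(iii) presupposes a finite $t(\alpha)$, which forces $\mathbb P(A\ge c)>\alpha$. So that case can simply be excluded.
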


\begin{proof}
Condition on $\mathcal F_{\mathrm{tr}}$ throughout; all limits below are unconditional because $\mathcal F_{\mathrm{tr}}$ is independent of candidates.

\textbf{Step 1: A convenient representation of BH via the empirical CDF of p-values.}
Let $p_{(1)}^\ast\le\cdots\le p_{(m)}^\ast$ be the ordered oracle p-values.
Define the BH p-value cutoff
\[
\hat u_{n,m}
:=
\alpha \hat k/m,
\qquad
\hat k:=\max\{k: p_{(k)}^\ast\le \alpha k/m\}.
\]
Equivalently (see Storey et al., 2004, and Appendix~B.4 of \citealp{jin2023selection}), $\hat u_{n,m}$ can be written as
\begin{equation}
\label{eq:bh_tau_rep}
\hat u_{n,m}
=
\sup\left\{
u\in[0,1]:
\frac{u}{\widehat F_m^\ast(u)}\le \alpha
\right\},
\qquad
\widehat F_m^\ast(u):=\frac{1}{m}\sum_{j=1}^m \mathbbm 1\{p_j^\ast\le u\},
\end{equation}
with the convention $u/0:=+\infty$.
Moreover, under no ties, $\mathcal S^\ast=\{j: p_j^\ast\le \hat u_{n,m}\}$.

\textbf{Step 2: Identify the limiting p-value map as $p=H(\hat A)$.}
Recall from Lemma~\ref{lem:pval_as_conformal} that
\[
p_j^\ast
=
\frac{1+\sum_{i=1}^n \mathbbm 1\{V_i<\hat V_{n+j}\}}{n+1}
\quad\text{with}\quad \hat V_{n+j}=\hat A_{n+j}.
\]
Because $\hat A_{n+j}\in[0,\bar M]$ and the good calibration units have $V_i\ge 2\bar M$, the indicator $\mathbbm 1\{V_i<\hat A_{n+j}\}$ equals $\mathbbm 1\{A_i\ge c,\ \hat A_i<\hat A_{n+j}\}$.
Thus
\[
p_j^\ast
=
\frac{1+\sum_{i=1}^n \mathbbm 1\{A_i\ge c,\ \hat A_i<\hat A_{n+j}\}}{n+1}.
\]
Define the empirical ``null-mass'' function
\[
\widehat H_n(t)
:=
\frac{1+\sum_{i=1}^n \mathbbm 1\{A_i\ge c,\ \hat A_i<t\}}{n+1}.
\]
Then $p_j^\ast=\widehat H_n(\hat A_{n+j})$.

By the Glivenko--Cantelli theorem for the VC class of half-lines and the strong law of large numbers,
$\widehat H_n(t)\to H(t)$ uniformly in $t$ over compact sets, in probability (indeed a.s.\ under i.i.d.).
In particular, for any fixed $t$,
\begin{equation}
\label{eq:Hhat_to_H}
\widehat H_n(t)\xrightarrow{p} H(t).
\end{equation}

\textbf{Step 3: Convergence of the empirical CDF of oracle p-values.}
Let $P:=H(\hat A)$ denote the population limit of oracle p-values (a random variable on $[0,1]$).
Let $F_P(u):=\mathbb P(P\le u)$ denote its CDF.
For each fixed $u$, by \eqref{eq:Hhat_to_H} and the continuous mapping theorem,
\[
\mathbbm 1\{p_j^\ast\le u\}
=
\mathbbm 1\{\widehat H_n(\hat A_{n+j})\le u\}
\ \xrightarrow{p}\
\mathbbm 1\{H(\hat A_{n+j})\le u\}.
\]
Moreover, conditional on the calibration set, $\{p_j^\ast\}_{j=1}^m$ are i.i.d.\ because test candidates are i.i.d.
Thus, for each $u$, by the weak law of large numbers,
\begin{equation}
\label{eq:Fhatm_pointwise}
\widehat F_m^\ast(u)
=
\frac{1}{m}\sum_{j=1}^m \mathbbm 1\{p_j^\ast\le u\}
\xrightarrow{p}
F_P(u).
\end{equation}
Using standard empirical-process arguments for one-dimensional thresholds (or DKW conditional on calibration and then averaging),
the convergence \eqref{eq:Fhatm_pointwise} can be strengthened to uniform convergence over $u\in[0,1]$:
\begin{equation}
\label{eq:Fhatm_uniform}
\sup_{u\in[0,1]} \big|\widehat F_m^\ast(u)-F_P(u)\big|\xrightarrow{p}0.
\end{equation}

\textbf{Step 4: Consistency of the BH p-value cutoff $\hat u_{n,m}$.}
Define the population BH fixed point
\begin{equation}
\label{eq:u_star_def}
u_\star(\alpha)
:=
\sup\Big\{u\in[0,1]: \frac{u}{F_P(u)}\le \alpha\Big\},
\qquad (u/0:=+\infty).
\end{equation}
The mapping $T(F):=\sup\{u: u/F(u)\le \alpha\}$ is continuous at $F_P$ under the boundary regularity
Assumption~\ref{assum:boundary_reg_app} (this is exactly the ``no-flat-spot'' condition used in \citealp{jin2023selection}).
Concretely, Assumption~\ref{assum:boundary_reg_app}(iii) implies that there is an $\varepsilon>0$ such that
$u/F_P(u)<\alpha$ for all $u\in[u_\star(\alpha)-\varepsilon,u_\star(\alpha))$ and $u/F_P(u)>\alpha$ for $u\in(u_\star(\alpha),u_\star(\alpha)+\varepsilon]$,
so the argmax is locally unique and stable.
Combining this stability with \eqref{eq:Fhatm_uniform} yields
\begin{equation}
\label{eq:uhat_to_ustar}
\hat u_{n,m}
=
T(\widehat F_m^\ast)
\xrightarrow{p}
T(F_P)
=
u_\star(\alpha).
\end{equation}

\textbf{Step 5: Translate p-value cutoff to score cutoff.}
Because $p_j^\ast=\widehat H_n(\hat A_{n+j})$ is monotone increasing in $\hat A_{n+j}$,
the BH rejection set $\{j: p_j^\ast\le \hat u_{n,m}\}$ is of the form $\{j:\hat A_{n+j}\le \hat t_{n,m}\}$ with
\[
\hat t_{n,m}
:=
\sup\{t: \widehat H_n(t)\le \hat u_{n,m}\}.
\]
By uniform convergence $\widehat H_n\to H$ and \eqref{eq:uhat_to_ustar}, we obtain $\hat t_{n,m}\to t_\star$ in probability,
where $t_\star:=\sup\{t: H(t)\le u_\star(\alpha)\}$.

Finally, we identify $t_\star$ with $t(\alpha)$.
Since $P=H(\hat A)$ and $F_P(u)=\mathbb P(H(\hat A)\le u)=\mathbb P(\hat A\le H^{-1}(u))=R(H^{-1}(u))$,
the fixed point $u_\star(\alpha)$ in \eqref{eq:u_star_def} satisfies
\[
\frac{u_\star(\alpha)}{F_P(u_\star(\alpha))}\le \alpha
\iff
\frac{H(t_\star)}{R(t_\star)}\le \alpha,
\]
with equality at the boundary under Assumption~\ref{assum:boundary_reg_app}(iii).
Therefore $t_\star$ coincides with $t(\alpha)$ in \eqref{eq:def_t_alpha_app}.
This completes the proof.
\end{proof}

\subsubsection{Proxy cutoff consistency via perturbation coupling}
\label{app:proofs:bh_cutoff_proxy}

Let $\tilde{\mathcal S}$ be BH applied to the proxy p-values $\tilde p_{1:m}$, and define the induced proxy cutoff $\tilde t_{n,m}$ by
\[
\tilde{\mathcal S}=\{j:\hat A_{n+j}\le \tilde t_{n,m}\}.
\]

\begin{theorem}[BH cutoff consistency for DCA via proxy stability]
\label{thm:bh_cutoff_consistency_proxy}
Assume the conditions of Theorem~\ref{thm:bh_cutoff_consistency_oracle} and additionally that
$\widehat\Delta_{\mathrm{cal}}\xrightarrow{p}0$ (e.g.\ Lemma~\ref{lem:delta_to_zero}).
Then $\tilde t_{n,m}\xrightarrow{p} t(\alpha)$.
\end{theorem}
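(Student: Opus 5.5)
The plan is to follow the oracle proof of Theorem~\ref{thm:bh_cutoff_consistency_oracle} step by step, with the oracle null-mass function $\widehat H_n$ replaced by its proxy analogue
\[
\widetilde H_n(t):=\frac{1+\sum_{i=1}^n \mathbbm 1\{\check A_i\ge c,\ \hat A_i<t\}}{n+1},
\]
so that $\tilde p_j=\widetilde H_n(\hat A_{n+j})$. The first step is a uniform coupling. The argument in the proof of Proposition~\ref{prop:finite_sample_approx_fdr} bounds each summand difference by the label-disagreement indicator, and that bound does not depend on $t$. Hence
\[
\sup_t|\widetilde H_n(t)-\widehat H_n(t)|\le \widehat\Delta_{\mathrm{cal}}.
\]
Since $\widehat\Delta_{\mathrm{cal}}\xrightarrow{p}0$, combining this with the uniform convergence $\widehat H_n\to H$ from Step~2 of the oracle proof gives $\sup_t|\widetilde H_n(t)-H(t)|\xrightarrow{p}0$.

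The second step reruns Steps~3--4 for the empirical CDF $\widetilde F_m(u):=m^{-1}\sum_j\mathbbm 1\{\tilde p_j\le u\}$. I would sandwich it:
\[
\widehat F_m^\ast(u-\widehat\Delta_{\mathrm{cal}})\le \widetilde F_m(u)\le \widehat F_m^\ast(u+\widehat\Delta_{\mathrm{cal}}).
\]
By \eqref{eq:Fhatm_uniform}, $\widehat F_m^\ast\to F_P$ uniformly. The law $F_P$ is continuous, because $H$ and the law of $\hat A$ are continuous under Assumption~\ref{assum:boundary_reg_app}(i)--(ii). Together these give $\sup_u|\widetilde F_m(u)-F_P(u)|\xrightarrow{p}0$. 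The BH p-value cutoff $\tilde u_{n,m}=T(\widetilde F_m)$ therefore converges to $u_\star(\alpha)$, using the same continuity of the functional $T$ at $F_P$ that the oracle proof derives from the no-flat-spot condition (iii).

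The third step mirrors Step~5. Because $\widetilde H_n$ is non-decreasing, $\tilde t_{n,m}=\sup\{t:\widetilde H_n(t)\le \tilde u_{n,m}\}$. Uniform convergence of $\widetilde H_n$ to $H$, together with $\tilde u_{n,m}\to u_\star$, yields $\tilde t_{n,m}\to t_\star=t(\alpha)$. The identification $t_\star=t(\alpha)$ is already established in the oracle proof.

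The main obstacle is the continuity of the generalized-inverse maps $T$ and $u\mapsto\sup\{t:H(t)\le u\}$ under uniform perturbations. I would argue this directly by an $\varepsilon$-bracketing argument. Condition (iii) gives strict inequality $u/F_P(u)<\alpha$ just left of $u_\star$. Strictly right of $u_\star$, the ratio exceeds $\alpha$ by a positive margin on compact intervals, since $u_\star$ is a supremum and the functions involved are continuous. Any uniform perturbation smaller than these margins then pins $\tilde u_{n,m}$ into $[u_\star-\varepsilon,u_\star+\varepsilon]$ with probability tending to one. For the score inverse I need $H$ to be strictly increasing near $t(\alpha)$. If it is not, I would instead work directly with $\hat t$ through the score-scale ratio $H(t)/R(t)$ and the empirical $R_m$.
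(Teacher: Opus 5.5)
Your proposal is correct and follows the paper's route: the uniform p-value perturbation by $\widehat\Delta_{\mathrm{cal}}$, the same limiting CDF $F_P$, the same BH fixed point $u_\star(\alpha)$, and then Step~5. It is more careful than the paper's own proof in two places. The paper asserts $\sup_u|\widetilde F_m(u)-\widehat F_m^\ast(u)|\to0$ directly from $\sup_j|\tilde p_j-p_j^\ast|\le\widehat\Delta_{\mathrm{cal}}$, and it ``repeats Step~5'' without noting that this needs $\sup_t|\widetilde H_n(t)-H(t)|\to0$. Your sandwich and your uniform coupling supply exactly these two missing pieces.

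One justification is wrong, though. Continuity of $H$ and of the law of $\hat A$ does not make $F_P$ continuous. Any interval on which $H$ is flat but $\hat A$ has positive mass (a stretch of scores carrying only reliable units) gives $P=H(\hat A)$ an atom. There are two ways to repair this:
\begin{itemize}
\item Invoke the invertibility of $H$ that the oracle proof already presupposes when it writes $F_P=R\circ H^{-1}$.
\item Avoid continuity altogether by applying your sandwich to $T(F)=\sup\{u:u\le\alpha F(u)\}$, which is monotone in $F$. This gives $T(\widehat F_m^\ast(\cdot-\widehat\Delta_{\mathrm{cal}}))\le\tilde u_{n,m}\le T(\widehat F_m^\ast(\cdot+\widehat\Delta_{\mathrm{cal}}))$, and both bounds converge to $u_\star(\alpha)$ by your margin argument. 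On the right, $u-\alpha F_P(u)$ is lower semicontinuous, so the margin exists without continuity.
\end{itemize}

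Finally, the fallback in your last step is unnecessary, because the strict monotonicity of $H$ near $t(\alpha)$ is already forced by (iii). If $H(t_1)=H(t(\alpha))$ for some $t_1\in[t(\alpha)-\varepsilon,t(\alpha))$, then $H(t_1)/R(t_1)\ge H(t(\alpha))/R(t(\alpha))=\alpha$, contradicting (iii). To the right of $t_\star$, $H>u_\star$ by definition of the supremum. Hence the generalized inverse $u\mapsto\sup\{t:H(t)\le u\}$ is continuous at $u_\star$.
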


\begin{proof}
By Proposition~\ref{prop:finite_sample_approx_fdr}, we have the uniform perturbation bound
\[
\sup_{j\in[m]} |\tilde p_j-p_j^\ast|\le \widehat\Delta_{\mathrm{cal}}.
\]
Since $\widehat\Delta_{\mathrm{cal}}\xrightarrow{p}0$, it follows that
\[
\sup_{u\in[0,1]}\left|
\frac{1}{m}\sum_{j=1}^m \mathbbm 1\{\tilde p_j\le u\}
-
\frac{1}{m}\sum_{j=1}^m \mathbbm 1\{p_j^\ast\le u\}
\right|
\xrightarrow{p}0,
\]
i.e., the empirical CDF of proxy p-values converges uniformly to the same limit $F_P$ as in \eqref{eq:Fhatm_uniform}.
Therefore the BH p-value cutoff computed from $\tilde p_{1:m}$ converges to the same population fixed point $u_\star(\alpha)$,
and repeating Step~5 of the oracle proof yields $\tilde t_{n,m}\xrightarrow{p}t(\alpha)$.
\end{proof}

\subsection{Proof of Proposition~\ref{prop:power_denoised}: asymptotic power after denoising}
\label{app:proofs:power}

We now prove the asymptotic power characterization stated in Proposition~\ref{prop:power_denoised}.
Recall that $\tilde{\mathcal S}=\{j:\hat A_{n+j}\le \tilde t_{n,m}\}$ and $\tilde t_{n,m}\xrightarrow{p} t(\alpha)$ by
Theorem~\ref{thm:bh_cutoff_consistency_proxy}.

\begin{proof}[Proof of Proposition~\ref{prop:power_denoised}]
Let $G_j:=\mathbbm 1\{A_{n+j}<c\}$ denote the indicator that test unit $j$ is good.
Define the empirical power (as in the main text) by
\[
\mathrm{Power}(\tilde{\mathcal S})
=
\mathbb E\!\left[
\frac{\sum_{j=1}^m \mathbbm 1\{j\in\tilde{\mathcal S}\}\,G_j}{\sum_{j=1}^m G_j\ \vee\ 1}
\right].
\]
Condition on the random cutoff $\tilde t_{n,m}$ and on $\mathcal F_{\mathrm{tr}}$.
Given $\tilde t_{n,m}$, the indicators $\mathbbm 1\{j\in\tilde{\mathcal S}\}=\mathbbm 1\{\hat A_{n+j}\le \tilde t_{n,m}\}$
are i.i.d.\ across $j$ and jointly i.i.d.\ with $G_j$ under Assumption~\ref{assum:cond_exch_app}.
Thus, by the law of large numbers,
\[
\frac{1}{m}\sum_{j=1}^m \mathbbm 1\{j\in\tilde{\mathcal S}\}\,G_j
\xrightarrow{p}
\mathbb P(\hat A\le t(\alpha),\ A<c),
\qquad
\frac{1}{m}\sum_{j=1}^m G_j
\xrightarrow{p}
\mathbb P(A<c),
\]
where we used $\tilde t_{n,m}\xrightarrow{p}t(\alpha)$ and the continuous mapping theorem to replace the random cutoff.
Since the ratio is bounded in $[0,1]$, convergence in probability upgrades to convergence in expectation by uniform integrability:
\[
\mathrm{Power}(\tilde{\mathcal S})
\longrightarrow
\frac{\mathbb P(\hat A\le t(\alpha),\ A<c)}{\mathbb P(A<c)}
=
\mathbb P(\hat A\le t(\alpha)\mid A<c),
\]
which is exactly the claim (noting $\hat A=g(X)$).
\end{proof}

\subsection{Proof of Proposition~\ref{prop:optimal_threshold_power}: optimal power among threshold rules}
\label{app:proofs:optimal_power}

We give a fully explicit proof for the threshold-direction used in Proposition~\ref{prop:power_denoised}, i.e.\ selecting
$\mathcal S_t=\{j:\hat A_{n+j}\le t\}$. (If one uses $\{g(X)\ge t\}$ instead, replace $\hat A$ by $-\hat A$.)

\begin{proof}[Proof of Proposition~\ref{prop:optimal_threshold_power}]
Fix the alignment score $\hat A=g(X)$.
For any threshold $t\in\mathbb R$, define the score-threshold rule
\[
\mathcal S_t:=\{j\in[m]: \hat A_{n+j}\le t\}.
\]
The population FDR of this rule is
\[
\mathrm{FDR}(t)
=
\frac{\mathbb P(A\ge c,\ \hat A\le t)}{\mathbb P(\hat A\le t)}
=
\frac{H(t)}{R(t)},
\]
with the convention $0/0:=0$.
The population power of this rule is
\[
\mathrm{Pow}(t)
:=
\mathbb P(\hat A\le t\mid A<c).
\]

Observe that $\mathrm{Pow}(t)$ is non-decreasing in $t$:
if $t_2>t_1$ then $\{\hat A\le t_1\}\subseteq\{\hat A\le t_2\}$, hence
$\mathbb P(\hat A\le t_1\mid A<c)\le \mathbb P(\hat A\le t_2\mid A<c)$.
Therefore, among all feasible thresholds satisfying $\mathrm{FDR}(t)\le\alpha$,
the power is maximized by taking the \emph{largest} feasible threshold. By definition \eqref{eq:def_t_alpha_app},
$t(\alpha)$ is precisely the supremum of the feasible set $\{t: H(t)/R(t)\le\alpha\}$.
Hence $\mathcal S_{t(\alpha)}$ maximizes $\mathrm{Pow}(t)$ among score-threshold rules under the FDR constraint.
\end{proof}

\subsection{Corollary: asymptotic optimality of DCA among score-threshold rules}
\label{app:proofs:cor_opt}

For completeness (and to make the ``maximize power subject to FDR'' objective explicit),
we record the standard corollary combining cutoff consistency with Proposition~\ref{prop:optimal_threshold_power}.

\begin{corollary}[Asymptotic optimality of DCA among score-threshold rules]
\label{cor:asymptotic_optimality_dca_app}
Under the assumptions of Theorem~\ref{thm:bh_cutoff_consistency_proxy},
\[
\lim_{n,m\to\infty}\mathrm{Power}(\tilde{\mathcal S})
=
\sup_{t:\,H(t)/R(t)\le \alpha}\ \mathbb P(\hat A\le t\mid A<c)
=
\mathbb P(\hat A\le t(\alpha)\mid A<c).
\]
\end{corollary}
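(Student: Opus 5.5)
The corollary is a direct combination of two earlier results, and I would prove it as two equalities.

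The second equality, $\sup_{t:\,H(t)/R(t)\le\alpha}\mathbb P(\hat A\le t\mid A<c)=\mathbb P(\hat A\le t(\alpha)\mid A<c)$, comes from Proposition~\ref{prop:optimal_threshold_power}. There is one detail: that proposition says the supremum is attained at $t(\alpha)$, which requires $t(\alpha)$ itself to be feasible. I would check this from Assumption~\ref{assum:boundary_reg_app}. Part (ii) makes $H$ and $R$ continuous, since $\hat A$ has a continuous marginal. Part (iii) gives $H(t)/R(t)<\alpha$ just to the left of $t(\alpha)$, and letting $t\uparrow t(\alpha)$ yields $H(t(\alpha))/R(t(\alpha))\le\alpha$. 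Then monotonicity of $t\mapsto\mathbb P(\hat A\le t\mid A<c)$ shows the supremum over the feasible set equals the value at its right endpoint $t(\alpha)$.

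For the first equality, I would reuse the proof of Proposition~\ref{prop:power_denoised}. Its only inputs are cutoff consistency $\tilde t_{n,m}\xrightarrow{p}t(\alpha)$, which Theorem~\ref{thm:bh_cutoff_consistency_proxy} supplies under exactly the present assumptions, and the i.i.d.\ structure of test candidates from Assumption~\ref{assum:cond_exch_app}. The steps are:
\begin{itemize}
\item Write the realized power as the ratio $\big(\tfrac1m\sum_j \mathbbm 1\{\hat A_{n+j}\le\tilde t_{n,m}\}G_j\big)/\big(\tfrac1m\sum_j G_j\vee \tfrac1m\big)$.
\item Show the numerator converges in probability to $\mathbb P(\hat A\le t(\alpha),A<c)$ and the denominator to $\mathbb P(A<c)$, which I assume is positive.
\item Conclude by bounded convergence, since the ratio lies in $[0,1]$.
\end{itemize}

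The step I expect to be the main obstacle is the numerator, because the cutoff $\tilde t_{n,m}$ is random and depends on the test scores, so the law of large numbers cannot be applied directly with the cutoff fixed. My plan is a sandwich argument. Fix $\delta>0$. On the event $|\tilde t_{n,m}-t(\alpha)|\le\delta$, which has probability tending to one, the numerator lies between the deterministic-threshold averages at $t(\alpha)-\delta$ and $t(\alpha)+\delta$. The weak law of large numbers applies to each of these, giving limits $\mathbb P(\hat A\le t(\alpha)\pm\delta,A<c)$. Continuity of the distribution of $\hat A$ (Assumption~\ref{assum:boundary_reg_app}(i)) makes both limits tend to $\mathbb P(\hat A\le t(\alpha),A<c)$ as $\delta\downarrow0$. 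Equivalently, the Glivenko--Cantelli theorem over half-lines gives uniform convergence in $t$, which handles the random cutoff in one step.

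Chaining the two equalities yields the corollary.
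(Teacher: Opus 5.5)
Your proposal is correct and follows the same route as the paper, which chains the power limit from Proposition~\ref{prop:power_denoised} (via the cutoff consistency of Theorem~\ref{thm:bh_cutoff_consistency_proxy}) with the threshold optimality of Proposition~\ref{prop:optimal_threshold_power}. Your extra checks fill in details the paper passes over: you verify via continuity and Assumption~\ref{assum:boundary_reg_app}(iii) that $t(\alpha)$ is itself feasible, and you handle the data-dependent cutoff $\tilde t_{n,m}$ by a sandwich or Glivenko--Cantelli argument, whereas the paper's power proof conditions on the cutoff as if it were independent of the test scores.
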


\begin{proof}
By Proposition~\ref{prop:power_denoised} and Theorem~\ref{thm:bh_cutoff_consistency_proxy},
$\mathrm{Power}(\tilde{\mathcal S})\to \mathbb P(\hat A\le t(\alpha)\mid A<c)$.
By Proposition~\ref{prop:optimal_threshold_power}, the right-hand side equals the supremum of $\mathrm{Pow}(t)$
over feasible thresholds. This proves the claim.
\end{proof}

\section{Experimental Protocols, Robustness Checks, and Additional Benchmarks}
\label{app:exp_details}

This appendix collects the experimental material omitted from the main text: the common protocol,
implementation choices, synthetic data-generating mechanisms (DGPs), robustness and sensitivity checks,
and additional semi-synthetic/real-data benchmarks.
We first state the fixed pipeline used across experiments, then describe the main synthetic settings
(Settings~1--4), then collect ablations and robustness checks, and finally report the IHDP, external benchmark,
and multi-treatment studies. This ordering mirrors the main experimental claims: safety and power under noisy
counterfactual proxies, robustness to covariate shift, and external validity beyond the base synthetic setting.
The code for all our experiments is available at \url{https://anonymous.4open.science/r/Anonymous_code_fgdx}.

\textbf{What each setting is designed to stress-test.}
\begin{itemize}
    \item \textbf{Setting 1 (Gaussian + heteroskedastic):}
    an unbiased DR proxy with conditional variance strongly dependent on X, isolating the cleanest version of the proxy-noise ranking barrier by washing out valid ranking information.
    \item \textbf{Setting 2 (Hard overlap + heavy tails):}
    a much noisier DR proxy via near-violations of overlap and heavy-tailed (Student-$t$) outcome noise, which stresses both proxy ranking and variance estimation and empirically favors \emph{conservative} denoising (smaller $\rho$).
    \item \textbf{Setting 3 (Mixture / mean / scale covariate shift):} 
    breaks exchangeability between calibration (source) and candidate pool (target) covariates, and uses weighted conformal $p$-values as a drop-in calibration correction without changing the rest of DCA.
    \item \textbf{Setting 4 (Covariance / rotation shift):} a harder covariate shift where the mean can match but covariance/geometry differs,
    illustrating that weighting is not only for trivial mean shifts.
    \item \textbf{Setting 5 (IHDP semi-synthetic):} 
     realistic covariates and treatment assignment, with ground-truth CATE available as the benchmark provides simulated potential outcomes (via $\mu_0,\mu_1$) that are synthetic by construction (not from observed true counterfactuals in IHDP).
\end{itemize}

Across settings, variations are introduced only through the data-generating process (DGP) or the shift mechanism; the learning pipeline---including sample splitting, learners, BH selection, and the thresholding rule---is kept identical, except that covariate-shift settings employ weighted conformal $p$-values.

\subsection{Common Protocol Across Experiments}
\label{app:common_protocol}

This subsection defines the fixed pipeline used before any setting-specific DGP or benchmark details.
Robustness checks that vary parts of this protocol are collected later in Appendix~\ref{app:robustness_ablations}.

\subsubsection{Sample splitting and leakage control}
\label{app:protocol_splitting}

Each dataset is split into four disjoint parts:
$\mathcal{D}_{\mathrm{tr1}}$ for nuisance/variance estimation,
$\mathcal{D}_{\mathrm{tr2}}$ for training the alignment predictor $g$,
$\mathcal{D}_{\mathrm{cal}}$ for conformal calibration,
and $\mathcal{D}_{\mathrm{test}}$ as an unlabeled candidate pool for selection and evaluation.
\textbf{Implementation:} nuisance models $(\hat\mu_0,\hat\mu_1,\hat e)$ and variance models are trained \emph{only} on $\mathcal{D}_{\mathrm{tr1}}$.
The DR pseudo-outcome $\phi_i$ and the variance estimate $\hat V(X_i)$ are then computed on $\mathcal{D}_{\mathrm{tr2}}$ and $\mathcal{D}_{\mathrm{cal}}$
using these $\mathcal{D}_{\mathrm{tr1}}$-trained models. Thus, for any unit used in $g$-training or calibration, its outcome is never used to fit the nuisance models.
\textbf{Clarification: }We use sample-splitting to control information leakage; no additional secondary cross-fitting is performed on $\mathcal{D}_{\mathrm{tr2}} \cup \mathcal{D}_{\mathrm{cal}}$ in the experimental code (this is theoretically feasible but unnecessary, and would introduce extra implementation complexity).

\subsubsection{Proxy construction and denoising scale}
\label{app:protocol_proxy}

Our denoising identity is naturally written in \emph{squared error}:
\[
\mathbb{E}\big[(\hat\tau(X)-\phi)^2\mid X\big]=A(X)^2+\Var(\phi\mid X),
\quad A(X):=|\hat\tau(X)-\tau(X)|.
\]
Therefore, throughout experiments we implement proxy scores on the squared scale and optionally take a square-root only
for reporting/interpretability. Concretely:
\[
\widetilde A_i^2 := (\hat\tau(X_i)-\phi_i)^2,\qquad
\check A_i^2(\rho) := \big(\widetilde A_i^2-\rho\,\hat V(X_i)\big)_+.
\]
\textbf{Naive baseline is defined as the same proxy family with $\rho=0$:}
$\check A_i^2(0)=\widetilde A_i^2$.
This alignment ensures any gain is attributable to \emph{variance subtraction}, not to a change of metric
(e.g., absolute vs squared).

\subsubsection{Conformal $p$-values, BH selection, and evaluation metrics}
\label{app:protocol_conformal_eval}

Let $s(x)=g(x)$ be the alignment score (lower means more reliable).
We form the ``null'' subset on calibration as
\[
\mathcal{I}_0=\{i\in\mathcal{D}_{\mathrm{cal}}:\ \text{proxy}_i \ge c\},
\]
where $c$ is an application-level acceptability threshold (Section~\ref{sec:problem}) that can be instantiated in many reasonable ways depending on the use case; in experiments it is fixed using only calibration-side information and kept identical across methods.
(See code for concrete instantiations/ablations at {\url{https://anonymous.4open.science/r/Anonymous_code_fgdx}}.)

Here, the proxy is method-specific for ours/naive, but for certain baselines (Ensemble/CQR) the null mask is defined using the \emph{shared} raw proxy $(\hat\tau-\phi)^2$ to keep comparability.
Then for a test candidate $j$,
\[
p_j \;=\; \frac{1+\#\{i\in\mathcal{I}_0:\ s(X_i)\le s(X_{j})\}}{|\mathcal{D}_{\mathrm{cal}}|+1}.
\]
Smaller $p_j$ corresponds to unusually small score relative to the ``bad'' calibration subset, hence more likely reliable.
When exact ties occur (most notably because $\check A^2=(\widetilde A^2-\rho\hat V)_+$ can truncate scores at zero),
the tie-safe implementation replaces the displayed comparison by the randomized lexicographic comparison in
Appendix~\ref{app:randomized_ties}; the ablation in Appendix~\ref{app:robustness_ties} verifies that this choice does not drive the empirical curves.

For each target level $\alpha$, we apply BH to $\{p_j\}_{j\in\mathcal{D}_{\mathrm{test}}}$ to obtain $\mathcal S(\alpha)$.
When ground-truth CATE is available (synthetic and semi-synthetic IHDP), we compute on the \emph{test} pool:
\[
\mathrm{FDP}(\alpha)=\frac{|\mathcal S(\alpha)\cap\mathcal H_0|}{|\mathcal S(\alpha)|\vee 1},\quad
\mathrm{Power}(\alpha)=\frac{|\mathcal S(\alpha)\cap\mathcal H_1|}{|\mathcal H_1|\vee 1},
\]
where $\mathcal H_0=\{A\ge c\}$ and $\mathcal H_1=\{A<c\}$ are defined on the \emph{test} candidate pool.

\subsubsection{Learners, baselines, and denoising-strength selection}
\label{app:rho_selection}

Unless stated otherwise:
Random Forests are used for $\hat\mu_t(\cdot)$ and $\hat V(\cdot)$, logistic regression for $\hat e(\cdot)$,
and a Random Forest regressor for $g(\cdot)$.
All models use the same features $X$ (no handcrafted feature expansion),
and we standardize covariates for linear/logistic models.
We intentionally avoid heavy hyperparameter tuning to emphasize that DCA is a \emph{wrapper}:
gains come from denoising + conformal selection, not model-specific tweaks.

Baselines:
(i) \textbf{Ensemble Std}: train $M$ independent nuisance models on $\mathcal{D}_{\mathrm{tr1}}$ with different seeds and score by $\mathrm{sd}(\hat\tau^{(m)}(x))$;
(ii) \textbf{CQR Width}: fit quantile regressors for $Y\mid X,T=t$ (GBDT quantile loss) and score by prediction interval width.

$\rho$ controls how aggressively we subtract estimated conditional variance. Larger $\rho$ improves SNR when $\hat V$ is accurate,
but risks over-subtraction when $\hat V$ is noisy (especially under heavy tails / hard overlap), which can flip proxy labels near $c$.
Hence, the empirically optimal $\rho$ is \emph{regime-dependent}.

\textbf{Why not simply set $\rho=1$?}
On the squared scale, the raw DR proxy has the approximate decomposition
\[
\widetilde A_i^2
\approx
A_i^2+V_i+\varepsilon_i,
\qquad
V_i:=\Var(\phi_i\mid X_i),
\]
where $\varepsilon_i$ denotes the residual proxy noise. If $\widehat V_i=V_i+\delta_i$, then after denoising
\[
\check A_i^2(\rho)
=
\big(\widetilde A_i^2-\rho\widehat V_i\big)_+
\approx
\big(A_i^2+(1-\rho)V_i+\varepsilon_i-\rho\delta_i\big)_+ .
\]
Thus, $\rho=1$ is attractive only in the idealized goal of reconstructing $A_i^2$ when $V_i$ is known accurately.
For selective inference, however, the downstream target is not unbiased reconstruction of $A_i^2$; it is stable classification
of calibration units relative to the tolerance boundary $c$.
When $\widehat V$ is noisy, larger $\rho$ amplifies the perturbation $-\rho\delta_i$ and can over-subtract high-variance points,
creating proxy/oracle label flips near $c$.
This is exactly the quantity controlled by Proposition~\ref{prop:finite_sample_approx_fdr}:
\[
\widehat\Delta_{\mathrm{cal}}(\rho)
=
\frac{1}{n}\sum_{i=1}^n
\left|
\mathbbm 1\{\check A_i(\rho)\ge c\}
-
\mathbbm 1\{A_i\ge c\}
\right|.
\]
The practical role of $\rho$ is therefore to improve ranking signal while keeping $\widehat\Delta_{\mathrm{cal}}(\rho)$ small.
Moderate denoising can dominate full subtraction because it removes a substantial portion of variance-driven ranking noise
without making boundary labels overly sensitive to variance-estimation error.

\textbf{A fixed, data-splitting tuning rule (no test peeking).}
For each candidate $\rho$ in a prespecified grid, we run the full DCA pipeline using the same splitting protocol.
We choose $\rho$ on a \emph{validation slice carved out from the outcome-observed reference side}
(so selection never uses test outcomes). Concretely, we pick the $\rho$ that maximizes average power
over a small grid of target levels $\alpha$ subject to a mild FDR feasibility constraint
(realized FDR not exceeding the nominal line by more than a small slack).
This rule is fixed across all experiments; only the candidate grid changes by setting (below).
In semi-synthetic experiments, the validation slice has simulator-provided CATE and the objective can be evaluated directly.
In purely real-data deployment, where oracle CATE errors are unavailable, the same principle suggests using a conservative
pre-specified $\rho$ or a reference-side proxy-stability criterion rather than tuning on the final candidate pool.

\textbf{Default interpretation.}
The sensitivity plots should be read as evidence for stable regions, not as single-point hyperparameter optimization.
In the standard heteroskedastic Gaussian setting, the useful region is a moderate band around $\rho\approx0.5$--$0.8$;
under hard overlap, heavy tails, or finite semi-synthetic folds, the preferred value shifts smaller because $\widehat V$ is less stable.
Accordingly, a practical default is to start with moderate subtraction and lower $\rho$ in regimes where inverse-propensity weights,
heavy tails, or noisy variance regressions make over-subtraction more likely.

\textbf{Grids used in sensitivity plots.}
We additionally report sensitivity plots over $\rho$ and noise scale parameters $\sigma$ to show robustness and to
communicate that DCA does \emph{not} require fragile tuning.
The $\rho$ and $\sigma$ grids are listed below. Values $\rho>1$ are included only as stress tests for over-subtraction;
the conservative denoising rule in the main theory uses $\rho\in[0,1]$.
\begin{itemize}
    \item Setting 1: $\rho\in\{0.25,0.5,0.75,1.0,1.5\}$, $\sigma\in\{0.5,1.0,1.5\}$.
    \item Setting 2: $\rho\in\{0.1,0.15,0.2,0.25,0.5\}$, $\sigma\in\{0.5,1.0,1.5\}$.
    \item Setting 3: $\rho\in\{0.25,0.5,0.75,1.0,1.5\}$, $\sigma\in\{0.5,1.0,1.5\}$.
    \item Setting 4: $\rho\in\{0.25,0.5,0.75,1.0,1.5\}$, $\sigma\in\{0.5,1.0,1.5\}$.
    \item IHDP (Setting 5): $\rho\in\{0.1,0.15,0.2,0.25,0.5\}$.
\end{itemize}

\subsection{Synthetic Data: DGPs for Settings 1--4}
\label{app:dgp_all}

Across synthetic settings, covariates are $d$-dimensional (default $d=10$).
Potential outcomes follow
\[
Y(t)=\mu_0(X)+t\cdot\tau(X)+\epsilon_t,
\]
with propensity, $(\mu_0,\tau)$, and noise depending on the setting. We provide both
a compact mathematical description and an \emph{implementation-matched} pseudocode
specification to make the DGPs exactly reproducible.

\textbf{Common sampling template.}
In all settings, once $(X,e(X),\mu_0(X),\tau(X),\epsilon_0,\epsilon_1)$ are specified,
we generate
\[
T\sim \mathrm{Bernoulli}(e(X)),\qquad
Y_0=\mu_0(X)+\epsilon_0,\qquad
Y_1=\mu_0(X)+\tau(X)+\epsilon_1,\qquad
Y=T\,Y_1+(1-T)\,Y_0,
\]
and record $\mathrm{ITE\_true}=\tau(X)$.

\begin{algorithm}
\caption{Synthetic DGP template (all settings)}
\label{alg:dgp_template}
\small
\begin{algorithmic}[1]
\REQUIRE Sample size $n$, dimension $d$, setting-specific functions: $e(\cdot), \mu_0(\cdot), \tau(\cdot)$ and noise generator.
\ENSURE $\{(X_i,T_i,Y_i,\tau(X_i),e(X_i))\}_{i=1}^n$.
\STATE Sample covariates $X\in\mathbb{R}^{n\times d}$ from the setting-specific covariate distribution.
\STATE Compute propensity vector $e\gets e(X)$ and clip elementwise to the setting-specific overlap interval.
\STATE Sample $T_i\sim\mathrm{Bernoulli}(e_i)$ independently for $i=1,\dots,n$.
\STATE Compute $\mu_0\gets \mu_0(X)$ and $\tau\gets \tau(X)$.
\STATE Draw noise $\epsilon_0,\epsilon_1$ from the setting-specific noise distribution (may depend on $X$).
\STATE Set $Y_0\gets \mu_0+\epsilon_0$, $Y_1\gets \mu_0+\tau+\epsilon_1$, and $Y\gets T\odot Y_1+(1-T)\odot Y_0$.
\STATE Output $X,T,Y,\mathrm{ITE\_true}\gets\tau,\ e\_\mathrm{true}\gets e$.
\end{algorithmic}
\end{algorithm}


\subsubsection{Setting 1: Gaussian outcomes with heteroskedastic noise}
\label{app:setting1}

\textbf{DGP.}
Covariates are i.i.d.\ $X\sim\mathcal N(0,I_d)$. Treatment assignment follows a logistic propensity:
\[
e(X)=\mathrm{sigmoid}(0.5X_0-0.5X_2),\quad e(X)\in[0.1,0.9]\ \text{(clipped into $[\eta,1-\eta]$ to enforce overlap)}.
\]
Baseline and CATE:
\[
\mu_0(X)=2\sin(X_0)+\max(X_1,0),\qquad
\tau(X)=\frac{2}{1+e^{-X_0}}+X_2.
\]
Noise is Gaussian but heteroskedastic:
\[
\epsilon_t\sim \mathcal N\big(0,\sigma^2(X)\big),\quad
\sigma(X)=\sigma_{\mathrm{base}}\,(1+0.5|X_0|),\quad
\sigma_{\mathrm{base}}\in\{0.5,1.0,1.5\}.
\]

\textbf{Why this setting is a clean ``proxy-noise barrier'' showcase.}
In this regime the DR pseudo-outcome $\phi$ is well-behaved in expectation, but the \emph{conditional proxy variance} $\Var(\phi\mid X)$ changes sharply with $X$ through $\sigma(X)$.
As a result, the raw proxy $(\hat\tau-\phi)^2$ is systematically inflated in high-variance regions even when $\hat\tau$ is accurate there, so naive proxy ranking becomes a poor surrogate for the true error ranking.
This is exactly the failure mode DCA targets: it is not a modeling issue in $\hat\tau$, but a \emph{noise-dominated reliability signal}.


\textbf{Results and denoising choice (main text vs appendix).}
Figure~\ref{fig:synth_noshift} (main text) reports a representative curve using $\rho=0.65$, while Figure~\ref{fig:setting1} (appendix) shows sensitivity across $\rho$ and noise scales.

\emph{What Fig.~\ref{fig:synth_noshift} demonstrates.}
Across all three noise scales, DCA produces a materially larger selected reliable set at the same target FDR level:
the \textbf{power curve turns on earlier} (selection becomes nontrivial at smaller $\alpha$) and rises faster once selection starts.
In contrast, the plug-in proxy (and uncertainty baselines such as ensemble std / CQR width) typically require much larger $\alpha$
before any meaningful yield appears.
Operationally, this is the gain we care about: \textbf{at a fixed acceptability tolerance, DCA makes the same conformal+BH pipeline actually select}
under heteroskedastic proxy noise, instead of remaining overly conservative and nearly empty.

\emph{How Fig.~\ref{fig:setting1} supports the choice of $\rho$ and robustness.}
The sensitivity plot shows a \textbf{broad band of moderate denoising} (roughly $\rho\in[0.5,0.8]$) where DCA achieves
(i) realized FDR that is closest to the target line $y=x$ (i.e., most nearly calibrated rather than overly conservative),
\emph{while simultaneously} (ii) attaining strong power.
We highlight $\rho=0.65$ as a representative choice because it sits near the center of this stable region and empirically offers an attractive
trade-off: FDR tracks $\alpha$ closely across $\sigma_{\mathrm{base}}\in\{0.5,1.0,1.5\}$, yet the selection yield remains high.
When $\rho$ is smaller, proxy variance is under-subtracted and behavior reverts toward the noisy plug-in ranking;
when $\rho$ is too large, over-subtraction can shrink proxy scores and weaken the calibration separation, reducing power.
Overall, the figure indicates DCA’s improvements are \textbf{not} a knife-edge tuning artifact but persist across a reasonably wide range of $\rho$ values.

\begin{figure*}
    \centering
    \includegraphics[width=\textwidth]{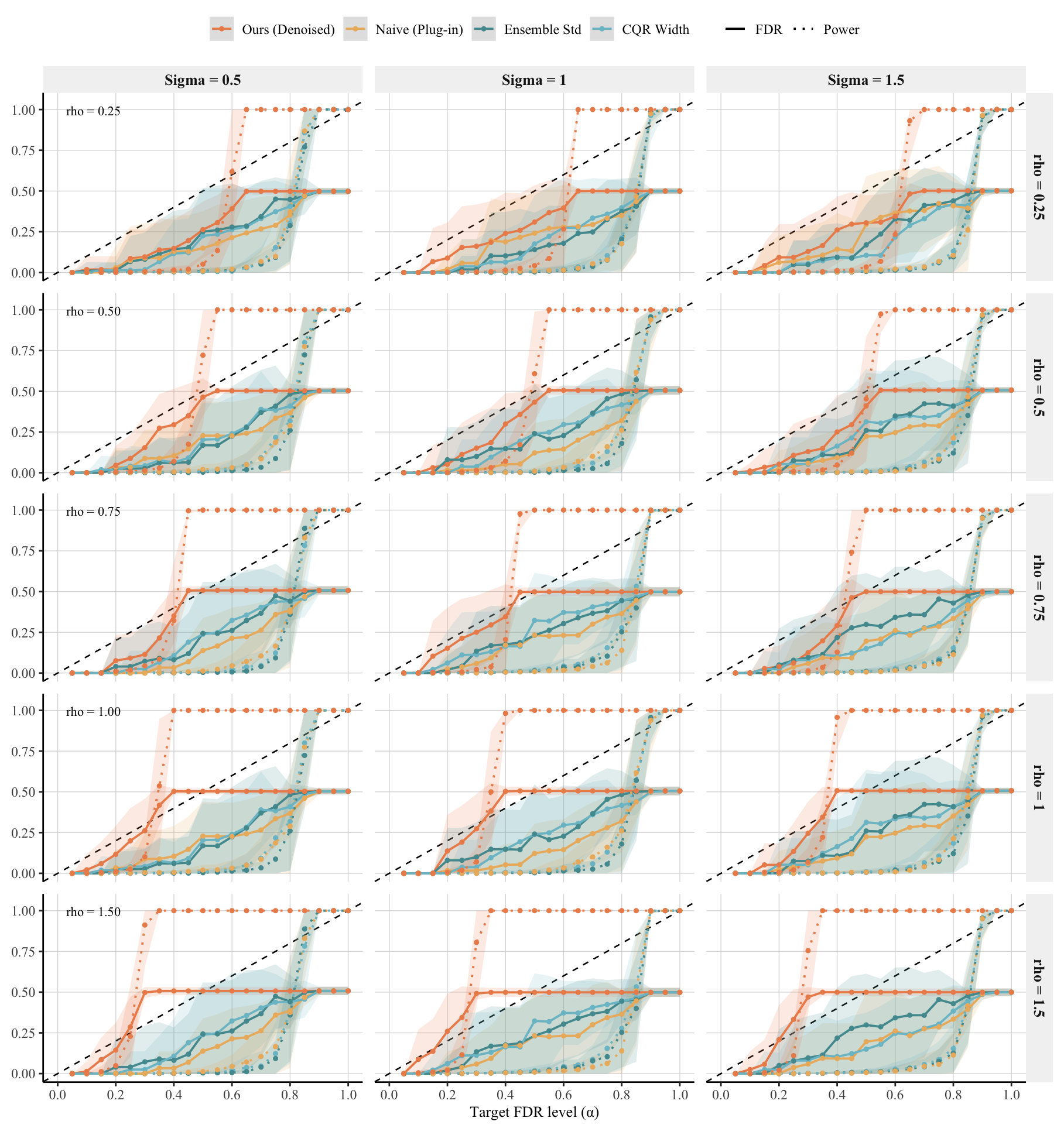}
    \caption{\textbf{Setting 1 sensitivity (Gaussian + heteroskedastic).}
    Realized FDR and power across $\rho\in\{0.25,0.5,0.75,1.0,1.5\}$ and $\sigma_{\mathrm{base}}\in\{0.5,1.0,1.5\}$.
    A wide range of moderate denoising strengths improves selection yield while preserving conservative or near-nominal FDR,
    consistent with DCA mitigating the heteroskedastic proxy-noise barrier rather than exploiting fragile hyperparameter tuning.}
    \label{fig:setting1}
\end{figure*}

\begin{algorithm}
\caption{Setting 1 DGP}
\label{alg:dgp_setting1}
\small
\begin{algorithmic}[1]
\REQUIRE $n,d,\sigma_{\mathrm{base}}$
\STATE Sample $X\in\mathbb R^{n\times d}$ with i.i.d.\ $\mathcal N(0,1)$ entries.
\STATE $e \leftarrow \mathrm{sigmoid}(0.5X_{:,0}-0.5X_{:,2})$; $e\leftarrow \mathrm{clip}(e,0.1,0.9)$.
\STATE Sample $T_i\sim \mathrm{Bernoulli}(e_i)$ independently for $i=1,\dots,n$.
\STATE $\mu_0 \leftarrow 2\sin(X_{:,0})+\max(X_{:,1},0)$.
\STATE $\tau \leftarrow 2/(1+\exp(-X_{:,0})) + X_{:,2}$.
\STATE $\sigma \leftarrow \sigma_{\mathrm{base}}\cdot (1+0.5|X_{:,0}|)$ \hfill (elementwise)
\STATE Draw $\epsilon_0,\epsilon_1\in\mathbb R^n$ with $\epsilon_t(i)\sim\mathcal N(0,\sigma_i^2)$ independently.
\STATE $Y(0)\leftarrow \mu_0+\epsilon_0$; \quad $Y(1)\leftarrow \mu_0+\tau+\epsilon_1$.
\STATE $Y\leftarrow T\odot Y(1)+(1-T)\odot Y(0)$.
\ENSURE $(X,T,Y,\mathrm{ITE\_true}=\tau,\ e_{\mathrm{true}}=e)$
\end{algorithmic}
\end{algorithm}

\textbf{DGP-Setting1 (pseudocode).}
\begin{verbatim}
X ~ N(0, I_d)
logit = 0.5*X[:,0] - 0.5*X[:,2]
e_true = sigmoid(logit);  e_true = clip(e_true, 0.1, 0.9)
T ~ Bernoulli(e_true)

mu0 = 2*sin(X[:,0]) + max(X[:,1], 0)
tau = 2/(1+exp(-X[:,0])) + X[:,2]

noise_scale = sigma_base * (1 + 0.5*abs(X[:,0]))
eps0 ~ Normal(0, noise_scale);  eps1 ~ Normal(0, noise_scale)

y0 = mu0 + eps0
y1 = mu0 + tau + eps1
Y  = T*y1 + (1-T)*y0
return (X,T,Y, ITE_true=tau, e_true)
\end{verbatim}

\subsubsection{Setting 2: Harder overlap + heavy-tailed noise}
\label{app:setting2}

\textbf{DGP.}
Covariates: $X\sim\mathcal N(0,I_d)$.
Harder propensity via nonlinear logit + amplification:
\[
\ell(X)=1.2\sin(X_0)-X_2+0.8X_0X_3,\qquad
e(X)=\mathrm{sigmoid}\big(2\,\ell(X)\big),\quad e(X)\in[0.05,0.95]\ \text{(clipped)}.
\]
Baseline and CATE (richer nonlinear forms):
\[
\mu_0(X)=2\sin(X_0)+|X_1|+0.5X_2+0.5\cos(X_3)+0.3X_0X_4,
\qquad
\tau(X)=\frac{1.5}{1+e^{-X_0}}+\max(X_2,0)+0.5\sin(X_5).
\]
Heavy-tailed heteroskedastic noise:
\[
\epsilon_t=\sigma(X)\,\xi_t,\quad \xi_t\sim t_{\nu},\ \nu=3,\qquad
\sigma(X)=\sigma_{\mathrm{base}}\big(1+0.6|X_0|+0.3|X_6|\big),
\]
with the same sweep $\sigma_{\mathrm{base}}\in\{0.5,1.0,1.5\}$.

\textbf{Why this setting is a ``stress test'' for proxy-based selection.}
Setting~1 isolates heteroskedastic proxy variance under light-tailed noise; Setting~2 deliberately compounds \emph{two} real-world failure modes:
(i) \textbf{harder overlap}, where many propensities approach the clipping bounds so inverse-propensity factors amplify the DR correction;
(ii) \textbf{heavy tails}, where rare but extreme residuals dominate squared losses.
Together these effects make the raw proxy $(\hat\tau-\phi)^2$ not just biased as a ranking signal, but \emph{highly unstable}:
a small fraction of calibration points can exhibit proxy spikes that swamp comparisons, and variance estimation $\hat V(X)$ is itself more fragile.
This is precisely where a denoising method must show it is not merely ``helpful when everything is nice,'' but still produces \emph{usable} selections under adversarial noise.

\textbf{Representative results at the best-performing denoising level.}
Figure~\ref{fig:main_setting2_heavytail} in the main text reports the representative curves at $\rho=0.15$, the best-performing denoising strength in this regime.
The key qualitative takeaway is that DCA restores \textbf{early, nontrivial selection}:
for moderate target FDR levels, DCA yields substantially larger power than the plug-in proxy and uncertainty baselines, while remaining controlled/conservative in realized FDR.
Intuitively, by subtracting a \emph{small amount} of estimated DR variance, DCA prevents ``proxy spikes'' caused by heavy-tail/outlier behavior from dominating the null subset comparisons, so the conformal/BH stage can meaningfully discriminate candidates instead of collapsing toward near-empty selections.

\textbf{Why DCA prefers smaller $\rho$ here (and what this teaches).}
Compared to Setting~1, aggressive subtraction is risky because $\hat V(X)$ is harder to estimate under heavy tails and near-extreme propensities.
Large $\rho$ can over-subtract, shrink many proxy labels toward zero, and blur the calibration separation used to form $\mathcal I_0$ (the ``bad'' subset),
which weakens conformal discrimination.
Empirically, the best-performing value is $\rho=0.15$, reflecting a \textbf{conservative denoising strategy}:
even a modest variance subtraction is enough to reduce the dominance of heavy-tail-driven proxy spikes,
while avoiding over-correction when $\hat V$ is noisy.
This is a useful message for deployment: \emph{DCA does not require large corrections to help; small, stable denoising can already unlock substantial selection gains in difficult regimes.}

\textbf{Sensitivity (robustness rather than knife-edge tuning).}
Figure~\ref{fig:setting2} shows sensitivity over $\rho\in\{0.1,0.15,0.2,0.25,0.5\}$ and $\sigma_{\mathrm{base}}\in\{0.5,1.0,1.5\}$.
Two patterns support the method’s claim:
(i) the optimal region shifts smaller relative to Setting~1, consistent with variance-estimation stress under heavy tails;
(ii) performance degrades smoothly---slightly under-denoising ($\rho=0.1$) trends toward the plug-in behavior,
while over-denoising ($\rho\ge 0.25$) progressively reduces power by washing out the calibration contrast.
The existence of a \emph{stable neighborhood} around $\rho\approx 0.15$ indicates the gains are not due to a brittle hyperparameter coincidence,
but to the intended mechanism: mitigating proxy-noise domination under hard overlap + heavy tails.

\begin{figure}
    \centering
    \includegraphics[width=0.95\textwidth]{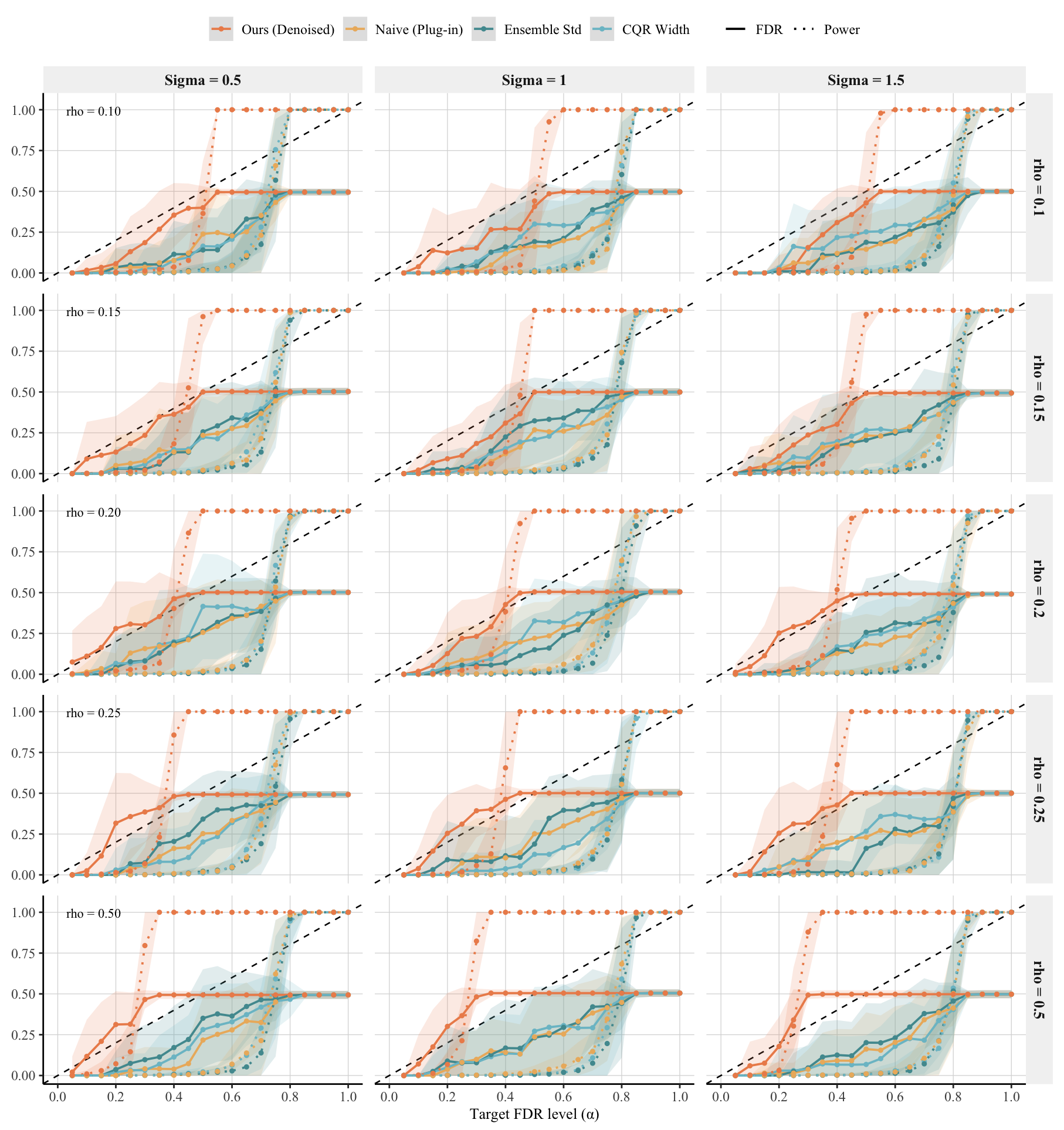}
    \caption{\textbf{Setting 2 sensitivity (hard overlap + heavy tails).}
    Compared to Setting~1, the optimal denoising regime shifts toward smaller $\rho$:
    conservative subtraction improves yield while maintaining controlled/conservative realized FDR,
    whereas overly aggressive subtraction degrades power by over-correcting under noisy $\hat V$ estimation.}
    \label{fig:setting2}
\end{figure}

\begin{algorithm}
\caption{Setting 2 DGP}
\label{alg:dgp_setting2}
\small
\begin{algorithmic}[1]
\REQUIRE $n,d,\sigma_{\mathrm{base}},\nu(=3)$
\STATE Sample $X\sim\mathcal N(0,I_d)$.
\STATE $\ell\leftarrow 1.2\sin(X_{:,0})-X_{:,2}+0.8X_{:,0}\odot X_{:,3}$.
\STATE $e\leftarrow \mathrm{sigmoid}(2\ell)$; $e\leftarrow \mathrm{clip}(e,0.05,0.95)$; sample $T\sim\mathrm{Bernoulli}(e)$.
\STATE $\mu_0\leftarrow 2\sin(X_{:,0})+|X_{:,1}|+0.5X_{:,2}+0.5\cos(X_{:,3})+0.3X_{:,0}\odot X_{:,4}$.
\STATE $\tau\leftarrow 1.5/(1+\exp(-X_{:,0}))+\max(X_{:,2},0)+0.5\sin(X_{:,5})$.
\STATE $\sigma\leftarrow \sigma_{\mathrm{base}}\cdot(1+0.6|X_{:,0}|+0.3|X_{:,6}|)$.
\STATE $\epsilon_0\leftarrow \sigma\odot t_\nu$; $\epsilon_1\leftarrow \sigma\odot t_\nu$ (i.i.d.\ draws).
\STATE $Y(0)\leftarrow \mu_0+\epsilon_0$; $Y(1)\leftarrow \mu_0+\tau+\epsilon_1$; $Y\leftarrow T\odot Y(1)+(1-T)\odot Y(0)$.
\ENSURE $(X,T,Y,\mathrm{ITE\_true}=\tau,\ e_{\mathrm{true}}=e)$
\end{algorithmic}
\end{algorithm}

\textbf{DGP-Setting2 (pseudocode).}
\begin{verbatim}
X ~ N(0, I_d)
ell   = 1.2*sin(X[:,0]) - 1.0*X[:,2] + 0.8*X[:,0]*X[:,3]
e_true = sigmoid(2.0*ell);  e_true = clip(e_true, 0.05, 0.95)
T ~ Bernoulli(e_true)

mu0 = 2*sin(X[:,0]) + abs(X[:,1]) + 0.5*X[:,2] + 0.5*cos(X[:,3]) + 0.3*X[:,0]*X[:,4]
tau = 1.5/(1+exp(-X[:,0])) + max(X[:,2],0) + 0.5*sin(X[:,5])

noise_scale = sigma_base * (1 + 0.6*abs(X[:,0]) + 0.3*abs(X[:,6]))
eps0 = noise_scale * t_df(df=3)
eps1 = noise_scale * t_df(df=3)

y0 = mu0 + eps0
y1 = mu0 + tau + eps1
Y  = T*y1 + (1-T)*y0
return (X,T,Y, ITE_true=tau, e_true)
\end{verbatim}


\subsubsection{Setting 3: Covariate shift (mean / scale / mixture) + weighted conformal $p$-values}
\label{app:setting3}

\textbf{Source vs.\ target covariates (shift families).}
To match the bounded-likelihood-ratio assumption used by weighted conformal calibration,
we sample covariates from \emph{truncated} Gaussians with bounded support.
Let
\[
\mathcal X_B:=\{x\in\mathbb R^d:\ \|x\|_\infty \le B\},
\]
and define $\mathcal N_{\mathcal X_B}(\mu,\Sigma)$ as a Gaussian $\mathcal N(\mu,\Sigma)$
conditioned on $X\in\mathcal X_B$.
We generate a reference/source covariate distribution
\[
X_{\mathrm{src}}\sim \mathcal N_{\mathcal X_B}(0,I_d),
\]
and a target covariate distribution obtained by one of the following parameterized shifts
(with $\Delta=0.8$ and $B=3$ in the shown runs):
\begin{itemize}
\item \textbf{Mean shift:} $X_{\mathrm{tgt}}\sim\mathcal N_{\mathcal X_B}(\mu, I_d)$ where $\mu_0=\Delta,\ \mu_1=-0.5\Delta$, and other coordinates have mean $0$.
\item \textbf{Scale shift:} $X_{\mathrm{tgt}}\sim\mathcal N_{\mathcal X_B}(0,\Sigma)$ where $\Sigma=\mathrm{diag}((1+\Delta)^2,(1+\Delta)^2,1,\ldots,1)$.
\item \textbf{Mixture shift:} $X_{\mathrm{tgt}}\sim 0.7\,\mathcal N_{\mathcal X_B}(0,I_d)+0.3\,\mathcal N_{\mathcal X_B}(\mu',I_d)$ where $\mu'_0=\Delta,\ \mu'_2=0.5\Delta$.
\end{itemize}
With bounded support $\mathcal X_B$, the density ratio $w(x)=dP_{\rm tgt}/dP_{\rm src}(x)$ is bounded on $\mathcal X_B$.

\textbf{Structural mechanism is invariant given $X$ (same as Setting~1).}
\textbf{Crucially, the conditional data-generating mechanism $Y\mid(X,T)$ is identical across source and target.}
Given any realized $X$ (source or target),
\[
e(X)=\mathrm{sigmoid}(0.5X_0-0.5X_2)\ \text{clipped to }[0.1,0.9],\quad
\mu_0(X)=2\sin(X_0)+\max(X_1,0),\quad
\tau(X)=\frac{2}{1+e^{-X_0}}+X_2,
\]
and $\epsilon_t\sim\mathcal N(0,\sigma^2(X))$ with $\sigma(X)=\sigma_{\mathrm{base}}(1+0.5|X_0|)$.
Therefore the shift is \emph{purely marginal} ($P_{\mathrm{src}}(X)\neq P_{\mathrm{tgt}}(X)$) while the causal mechanism remains unchanged.

\textbf{Why this setting matters (and what ``drop-in'' weighted conformal buys us).}
This setting isolates the covariate-shift failure mode: even if the proxy construction and the learned score $s(x)=g(x)$ are unchanged,
\emph{unweighted} conformal calibration implicitly treats calibration samples as if they were drawn from the same $X$-distribution as test candidates.
When $P_{\mathrm{src}}(X)\neq P_{\mathrm{tgt}}(X)$, this breaks the distributional comparison underlying conformal $p$-values and can distort FDR.
Our fix is deliberately minimal and modular: we keep \textbf{every other component identical} (proxy construction, $g$ training, BH),
and \textbf{only} replace conformal counts by importance-weighted sums. This is why we call it a \emph{drop-in} modification.

\textbf{Weights.}
Because $P_{\mathrm{src}}(X)$ and $P_{\mathrm{tgt}}(X)$ are explicitly defined, the code computes the exact density ratio
$w(x)=dP_{\mathrm{tgt}}/dP_{\mathrm{src}}(x)$ in closed form (Gaussian or mixture log-densities) for the chosen shift type,
followed by stability clipping:
\[
w(x)\leftarrow \mathrm{clip}(w(x),\,w_{\min},w_{\max}),\qquad (w_{\min},w_{\max})=(0.05,20).
\]
(When densities are unknown, one may estimate $w$ via a domain classifier; see the main paper for that general recipe.)

\textbf{Weighted conformal $p$-values.}
Let $s(\cdot)$ be the predicted alignment score (output of $g$; smaller means more reliable).
Define the calibration ``bad/null'' subset using the proxy threshold on the squared scale,
\[
\mathcal I_0=\Big\{i\in\mathcal D_{\mathrm{cal}}:\ \check A_i\ge c\Big\}.
\]
Given calibration weights $\{w_i\}_{i\in\mathcal D_{\mathrm{cal}}}$ and a test weight $w_j$,
the code computes the weighted conformal $p$-value
\[
p_j
=
\frac{w_j+\sum_{i\in\mathcal I_0} w_i\,\mathbf 1\{s(X_i)\le s(X_j)\}}
{\sum_{i\in\mathcal D_{\mathrm{cal}}} w_i + w_j}.
\]
This is exactly the ``drop-in'' replacement of counts by weighted sums referenced beneath Algorithm~\ref{alg:causal_fdr}.

\textbf{Why this is an asymptotic extension rather than a finite-sample BH theorem.}
The weighted conformal literature shows that importance weighting corrects the calibration distribution under covariate shift, but the resulting weighted $p$-values are not automatically endowed with the same dependency structure as the unweighted split-conformal case.
In particular, the classical BH guarantee requires a condition such as PRDS or an equivalent positive dependence property on the null $p$-values; weighted conformal $p$-values need not satisfy this condition in general.
Accordingly, our use of BH in the covariate-shift setting should be read as an asymptotic, plug-in deployment rule: weighting corrects the calibration mismatch at the level of the conformal comparison, while asymptotic FDR control still relies on the same proxy-stability and sample-size regime used elsewhere in the paper.
For readers interested in finite-sample control under covariate shift, the right formal object is Weighted Conformalized Selection (WCS), which augments weighted conformal p-values with a leave-one-in pruning step to handle the induced dependence.
We do not re-develop WCS in the main pipeline because it would substantially change the selection procedure; instead, we keep the paper’s core DCA wrapper fixed and use weighted conformal p-values as the minimal drop-in correction for experiments.

\textbf{Tie handling under truncation in the weighted case.}
The truncation $(\widetilde A^2-\rho\widehat V)_+$ can create many exact zeros, so tie handling matters here as well.
The tie-safe construction in Appendix~\ref{app:randomized_ties} applies the same randomized lexicographic ranking idea to the score comparisons.
Operationally, the weighted covariate-shift experiments use the same practical safeguard as the unweighted case:
when needed, we compare standard, jittered, and smoothed tie-breaking variants and obtain qualitatively similar curves
(see Figure~\ref{fig:tie_breaking}).
Thus, the weighted experiments are not driven by a pathological pile-up at zero.

\textbf{Results and representative denoising choice.}
Figure~\ref{fig:main_setting3_shift} in the main text reports a representative covariate-shift run at $\rho=0.60$,
and Figure~\ref{fig:setting3} reports sensitivity over the same $\rho$ grid as Setting~1.
\emph{What Fig.~\ref{fig:main_setting3_shift} illustrates:}
even under a pure marginal shift, the weighted conformal step preserves the intended FDR behavior while allowing DCA to retain its power advantage,
i.e., we do not have to trade off distribution-shift robustness against selection yield.
\emph{What Fig.~\ref{fig:setting3} adds:}
the improvement is stable across shift types/noise levels and does not hinge on a brittle hyperparameter choice;
moderate denoising continues to help, while the weighted calibration prevents the miscalibration that unweighted conformal would incur.

\begin{figure}[t]
    \centering
    \includegraphics[width=\textwidth]{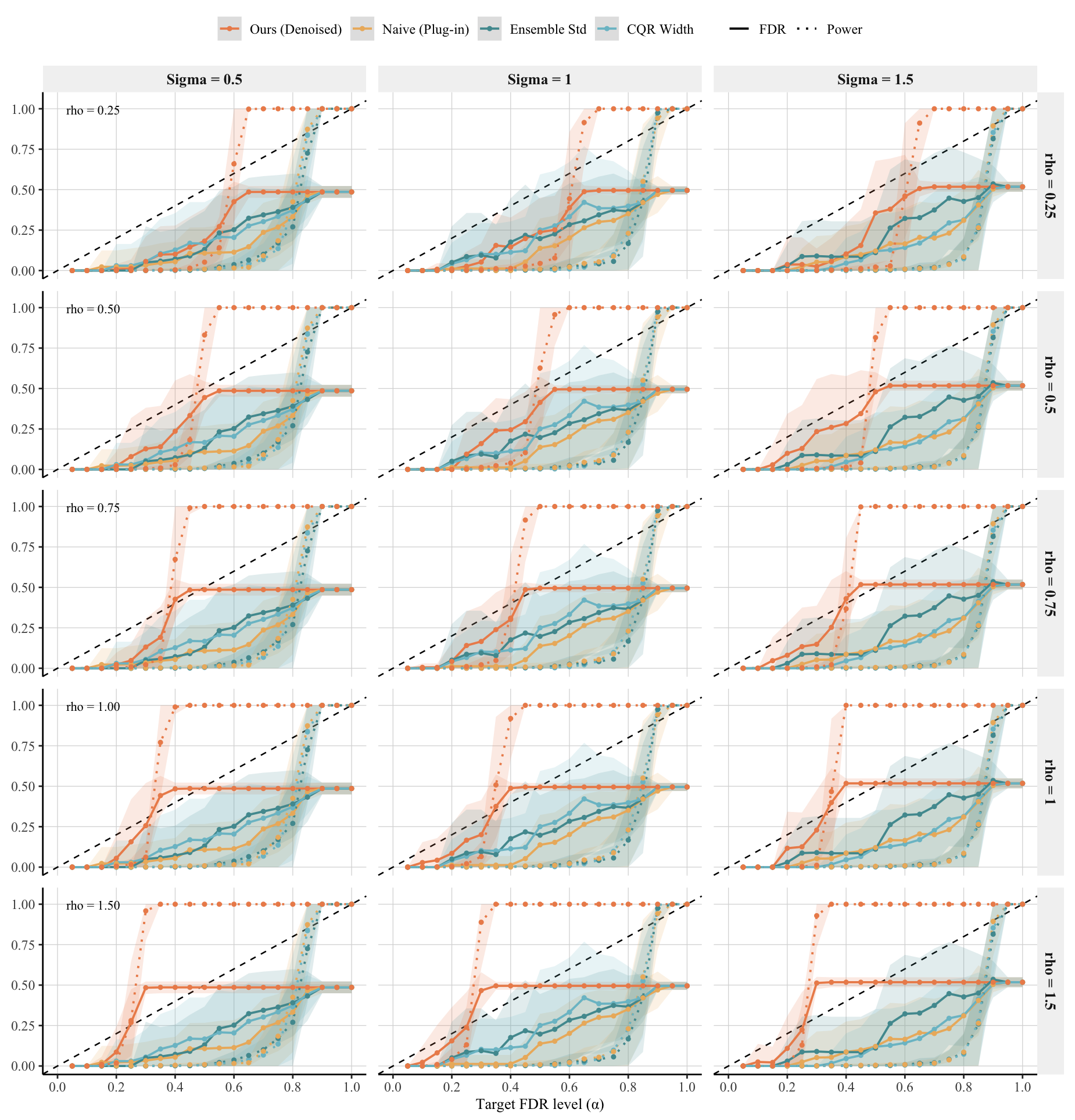}
    \caption{\textbf{Setting 3 sensitivity (covariate shift).}
    Across shift types and noise scales, importance weighting stabilizes calibration under $P_{\mathrm{src}}(X)\neq P_{\mathrm{tgt}}(X)$,
    while moderate denoising strengths continue to improve power without inducing fragile tuning behavior.}
    \label{fig:setting3}
\end{figure}

\begin{algorithm}
\caption{Setting 3 covariate shift + exact density ratio}
\label{alg:dgp_setting3}
\small
\begin{algorithmic}[1]
\REQUIRE shift type in \{\texttt{mean},\texttt{scale},\texttt{mix}\}, magnitude $\Delta(=0.8)$
\STATE Sample source $X_{\mathrm{src}}\sim\mathcal N(0,I_d)$.
\STATE Sample target $X_{\mathrm{tgt}}$ by the selected shift rule (mean/scale/mixture) with magnitude $\Delta$.
\STATE Generate $(T,Y,\tau)$ on both source/target using the Setting~1 structural mechanism conditional on $X$.
\STATE Compute $w(x)=q(x)/p(x)$ exactly from the known $p,q$ (Gaussian or mixture log-densities); clip to $(0.05,20)$.
\ENSURE source data $(X_{\mathrm{src}},T_{\mathrm{src}},Y_{\mathrm{src}},\tau_{\mathrm{src}})$, target candidates $(X_{\mathrm{tgt}},T_{\mathrm{tgt}},Y_{\mathrm{tgt}},\tau_{\mathrm{tgt}})$, weights
\end{algorithmic}
\end{algorithm}

\textbf{DGP-Setting3 (pseudocode).}
\begin{verbatim}
# source:
X_src ~ N(0, I_d)
# target (choose one):
if shift_type=="mean":
    mu = 0; mu[0]=Delta; mu[1]=-0.5*Delta
    X_tgt ~ N(mu, I_d)
if shift_type=="scale":
    Z ~ N(0, I_d); X_tgt=Z; X_tgt[:,0:2] *= (1+Delta)
if shift_type=="mix":
    with prob 0.7: X_tgt ~ N(0, I_d)
    with prob 0.3: X_tgt ~ N(mu', I_d), mu'[0]=Delta, mu'[2]=0.5*Delta

# structural (applied to both X_src and X_tgt):
logit = 0.5*X[:,0] - 0.5*X[:,2]
e_true = sigmoid(logit); e_true = clip(e_true,0.1,0.9)
T ~ Bernoulli(e_true)
mu0 = 2*sin(X[:,0]) + max(X[:,1],0)
tau = 2/(1+exp(-X[:,0])) + X[:,2]
noise_scale = sigma_base*(1+0.5*abs(X[:,0]))
eps0,eps1 ~ Normal(0, noise_scale)
Y = T*(mu0+tau+eps1) + (1-T)*(mu0+eps0)
\end{verbatim}

\subsubsection{Setting 4: Covariance / rotation shift + weighted conformal $p$-values}
\label{app:setting4}

\textbf{Source vs.\ target covariates (geometry shift families).}
As in Setting~3, we sample $X$ from truncated Gaussians supported on $\mathcal X_B=\{x:\|x\|_\infty\le B\}$
to ensure a bounded density ratio for importance weighting.
Reference covariates remain $X_{\mathrm{src}}\sim\mathcal N(0,I_d)$.
Target covariates follow $X_{\mathrm{tgt}}\sim\mathcal N(0,\Sigma_{\mathrm{tgt}})$ where $\Sigma_{\mathrm{tgt}}$ is constructed by one of two \emph{geometry} shifts (with $\Delta=0.6$ in the shown runs):

\begin{itemize}
\item \textbf{Covariance shift (\texttt{cov}):} inflate variances of the first few coordinates and induce correlations among them.
Concretely, set $v_0=1+1.5\Delta,\ v_1=1+1.0\Delta,\ v_2=1+0.8\Delta$ (others $=1$) and start with $\Sigma=\mathrm{diag}(v)$; then set
\[
\Sigma_{01}=\Sigma_{10}=0.35\Delta\sqrt{v_0v_1},\quad
\Sigma_{02}=\Sigma_{20}=0.25\Delta\sqrt{v_0v_2},\quad
\Sigma_{12}=\Sigma_{21}=0.20\Delta\sqrt{v_1v_2}.
\]
Finally, take $\Sigma_{\mathrm{tgt}}=\Sigma$.
\item \textbf{Rotation shift (\texttt{rot}):} construct anisotropic eigenvalues
$\lambda_0=1+2.0\Delta,\ \lambda_1=1+1.2\Delta,\ \lambda_2=1+0.8\Delta,\ \lambda_3=1+0.5\Delta$ (others $=1$),
let $D=\mathrm{diag}(\lambda)$, and set $\Sigma_{\mathrm{tgt}}=RDR^\top$ with a fixed random orthogonal $R$ (seeded QR).
\end{itemize}
Our representative figure uses \texttt{rot} with $\Delta=0.6$.

\textbf{Structural mechanism is invariant given $X$ (same as Setting~1).}
\textbf{Crucially, the conditional mechanism $Y\mid(X,T)$ is unchanged across environments:}
given any realized $X$ (source or target), we reuse the \emph{Setting~1} causal mechanism:
\[
e(X)=\mathrm{sigmoid}(0.5X_0-0.5X_2)\ \text{clipped to }[0.1,0.9],\quad
\mu_0(X)=2\sin(X_0)+\max(X_1,0),\quad
\tau(X)=\frac{2}{1+e^{-X_0}}+X_2,
\]
and $\epsilon_t\sim\mathcal N(0,\sigma^2(X))$ with $\sigma(X)=\sigma_{\mathrm{base}}(1+0.5|X_0|)$.
Therefore the shift is still \emph{purely covariate shift} (only $P(X)$ changes), but in a way that is \emph{not} explainable by a simple mean translation.

\textbf{Why this is a meaningful stress test (beyond mean/scale).}
Unlike Setting~3, here the target distribution can keep mean $0$ while changing \emph{geometry} (correlations / rotations / anisotropy).
This stresses calibration because exchangeability breaks through changes in feature geometry rather than location.
It therefore tests whether the weighted conformal step is genuinely robust to covariate shift---including subtler shifts that can look benign marginally but still change where probability mass concentrates in high-dimensional space.

\textbf{Weights (exact MVN density ratio).}
Because both $P_{\mathrm{src}}(X)=\mathcal N(0,I_d)$ and $P_{\mathrm{tgt}}(X)=\mathcal N(0,\Sigma_{\mathrm{tgt}})$ are known,
the code computes the exact density ratio
\[
w(x)=\frac{\phi_{0,\Sigma_{\mathrm{tgt}}}(x)}{\phi_{0,I}(x)}
\]
using log-density evaluations (Cholesky-based) for numerical stability, and then clips
\[
w(x)\leftarrow \mathrm{clip}(w(x),\,0.05,\,20).
\]
These weights are then used in the same weighted conformal $p$-value formula as in Setting~3 (drop-in replacement of counts by weighted sums).

\textbf{Results and representative denoising choice (appendix).}
Figure~\ref{fig:setting4best} reports a representative geometry-shift run at $\rho=0.55$,
and Figure~\ref{fig:setting4} reports sensitivity over the same $\rho$ grid as Setting~1.
\emph{What Fig.~\ref{fig:setting4best} illustrates:}
even when the shift is purely geometric (rotation/covariance) rather than mean/scale, the weighted conformal step continues to stabilize calibration,
so DCA can translate denoising gains into actual selection yield under the intended FDR target.
\emph{What Fig.~\ref{fig:setting4} adds:}
the behavior is not an artifact of a specific shift instance; performance remains stable across noise levels and $\rho$ values in a moderate band,
with graceful degradation when $\rho$ is too small (reverting toward the noisy proxy ranking) or too large (over-subtraction under imperfect variance estimation).

\begin{figure}[t]
    \centering
    \includegraphics[width=0.8\textwidth]{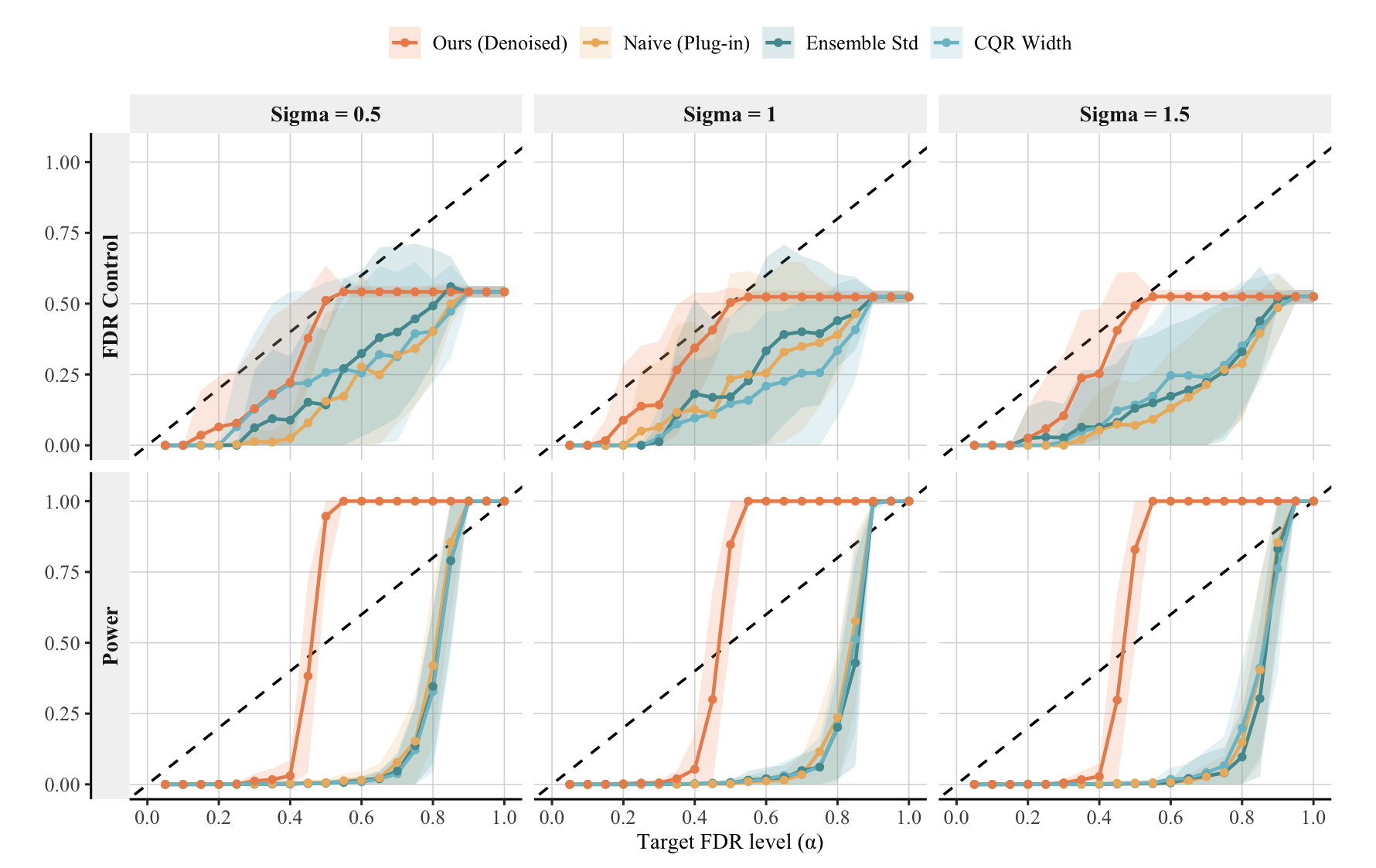}
    \caption{\textbf{Setting 4 representative (covariance/rotation shift).}
    Under geometry-only covariate shift with invariant $Y\mid(X,T)$, importance-weighted conformal calibration remains a drop-in fix:
    it stabilizes FDR behavior while allowing DCA to preserve higher selection yield than unweighted/uncertainty-style baselines.}
    \label{fig:setting4best}
\end{figure}

\begin{figure}[t]
    \centering
    \includegraphics[width=\textwidth]{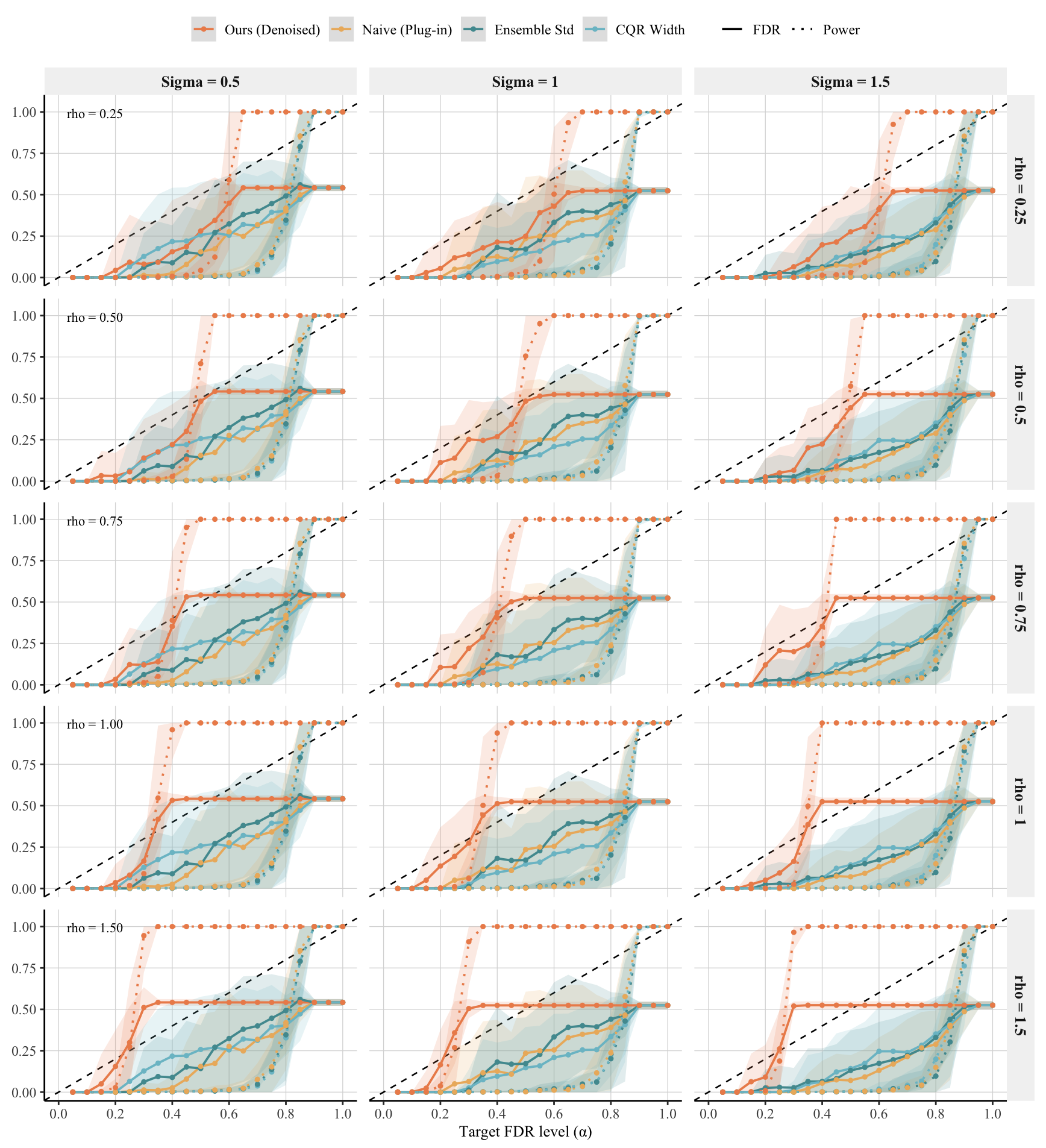}
    \caption{\textbf{Setting 4 sensitivity (covariance/rotation shift).}
    Across geometry-shift constructions and noise scales, weighted $p$-values maintain calibration under $P_{\mathrm{src}}(X)\neq P_{\mathrm{tgt}}(X)$,
    while moderate denoising continues to improve power without relying on fragile tuning.}
    \label{fig:setting4}
\end{figure}

\begin{algorithm}
\caption{Setting 4 covariance/rotation shift + exact MVN density ratio}
\label{alg:dgp_setting4}
\small
\begin{algorithmic}[1]
\REQUIRE shift type in \{\texttt{cov},\texttt{rot}\}, magnitude $\Delta(=0.6)$
\STATE Sample source $X_{\mathrm{src}}\sim\mathcal N(0,I_d)$.
\STATE Build $\Sigma_{\mathrm{tgt}}$ by the selected rule (\texttt{cov} or \texttt{rot}) with $\Delta$.
\STATE Sample target $X_{\mathrm{tgt}}\sim\mathcal N(0,\Sigma_{\mathrm{tgt}})$.
\STATE Generate $(T,Y,\tau)$ on both source/target using the Setting~1 structural mechanism conditional on $X$.
\STATE Compute $w(x)=\varphi_{0,\Sigma_{\mathrm{tgt}}}(x)/\varphi_{0,I}(x)$ via log-densities; clip to $(0.05,20)$.
\ENSURE source data, target candidates, weights
\end{algorithmic}
\end{algorithm}

\textbf{DGP-Setting4 (pseudocode).}
\begin{verbatim}
# source covariates:
X_src ~ N(0, I_d)

# target covariates (choose shift family):
if shift_type=="cov":
    v = ones(d)
    v[0]=1+1.5*Delta; v[1]=1+1.0*Delta; v[2]=1+0.8*Delta
    Sigma = diag(v)
    Sigma[0,1]=Sigma[1,0]=0.35*Delta*sqrt(v[0]*v[1])
    Sigma[0,2]=Sigma[2,0]=0.25*Delta*sqrt(v[0]*v[2])
    Sigma[1,2]=Sigma[2,1]=0.20*Delta*sqrt(v[1]*v[2])
    Sigma_tgt = Sigma
if shift_type=="rot":
    lam = ones(d)
    lam[0]=1+2.0*Delta; lam[1]=1+1.2*Delta
    lam[2]=1+0.8*Delta; lam[3]=1+0.5*Delta
    D = diag(lam)
    R = random_orthogonal(seed=SEED)   # QR-based
    Sigma_tgt = R @ D @ R.T
X_tgt ~ N(0, Sigma_tgt)

# weights (exact MVN ratio in log space, then clip):
logw(x) = logphi_{0,Sigma_tgt}(x) - logphi_{0,I}(x)
w = exp(logw);  w = clip(w, 0.05, 20)

# structural mechanism (applied to both X_src and X_tgt): same as Setting 1
\end{verbatim}

\subsection{Robustness and Ablation Studies}
\label{app:robustness_ablations}

The preceding subsection gives the main synthetic settings and their sensitivity plots. We now collect
additional ablations that vary one component of the pipeline at a time. These experiments are organized
around the corresponding part of the common protocol: sample splitting, denoising/tolerance choices,
tie handling, estimator/proxy compatibility, selection paradigms, and nuisance-estimation stress.

\subsubsection{Sample splitting, calibration size, and cross-fitting}
\label{app:robustness_splitting}

The strict split in Appendix~\ref{app:protocol_splitting} is our validity-first baseline: it separates
nuisance/variance estimation, alignment training, conformal calibration, and final selection.
Figure~\ref{fig:sample_size_scaling} varies calibration size and shows that DCA retains useful power even
when the calibration pool is smaller, so the gains are not an artifact of an unusually large or favorable
calibration split.
Figure~\ref{fig:crossfit_scaling} compares strict splitting with cross-fitted variants across total sample sizes.
Moderate cross-fitting can stabilize selections when the aggregated $p$-values preserve conditionally null
exchangeability, whereas overly aggressive data reuse can become conservative.
These results support using strict splitting as a clean baseline while leaving data-efficient cross-fitted DCA
as a natural extension.

\begin{figure}[t]
    \centering
    \includegraphics[width=\textwidth]{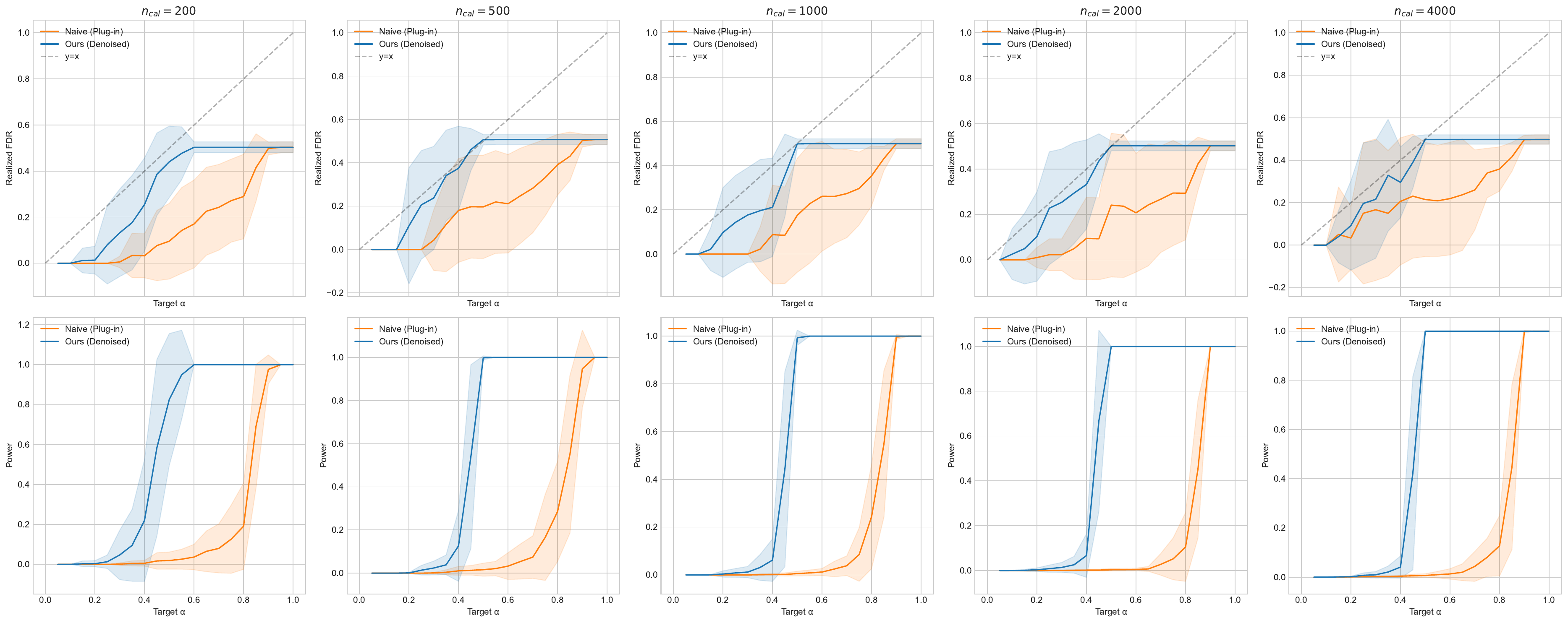}
    \caption{\textbf{Calibration-size scaling.}
    DCA remains useful across calibration sizes, indicating that the baseline sample splitting cost does not erase the benefit of denoising.}
    \label{fig:sample_size_scaling}
\end{figure}

\begin{figure}[t]
    \centering
    \includegraphics[width=\textwidth]{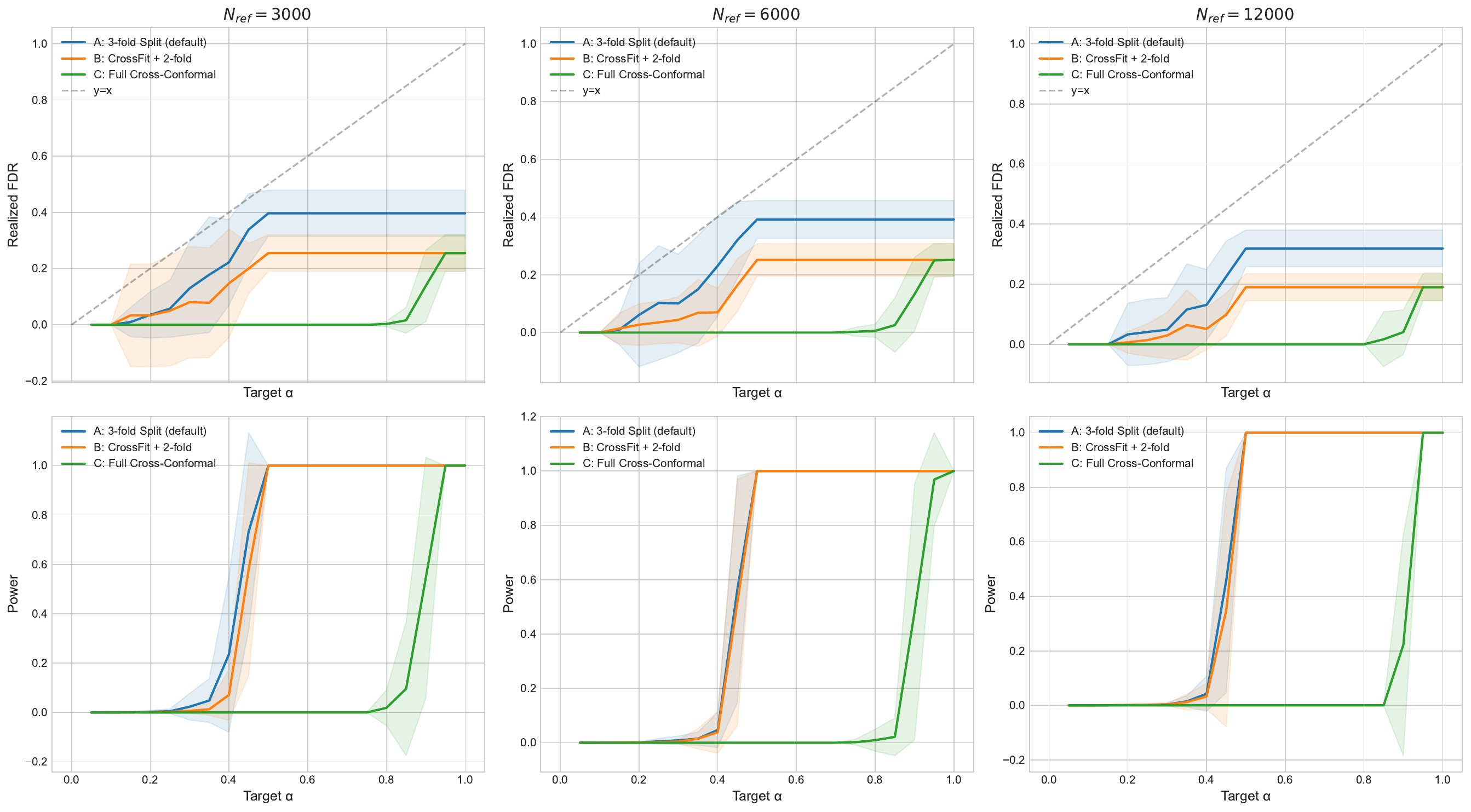}
    \caption{\textbf{Strict splitting versus cross-fitting.}
    Cross-fitted variants across sample sizes show that moderate data reuse can stabilize selections without removing the denoising gain, while strict splitting remains the clean validity-first baseline.}
    \label{fig:crossfit_scaling}
\end{figure}

\subsubsection{Denoising strength, overlap, variance misspecification, and tolerance}
\label{app:robustness_denoising_tolerance}

\textbf{Ablation of denoising strength.}
Panel (a) of Figure~\ref{fig:main_denoising_ablation_misspec} compares no denoising ($\rho=0$), moderate denoising ($\rho=0.65$), full subtraction ($\rho=1.0$), and over-subtraction ($\rho=2.0$) in Setting~1 at $\sigma=1.0$.
Moderate denoising gives the best safety--power trade-off: $\rho=0$ leaves the DR proxy noise-dominated, while aggressive subtraction can over-correct when $\widehat V$ is noisy or when many proxy scores are truncated at zero.
This supports tuning $\rho$ for stable boundary labels rather than simply setting $\rho=1$.

\textbf{Overlap-dependent denoising.}
Figure~\ref{fig:rho_overlap_heatmap} varies overlap quality and $\rho$ at a fixed target level.
When overlap is good, more aggressive denoising can improve separability; when overlap deteriorates, inverse-propensity noise makes $\widehat V$ less stable and smaller $\rho$ is preferred.
This is the empirical counterpart of the boundary-stability argument in Proposition~\ref{prop:finite_sample_approx_fdr}: denoising affects FDR through label flips near the tolerance boundary.

\begin{figure}[t]
    \centering
    \includegraphics[width=0.75\textwidth]{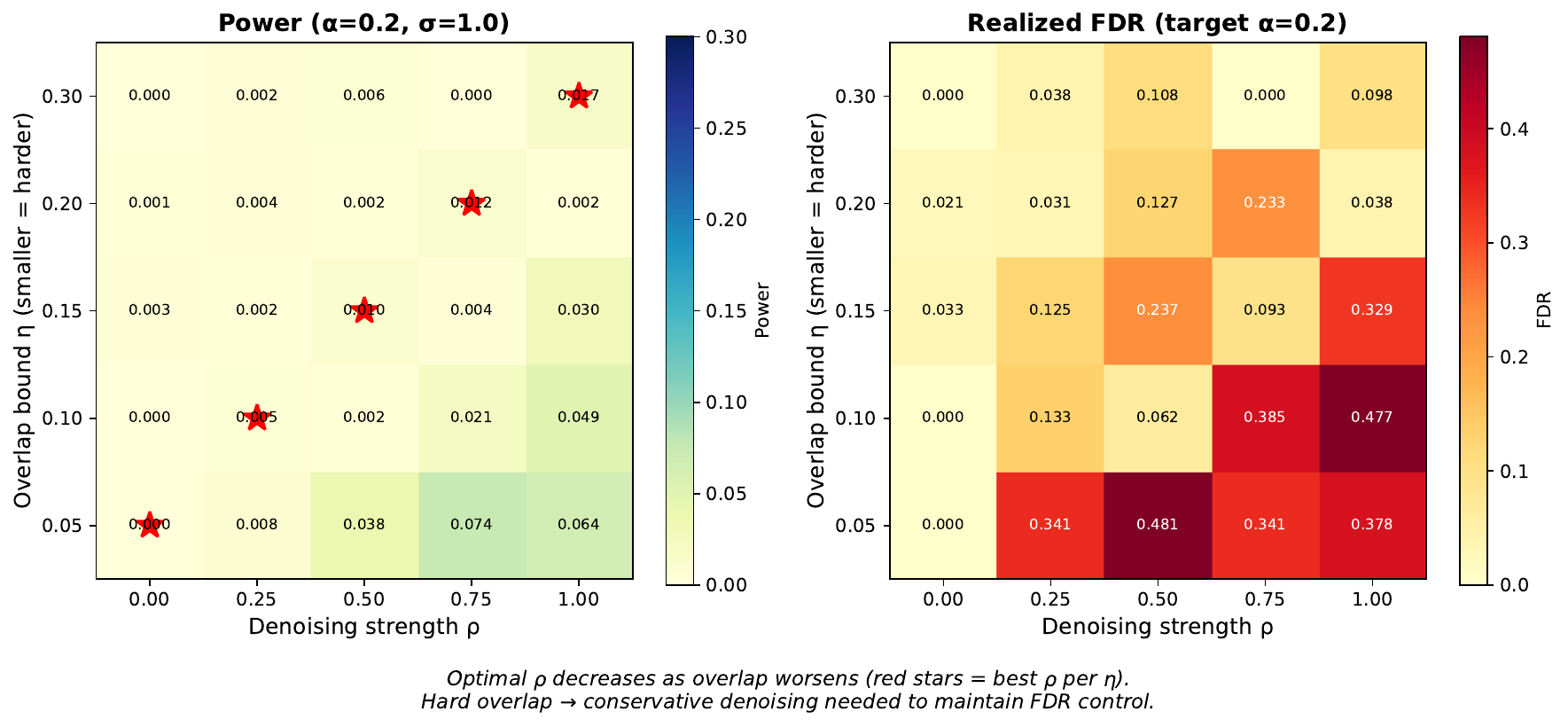}
    \caption{\textbf{$\rho$--overlap interaction.}
    FDR and power over denoising strength and overlap quality in Setting~1.
    Harder overlap shifts the best denoising region toward smaller $\rho$, consistent with conservative subtraction under noisier variance estimates.}
    \label{fig:rho_overlap_heatmap}
\end{figure}

\textbf{Variance misspecification.}
Panel (b) of Figure~\ref{fig:main_denoising_ablation_misspec} perturbs the variance model through well-specified, misspecified, under-estimated, and over-estimated variants.
DCA degrades smoothly rather than catastrophically: imperfect $\widehat V$ mainly reduces power or shifts the preferred denoising level, while realized FDR remains governed by calibration-label stability.
Thus, exact variance recovery is a sufficient route to asymptotic exactness, but the finite-sample object that matters is the induced proxy/oracle mislabeling rate.

\textbf{Tolerance sensitivity.}
Figures~\ref{fig:c_quantile_sensitivity}--\ref{fig:c_absolute_sensitivity} vary the reliability tolerance $c$ using calibration-error quantiles and absolute thresholds.
Across these choices, DCA preserves the same qualitative pattern: denoising raises selection yield while keeping realized FDR near or below the target line.
This confirms that the method is not tied to a single arbitrary tolerance, although $c$ remains an application-level acceptability threshold.

\begin{figure}[t]
    \centering
    \includegraphics[width=0.9\textwidth]{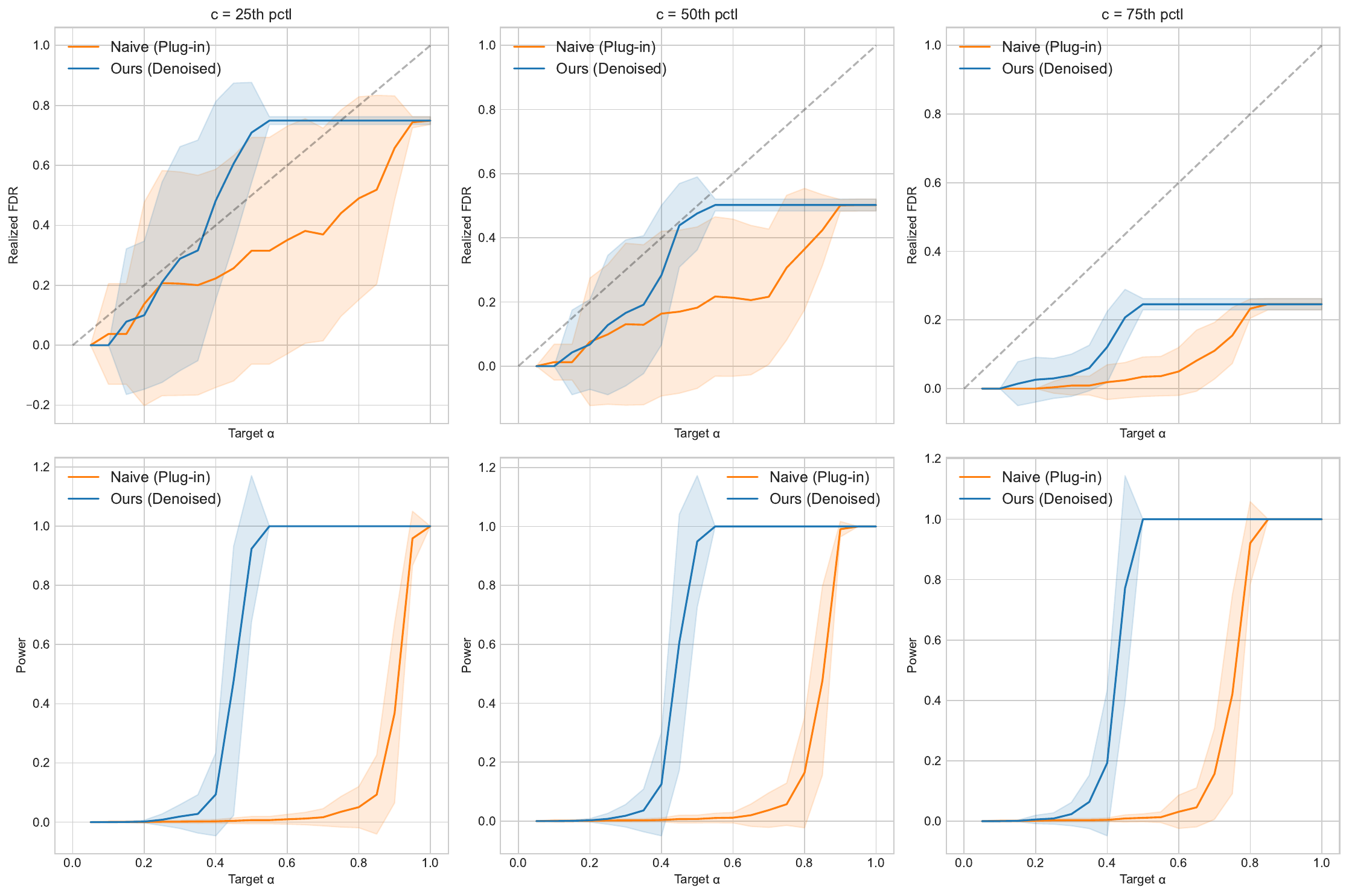}
    \caption{\textbf{Tolerance sensitivity using calibration-error quantiles.}
    Varying $c$ over calibration-side quantiles preserves the same safety--power pattern, indicating that DCA is not a knife-edge tolerance choice.}
    \label{fig:c_quantile_sensitivity}
\end{figure}

\begin{figure}[t]
    \centering
    \includegraphics[width=0.75\textwidth]{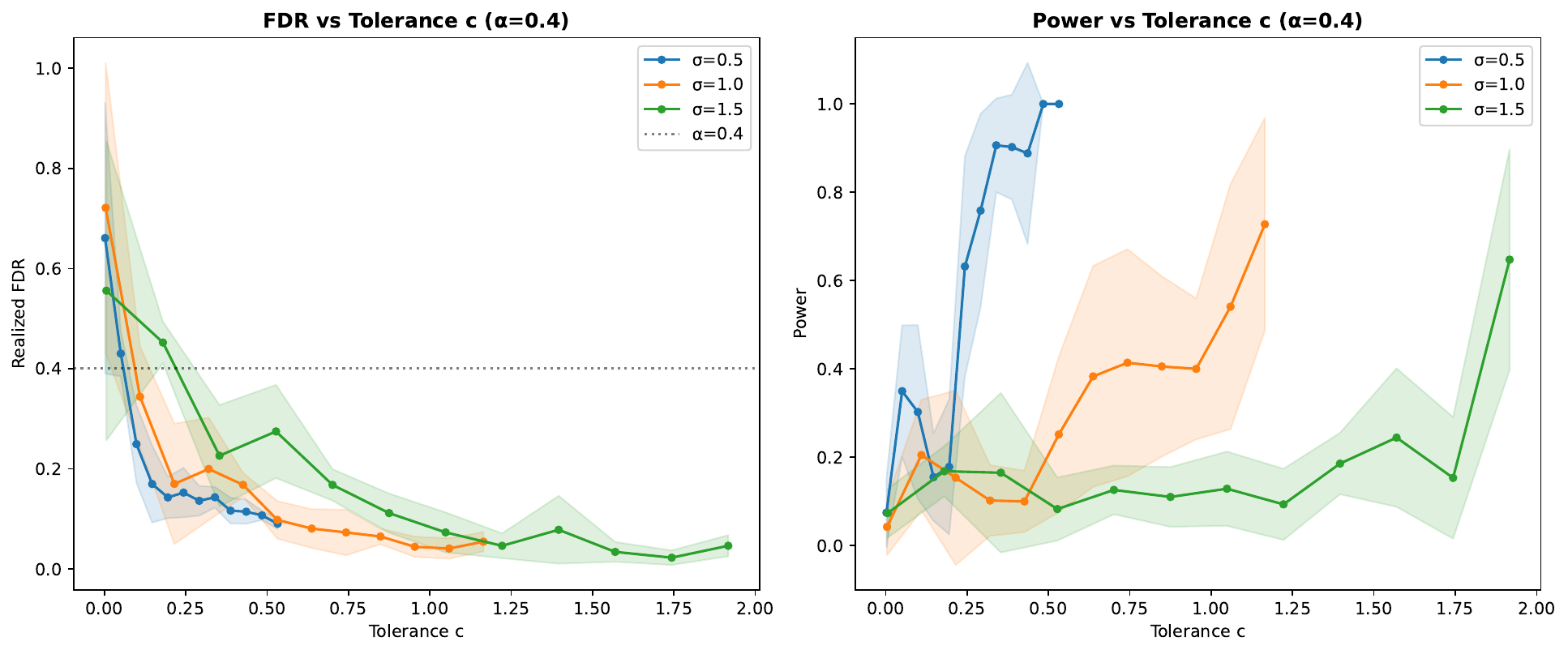}
    \caption{\textbf{Tolerance sensitivity using absolute thresholds.}
    Across noise levels and absolute choices of $c$, denoising improves selection yield while maintaining stable realized FDR.}
    \label{fig:c_absolute_sensitivity}
\end{figure}

\subsubsection{Tie handling under truncation}
\label{app:robustness_ties}

Because $\check A^2=(\widetilde A^2-\rho\widehat V)_+$ can produce ties at zero, the formal tie-safe implementation uses the randomized lexicographic ranks defined in Appendix~\ref{app:randomized_ties}.
We also compare standard, jittered, and smooth tie-breaking variants empirically.
Figure~\ref{fig:tie_breaking} shows similar FDR/power behavior across tie-handling choices, indicating that the empirical results are not driven by deterministic truncation ties.

\begin{figure}[t]
    \centering
    \includegraphics[width=\textwidth]{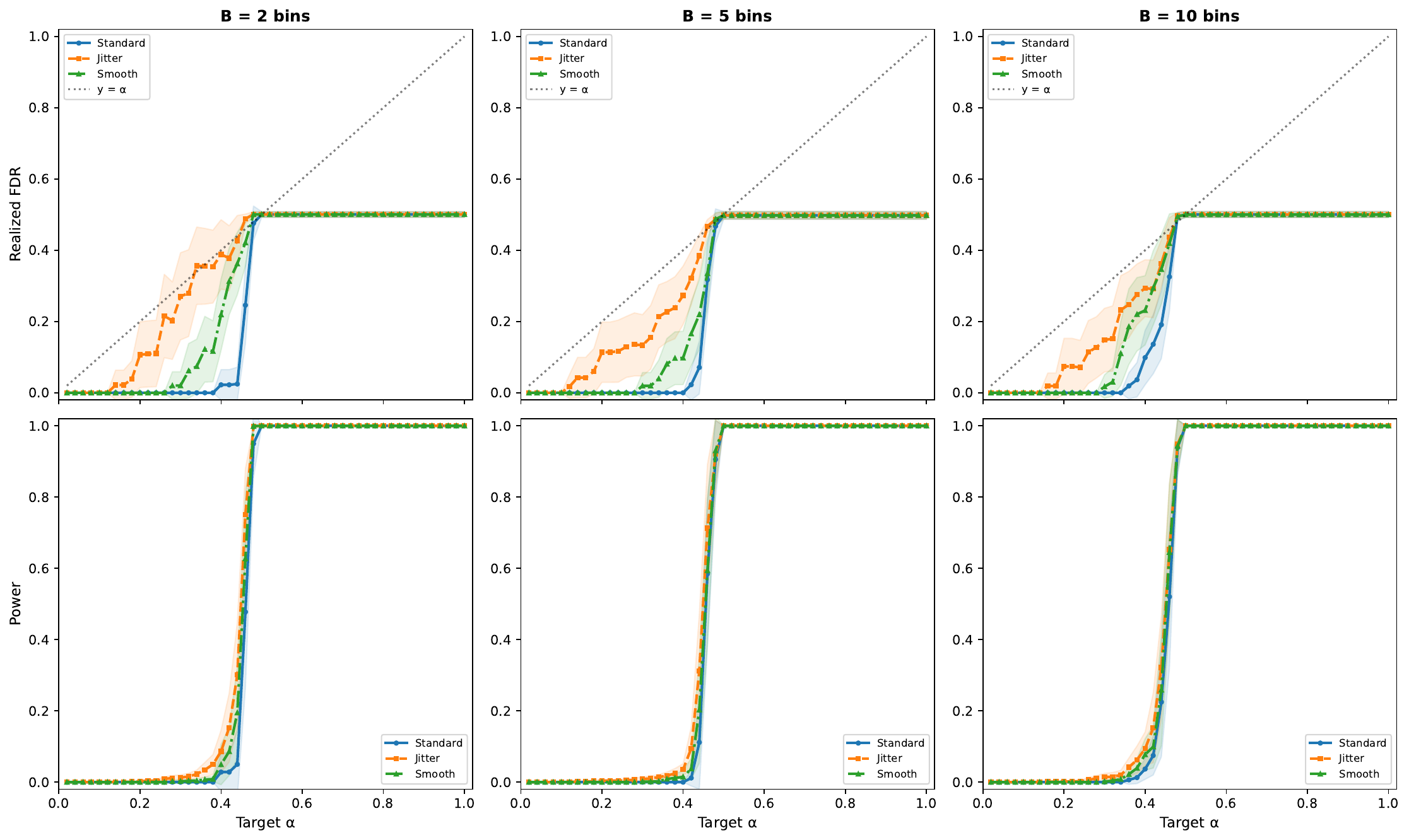}
    \caption{\textbf{Tie handling under truncation.}
    Standard, jittered, and smooth tie-breaking variants show similar FDR/power behavior, suggesting that truncation-induced ties do not drive the empirical results.}
    \label{fig:tie_breaking}
\end{figure}

\subsubsection{Estimator and proxy compatibility}
\label{app:robustness_estimator_proxy}

We additionally vary the base CATE learner and the proxy construction.
Figure~\ref{fig:estimator_agnostic} compares DCA and the naive proxy across Random Forest, GBDT, Linear, and X-learner variants, showing that the denoising gain is not tied to a specific CATE model.
Figures~\ref{fig:if_proxy}--\ref{fig:orthogonal_proxy} compare DR, influence-function-style, AIPW-orthogonal, and R-learner proxy variants.
Across these alternatives, raw proxies can be variance-dominated, while denoising improves the reliability-ranking signal.

\begin{figure}[t]
    \centering
    \includegraphics[width=\textwidth]{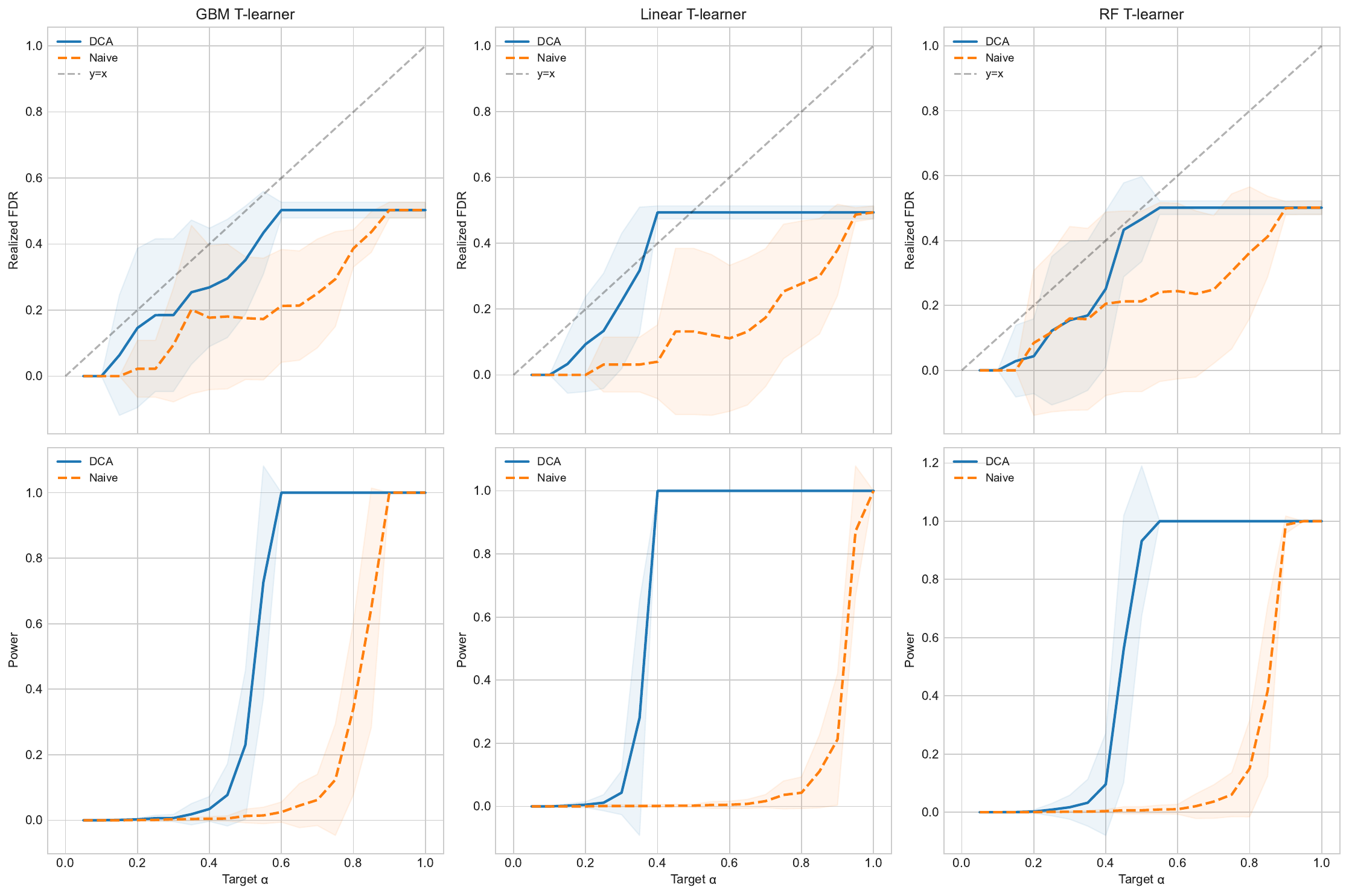}
    \caption{\textbf{CATE estimator agnosticism.}
    DCA versus naive proxy selection across base CATE estimators. Denoising consistently improves selection yield at comparable FDR behavior.}
    \label{fig:estimator_agnostic}
\end{figure}

\begin{figure}[t]
    \centering
    \includegraphics[width=\textwidth]{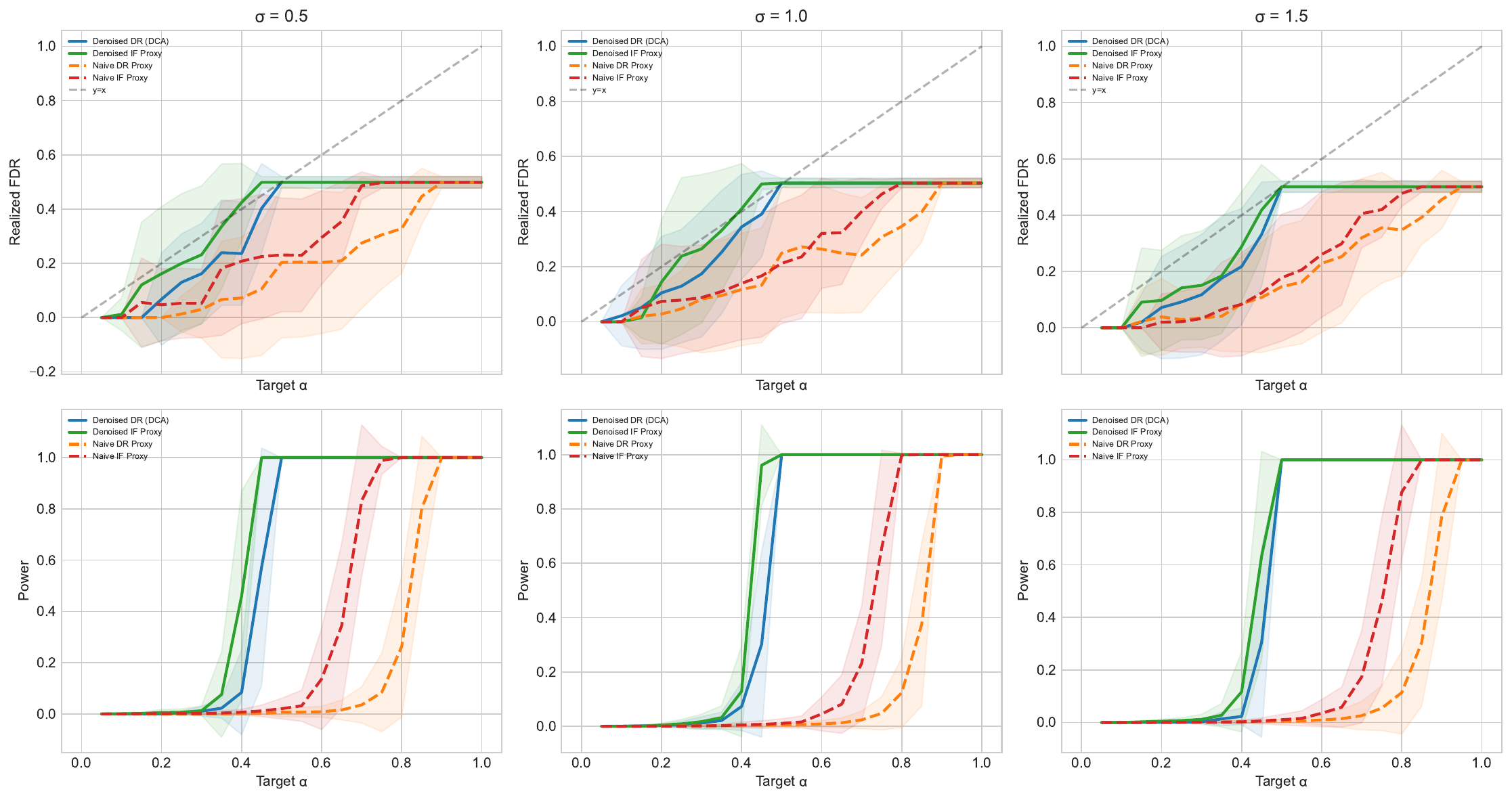}
    \caption{\textbf{Influence-function proxy comparison.}
    Naive and denoised DR/IF proxies across noise levels. Denoising improves the reliability-ranking signal for both proxy families.}
    \label{fig:if_proxy}
\end{figure}

\begin{figure}[t]
    \centering
    \includegraphics[width=\textwidth]{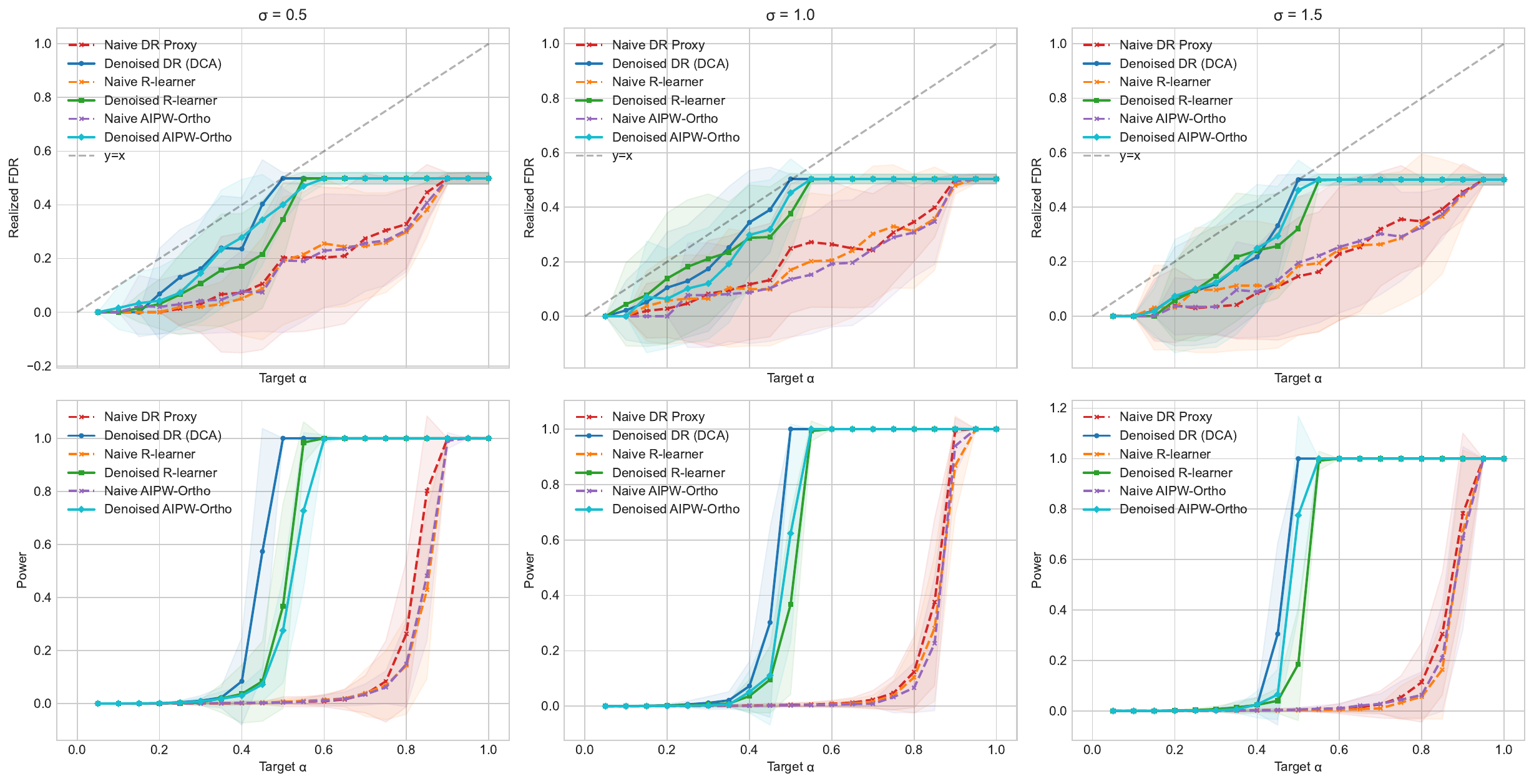}
    \caption{\textbf{Orthogonal and R-learner proxy comparison.}
    Variance-aware denoising remains beneficial for AIPW-orthogonal and R-learner-style proxy constructions, confirming proxy-level compatibility.}
    \label{fig:orthogonal_proxy}
\end{figure}

\subsubsection{Selection paradigms and downstream value}
\label{app:robustness_selection_value}

Figure~\ref{fig:selection_paradigm} compares DCA with naive DR, ensemble uncertainty, Top-$K$ heuristics, and oracle references.
Only conformal/BH procedures are designed to control post-selection FDR; heuristic Top-$K$ rules can select many candidates but lack reliable error control.
Figure~\ref{fig:policy_value} translates selections into downstream policy value and treated-unit counts, showing that DCA's additional discoveries improve utility in the simulated policy environment.

\begin{figure}[t]
    \centering
    \includegraphics[width=0.78\textwidth]{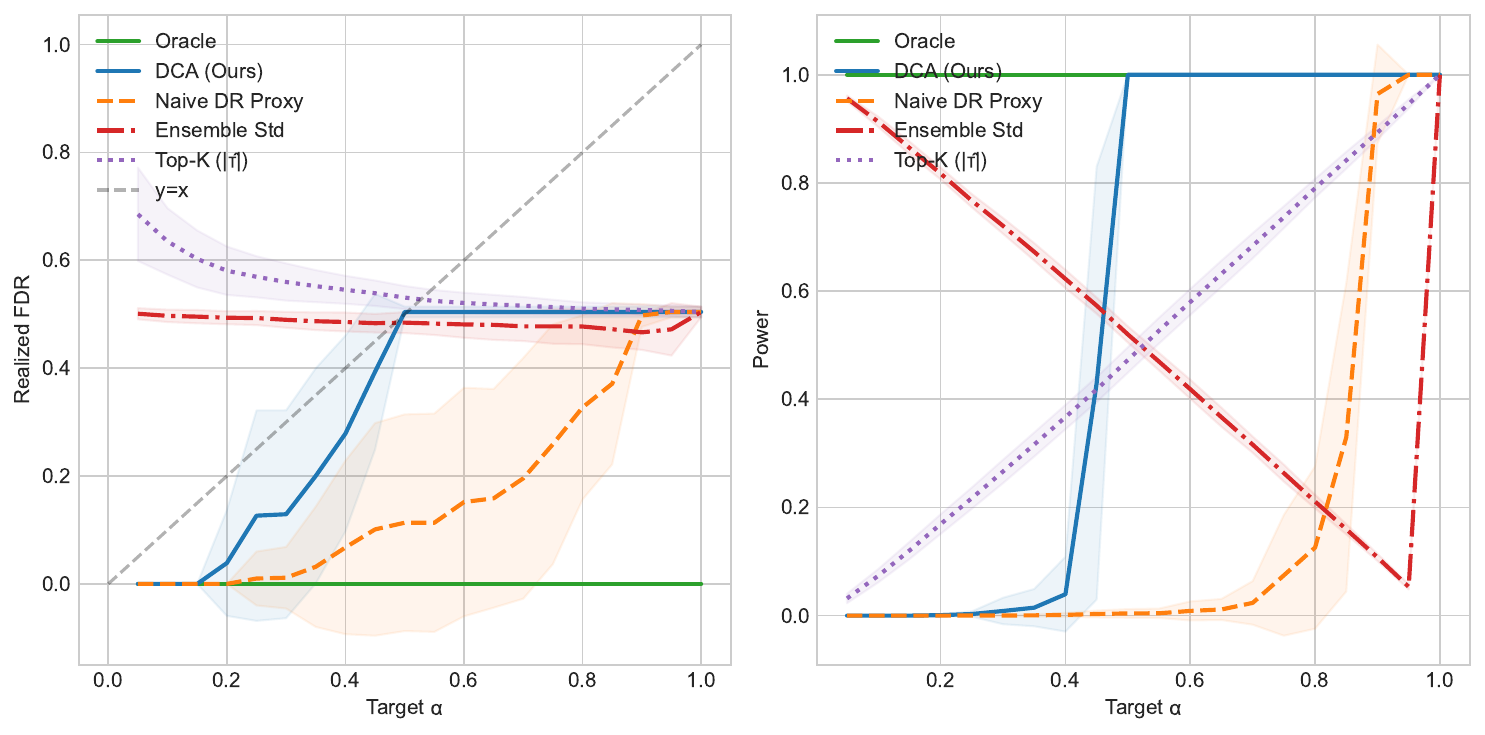}
    \caption{\textbf{Selection paradigm comparison.}
    DCA, naive DR, ensemble uncertainty, Top-$K$, and oracle references. DCA combines FDR control with substantially higher power than proxy or uncertainty baselines.}
    \label{fig:selection_paradigm}
\end{figure}

\begin{figure}[t]
    \centering
    \includegraphics[width=0.9\textwidth]{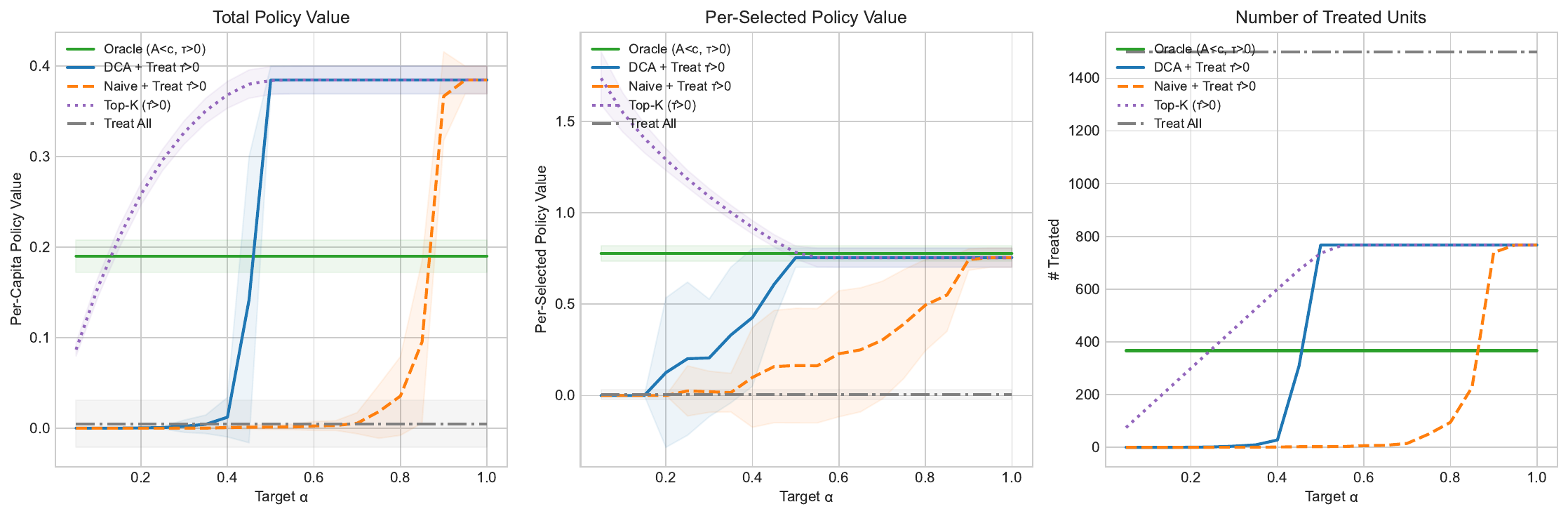}
    \caption{\textbf{Downstream policy value.}
    Per-unit policy value and number of treated units under different selection policies. DCA converts improved reliable selection into higher downstream utility.}
    \label{fig:policy_value}
\end{figure}

\subsubsection{Small-sample and high-dimensional nuisance stress}
\label{app:robustness_nuisance_stress}

Figure~\ref{fig:small_sample} varies total sample size, and Figure~\ref{fig:highdim_nuisance} increases covariate dimension under sparse causal structure.
These settings make nuisance and variance estimation harder.
DCA still improves selection yield when calibration has enough information to select, while difficult nuisance estimation can reduce power through less stable proxy labels.

\begin{figure}[t]
    \centering
    \includegraphics[width=0.9\textwidth]{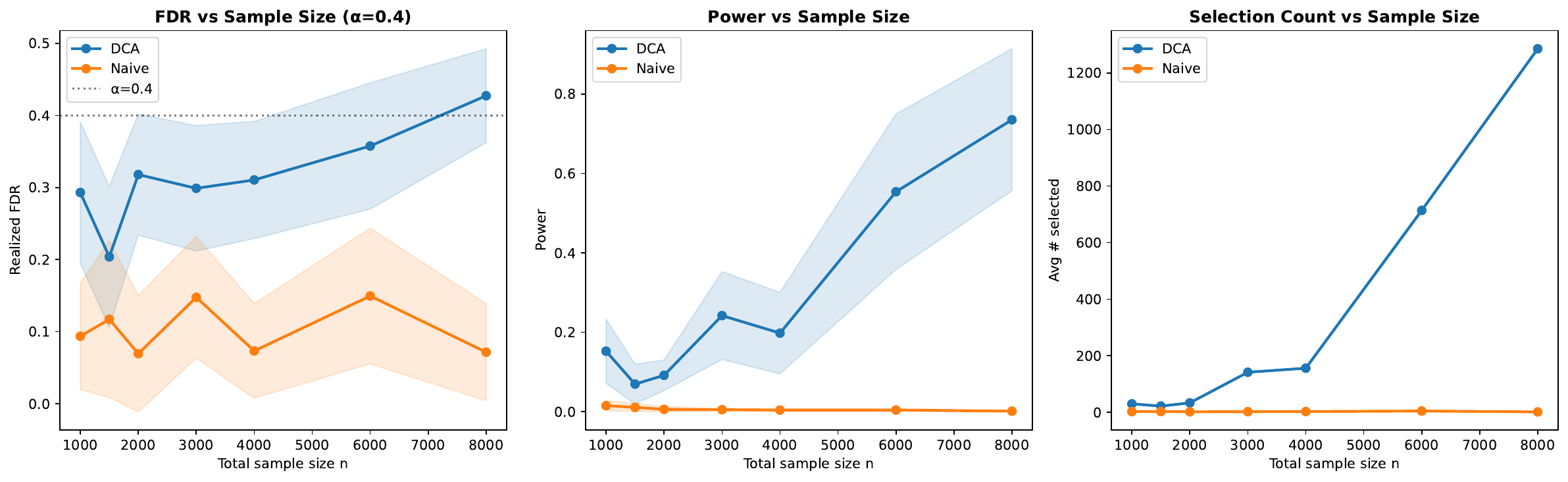}
    \caption{\textbf{Small-sample behavior.}
    DCA remains competitive in small-$N$ regimes, with power improving as calibration and nuisance estimation become more stable.}
    \label{fig:small_sample}
\end{figure}

\begin{figure}[t]
    \centering
    \includegraphics[width=\textwidth]{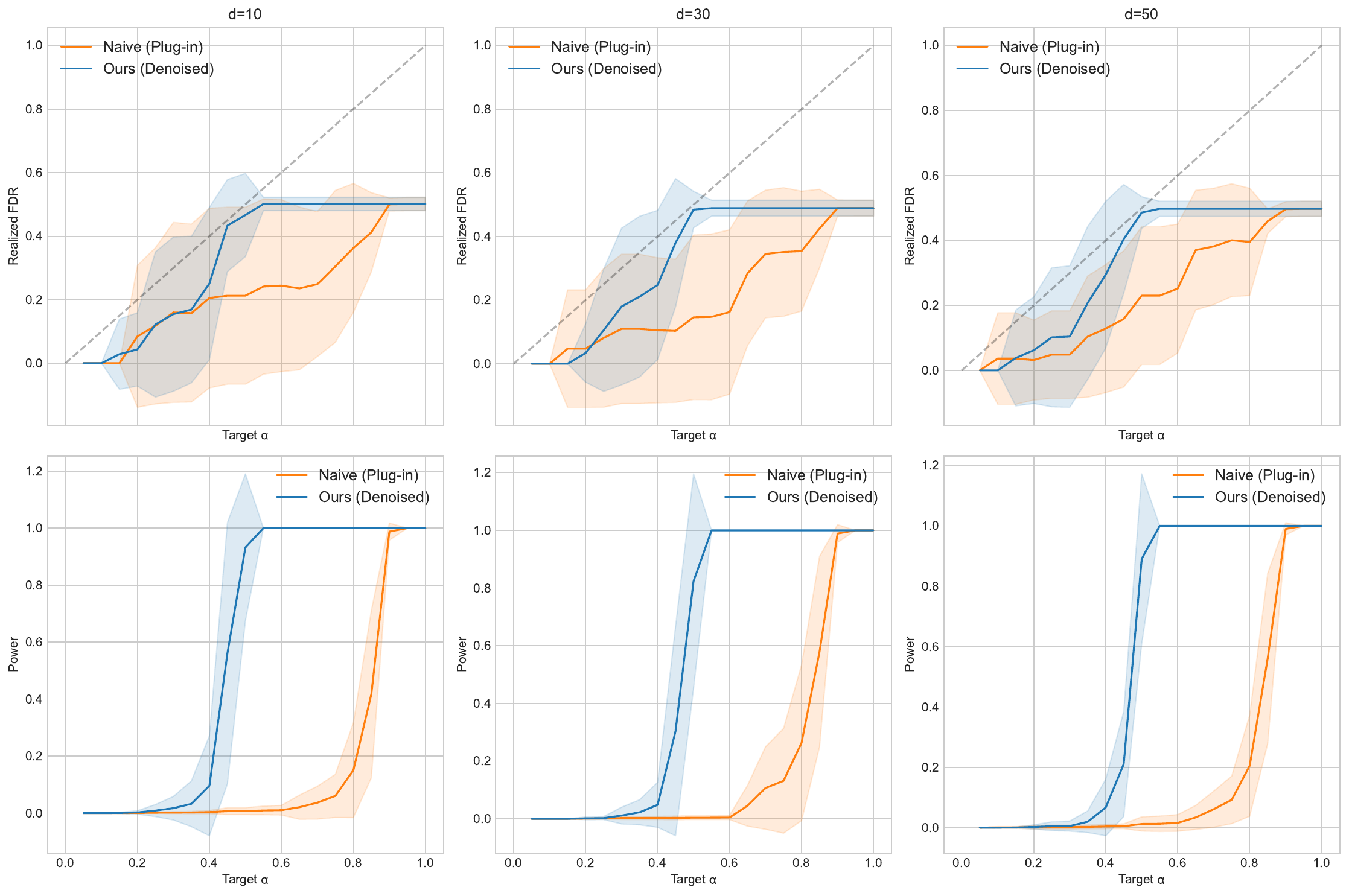}
    \caption{\textbf{High-dimensional nuisance difficulty.}
    Increasing dimension stresses nuisance and variance estimation; DCA retains the denoising advantage when the learned proxy labels remain sufficiently stable.}
    \label{fig:highdim_nuisance}
\end{figure}

\subsection{Semi-synthetic Benchmarks}
\label{app:semisynthetic_benchmarks}

\subsubsection{IHDP semi-synthetic benchmark (Setting 5)}
\label{app:ihdp_details}

\textbf{Why ``ground-truth CATE'' exists (semi-synthetic clarification).}
IHDP covariates originate from a real observational study, but the benchmark used in our experiments is \emph{semi-synthetic}:
potential outcomes are generated by a known simulator. Hence individual-level $\mu_0(X),\mu_1(X)$ (and therefore CATE) are available
\emph{by construction}, avoiding any misconception that real IHDP contains observed counterfactuals.

\textbf{Data, columns, and ground truth.}
The benchmark provides simulator columns $\mu_0,\mu_1$ and a factual outcome $y_{\mathrm{factual}}$ with treatment indicator \texttt{treatment}.
We define the ground-truth CATE as
\[
\tau(X)=\mu_1(X)-\mu_0(X),
\]
use the observed outcome $Y=y_{\mathrm{factual}}$ and treatment $T=\texttt{treatment}$,
and take covariates as all columns whose names start with \texttt{x} (e.g., $x_1,\dots,x_{25}$).
We apply the benchmark’s standard preprocessing (continuous covariates standardized; categorical covariates one-hot encoded if present).

\textbf{Protocol (sample splitting + code safeguard).}
We repeatedly random-split IHDP into four disjoint folds
$\mathcal D_{\mathrm{tr1}},\mathcal D_{\mathrm{tr2}},\mathcal D_{\mathrm{cal}},\mathcal D_{\mathrm{test}}$
following the common protocol in Appendix~\ref{app:common_protocol}.
Nuisance/variance models are fit on $\mathcal D_{\mathrm{tr1}}$, the alignment predictor $g$ is trained on $\mathcal D_{\mathrm{tr2}}$,
and conformal calibration uses $\mathcal D_{\mathrm{cal}}$.
\textbf{Implementation safeguard:} if the dataset size $n$ is smaller than the nominal $(2000,2000,1000)$ split,
the code automatically downscales each fold to keep all splits nonempty.

\textbf{What this benchmark tests (complement to synthetic settings).}
IHDP differs from the synthetic DGPs in two important ways: (i) covariates come from a realistic observational design with complex correlations
and practical imbalance patterns; (ii) the semi-synthetic simulator provides ground truth while preserving real covariate structure.
This makes IHDP a useful ``sanity + realism'' check: it tests whether DCA’s denoising-and-selection pipeline continues to deliver nontrivial yield
under realistic feature geometry, without relying on synthetic simplifications.

\textbf{Sensitivity over $\rho$ (and representative choice).}
We sweep $\rho\in\{0.1,0.15,0.2,0.25,0.5\}$ and report realized FDR and power averaged over random splits.
The representative choice $\rho=0.15$ is used in the main IHDP comparison because it achieves a favorable tradeoff:
it is conservative enough to avoid over-subtraction when variance estimation is imperfect on finite IHDP folds,
yet still materially improves selection yield compared to $\rho=0$ (no denoising) and uncertainty-only baselines.
Figure~\ref{fig:ihdp_setting5} summarizes the $\rho$ sensitivity.

\begin{figure}[t]
    \centering
    \includegraphics[width=0.7\textwidth]{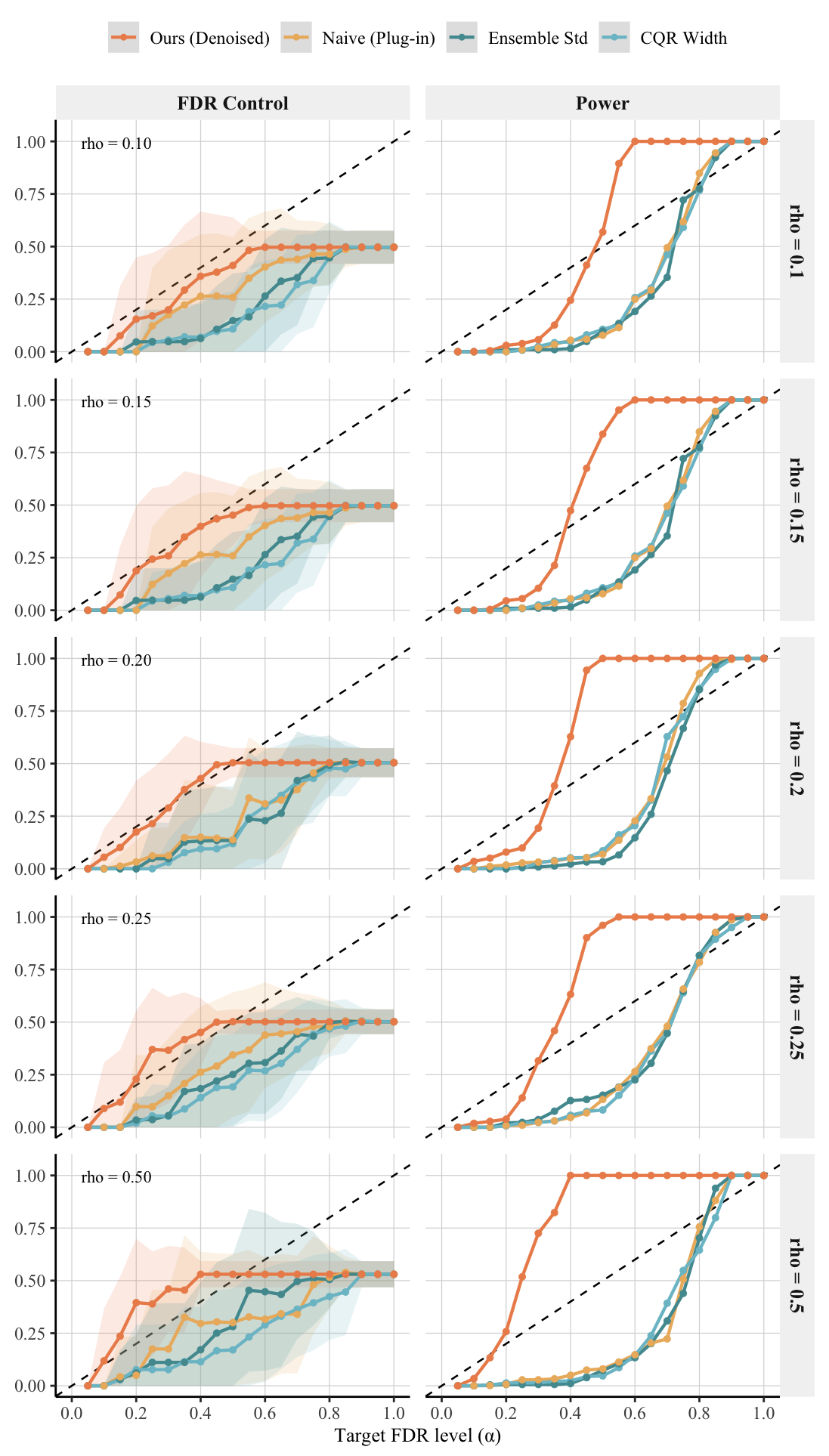}
    \caption{\textbf{Setting 5 (IHDP) sensitivity.}
    Realized FDR and power across $\rho\in\{0.1,0.15,0.2,0.25,0.5\}$ on semi-synthetic IHDP with simulator-provided ground-truth CATE.
    Moderate denoising improves selection yield while maintaining conservative or near-nominal FDR; the representative choice $\rho=0.15$
    reflects a stable point in this tradeoff.}
    \label{fig:ihdp_setting5}
\end{figure}

\textbf{IHDP protocol (pseudocode).}
\begin{verbatim}
# Load IHDP semi-synthetic benchmark
X = all columns starting with "x"          # features
T = treatment                               # binary treatment
Y = y_factual                               # observed outcome
tau_true = mu1 - mu0                        # simulator truth

# Repeated trials:
for r in 1..R:
    split indices into D_tr1, D_tr2, D_cal, D_test
    if n too small for nominal sizes:
        downscale fold sizes to keep each nonempty   # code safeguard

    fit nuisance/variance models on D_tr1 only
    compute DR pseudo-outcomes and Vhat on D_tr2 and D_cal using tr1-fitted models
    train g on D_tr2
    predict scores s=g(X) on D_cal and D_test

    # evaluation threshold (calibration-only):
    A2_true_cal = (tauhat(X_cal) - tau_true_cal)^2
    c = median(A2_true_cal)

    compute conformal p-values using D_cal and threshold c (same as main algorithm)
    apply BH on p-values over D_test
    evaluate FDP/power on D_test using tau_true and the same c
\end{verbatim}

\subsubsection{NLSM semi-synthetic benchmark}
\label{app:external_benchmarks}

To test whether the observed safety--power pattern is specific to our custom synthetic DGPs, we also evaluate on the NLSM benchmark suite adapted from the counterfactual conformal inference literature.
These data-generating mechanisms provide external semi-synthetic covariate/outcome structures with known CATE.
Figure~\ref{fig:nlsm_benchmark} shows that across multiple NLSM settings and noise levels, DCA preserves its qualitative advantage: denoising increases selection yield while maintaining stable realized FDR.

\begin{figure}[t]
    \centering
    \includegraphics[width=\textwidth]{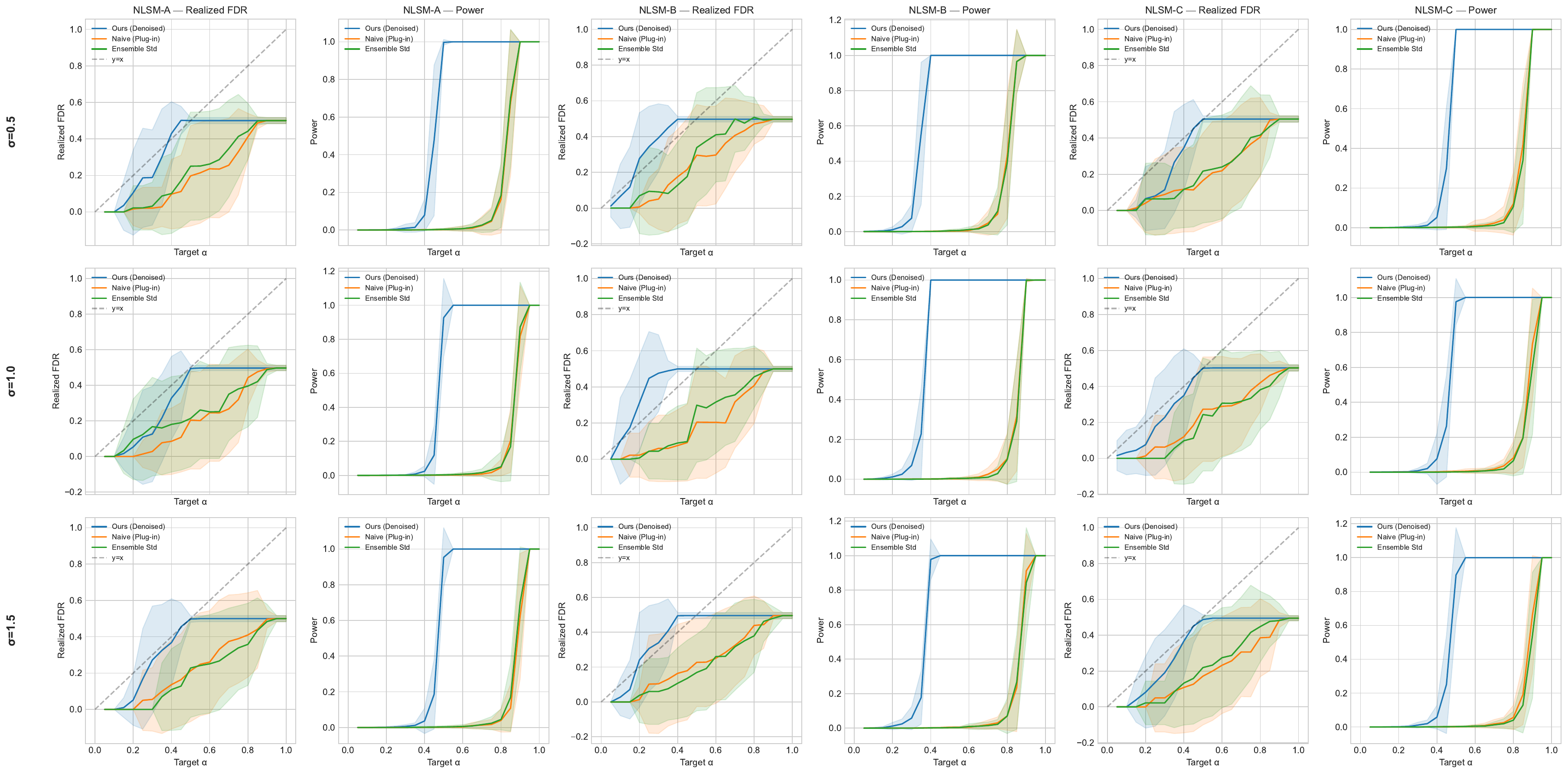}
    \caption{\textbf{NLSM semi-synthetic benchmark.}
    Three NLSM data-generating mechanisms at multiple noise levels. DCA's safety--power pattern persists beyond the custom synthetic settings.}
    \label{fig:nlsm_benchmark}
\end{figure}

\subsubsection{NSW job-training real-data analysis}
\label{app:nsw_real_data}

We additionally run DCA on the NSW job-training data, a canonical observational causal benchmark.
Since true individual CATE errors are unavailable in purely real data, this experiment is a qualitative deployment analysis rather than an oracle FDR/power benchmark.
Figure~\ref{fig:nsw_real_data} summarizes the selected subpopulation, estimated treatment-effect distribution, and policy-relevant diagnostics under DCA.

\begin{figure}[t]
    \centering
    \includegraphics[width=\textwidth]{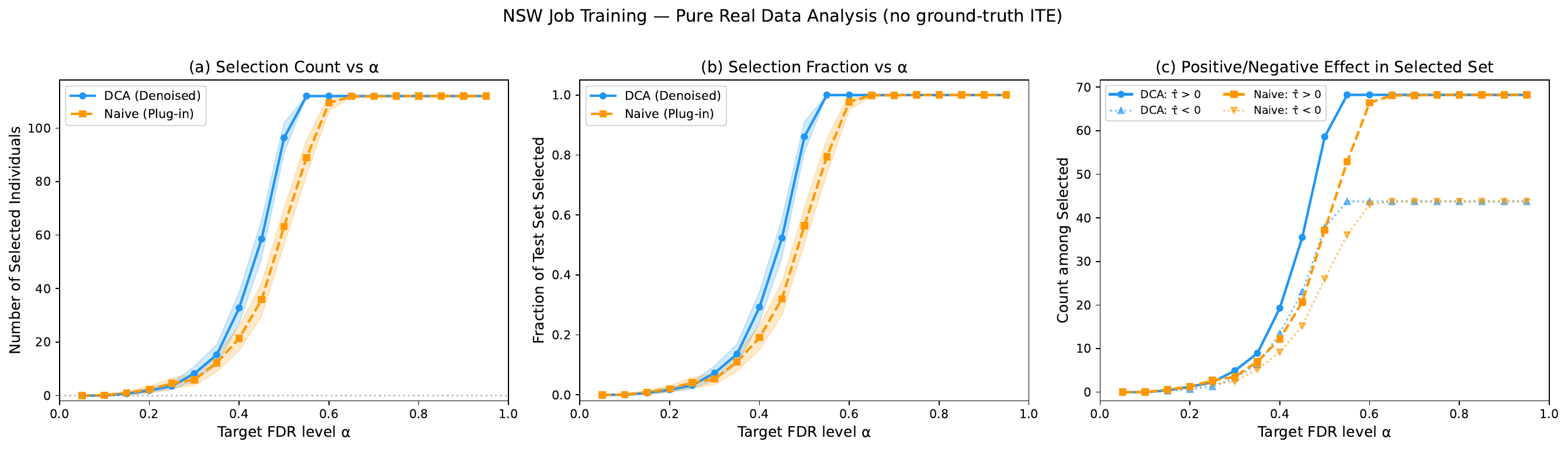}
    \caption{\textbf{NSW job-training real-data analysis.}
    Pure real-data deployment analysis on the NSW benchmark. Since true CATE errors are unobserved, the figure provides qualitative policy and selected-subpopulation diagnostics rather than oracle FDR/power.}
    \label{fig:nsw_real_data}
\end{figure}

\subsection{Multi-treatment extension and overlap stress test}
\label{app:multi_treatment}

This appendix extends Denoised Conformal Alignment (DCA) from binary treatment
\(T\in\{0,1\}\) to the multi-treatment setting \(T\in\{0,1,\dots,K-1\}\),
where \(t=0\) denotes control and each arm \(k\in\{1,\dots,K-1\}\) induces an arm-specific ITE
\(\tau_k(x):=\mu_k(x)-\mu_0(x)\).
We then report an ablation study showing that the extension remains stable for \(K=3\),
but can fail for \(K=5\) under finite-sample overlap degradation, motivating overlap-aware future work.

\paragraph{Setup and targets.}
Let \(\mu_t(x):=\mathbb{E}[Y\mid X=x,T=t]\) and \(e_t(x):=\mathbb{P}(T=t\mid X=x)\) with overlap \(e_t(x)>0\).
We assume standard identification (unconfoundedness and positivity) so that each \(\tau_k(x)\) is identifiable.
Given a black-box ITE estimator \(\hat\tau_k(x)\), our deployment goal is reliable selection with FDR control:
for each arm \(k\), define the squared error
\[
A_k^2(x):=\big(\hat\tau_k(x)-\tau_k(x)\big)^2,
\quad
\mathcal H_{0,k}:=\{A_k^2(X)\ge c_k\},
\quad
\mathcal H_{1,k}:=\{A_k^2(X)<c_k\},
\]
where \(c_k\) is a user-defined tolerance for arm \(k\).
We select hypotheses among candidates \((j,k)\) while controlling the post-selection FDR at level \(\alpha\).

\subsubsection{DR pseudo-outcomes for each arm}
\label{app:multi_dr}

\paragraph{Arm-wise doubly robust pseudo-outcome.}
For each \(k\in\{1,\dots,K-1\}\), an unbiased DR pseudo-outcome for \(\tau_k(x)\) is
\begin{equation}
\phi_{k}(X,T,Y)
=
\big(\mu_k(X)-\mu_0(X)\big)
+\frac{\mathbf{1}\{T=k\}}{e_k(X)}\big(Y-\mu_k(X)\big)
-\frac{\mathbf{1}\{T=0\}}{e_0(X)}\big(Y-\mu_0(X)\big).
\label{eq:multi_dr_phi}
\end{equation}
With consistent nuisance estimates \(\hat\mu_t,\hat e_t\), we compute \(\hat\phi_{i,k}\) on held-out folds.

\paragraph{Proxy error and denoising.}
Analogous to the binary case, we construct a squared proxy error for each arm:
\[
\widetilde A_{i,k}^2 := \big(\hat\tau_k(X_i)-\hat\phi_{i,k}\big)^2.
\]
The key observation is that \(\widetilde A_{i,k}^2\) is contaminated by arm-dependent conditional variance:
under mild regularity,
\[
\mathbb{E}\!\left[\widetilde A_{k}^2 \mid X\right]
=
A_k^2(X) + \operatorname{Var}(\phi_k\mid X) + o(1),
\]
so heteroskedasticity (and, crucially, near-violations of overlap) can dominate ranking.
We therefore apply variance-aware denoising arm-wise:
\begin{equation}
\check A_{i,k}^2(\rho)
:=
\Big(\widetilde A_{i,k}^2 - \rho\,\widehat V_k(X_i)\Big)_+,
\qquad \rho\ge 0.
\label{eq:multi_denoise}
\end{equation}
Here \(\widehat V_k(X)\) estimates \(\operatorname{Var}(\phi_k\mid X)\), and \((\cdot)_+=\max\{\cdot,0\}\).
In the multi-treatment DR structure, a convenient plug-in approximation is
\begin{equation}
\operatorname{Var}(\phi_k\mid X)
\approx
\frac{\operatorname{Var}(Y\mid X,T=k)}{e_k(X)}
+
\frac{\operatorname{Var}(Y\mid X,T=0)}{e_0(X)},
\label{eq:multi_var}
\end{equation}
which matches our implementation that regresses residual-squares within each arm.

\subsubsection{Conformal alignment and pooled BH across arms}
\label{app:multi_conformal}

\paragraph{Arm-wise alignment predictors.}
We train an alignment model \(g_k:\mathcal X\to\mathbb{R}\) (one per arm \(k\))
to predict the denoised proxy \(\check A_{i,k}^2(\rho)\) from covariates \(X_i\).
Smaller scores indicate higher reliability:
\[
s_{k}(x):=g_k(x)\quad\text{(lower is better)}.
\]

\paragraph{Arm-specific conformal \textit{oracle} \(p\)-values.}
On the calibration fold, define the arm-wise null subset
\[
\mathcal I_{0,k}:=\Big\{i\in\mathcal D_{\mathrm{cal}}:\ \check A_{i,k}^2(\rho)\ge c_k\Big\}.
\]
Given a test candidate \(j\), we compute
\begin{equation}
p_{j,k}
=
\frac{1+\#\{i\in\mathcal I_{0,k}: s_k(X_i)\le s_k(X_j)\}}
{|\mathcal D_{\mathrm{cal}}|+1}.
\label{eq:multi_pvalue}
\end{equation}
This is the direct multi-arm analogue of the binary conformal alignment \(p\)-value:
it compares the candidate score to the empirical distribution of scores among ``bad'' calibration points for arm \(k\).

\paragraph{Pooled selection over \((j,k)\).}
To decide deployment across all arms, we treat each pair \((j,k)\) as a hypothesis and apply BH to the pooled set
\(\{p_{j,k}\}_{j\in\mathcal D_{\mathrm{test}},\,k=1,\dots,K-1}\) at target level \(\alpha\).
Let \(\mathcal S(\alpha)\subseteq\mathcal D_{\mathrm{test}}\times\{1,\dots,K-1\}\) be the selected set.
When ground-truth is available, we evaluate realized FDR and power by flattening:
\[
\mathrm{FDP}(\alpha)
=
\frac{|\{(j,k)\in\mathcal S(\alpha): A_{j,k}^2\ge c_k\}|}{|\mathcal S(\alpha)|\vee 1},
\quad
\mathrm{Power}(\alpha)
=
\frac{|\{(j,k)\in\mathcal S(\alpha): A_{j,k}^2< c_k\}|}{|\{(j,k): A_{j,k}^2<c_k\}|\vee 1}.
\]
This ``pooled BH'' matches a deployment scenario where we choose reliable individuals-and-arms jointly.

\subsubsection{Algorithm}
\label{app:multi_algorithm}

\begin{algorithm}[t]
\caption{DCA for multi-treatment reliable selection (pooled BH)}
\label{alg:multi_dca}
\small
\begin{algorithmic}[1]
\REQUIRE Data split \(\mathcal D_{\mathrm{tr1}},\mathcal D_{\mathrm{tr2}},\mathcal D_{\mathrm{cal}},\mathcal D_{\mathrm{test}}\);
arms \(k=1,\dots,K-1\); tolerance \(c_k\); denoising \(\rho\); target FDR \(\alpha\).
\STATE Fit nuisance models \(\hat\mu_t,\hat e_t\) and variance models \(\widehat{\operatorname{Var}}(Y\mid X,T=t)\) on \(\mathcal D_{\mathrm{tr1}}\).
\STATE For each \(i\in\mathcal D_{\mathrm{tr2}}\cup\mathcal D_{\mathrm{cal}}\), compute \(\hat\phi_{i,k}\) via~\eqref{eq:multi_dr_phi}.
\STATE For each arm \(k\), compute \(\widetilde A_{i,k}\) and \(\check A_{i,k}(\rho)\) via~\eqref{eq:multi_denoise}--\eqref{eq:multi_var}.
\STATE Train arm-wise alignment models \(g_k\) on \(\mathcal D_{\mathrm{tr2}}\): regress \(\check A_{i,k}(\rho)\) on \(X_i\).
\STATE On \(\mathcal D_{\mathrm{cal}}\), form \(\mathcal I_{0,k}=\{i:\check A_{i,k}(\rho)\ge c_k\}\) and compute \(p_{j,k}\) for each test candidate \(j\) using~\eqref{eq:multi_pvalue}.
\STATE Pool all \(p_{j,k}\) and apply BH at level \(\alpha\) to obtain \(\mathcal S(\alpha)\).
\ENSURE Selected reliable set \(\mathcal S(\alpha)\subseteq\mathcal D_{\mathrm{test}}\times\{1,\dots,K-1\}\).
\end{algorithmic}
\end{algorithm}

\subsubsection{Experiments: stability at \(K=3\) and failure at \(K=5\)}
\label{app:multi_results}

\paragraph{Synthetic multi-treatment DGP.}
We extend Setting~1 to \(K\in\{3,5\}\) arms with a multinomial propensity (softmax logits) and arm-dependent effects.
Outcomes are heteroskedastic:
\[
Y=\mu_T(X)+\epsilon,\qquad \epsilon\sim \mathcal N\!\big(0,\sigma^2(X)\big),
\quad
\sigma(X)=\sigma_{\mathrm{base}}\big(1+0.5|X_0|\big),
\quad \sigma_{\mathrm{base}}\in\{0.5,1.0,1.5\}.
\]
We use the common protocol in Appendix~\ref{app:common_protocol} with sample splitting.
Following the ground-truth settings, we set each \(c_k\) using only calibration information:
\[
c_k := \mathrm{median}_{i\in\mathcal D_{\mathrm{cal}}}\big(\hat\tau_k(X_i)-\tau_k(X_i)\big)^2.
\]
We compare \textbf{Ours (Denoised)} \(\check A^2(\rho)\) versus \textbf{Naive (Plug-in)} \(\widetilde A^2\) (i.e., \(\rho=0\)),
keeping the same conformal+BH pipeline.

\paragraph{Key findings.}
\emph{(i) Moderate-arm regime \(K=3\):} the multi-treatment extension remains stable.
Across \(\sigma_{\mathrm{base}}\in\{0.5,1.0,1.5\}\), realized FDR tracks the nominal line while DCA achieves higher power than the naive proxy.
In our sweep, \(\rho=0.25\) yields the best trade-off: high power with realized FDR closest to \(y=x\).
Figure~\ref{fig:multi_k3} reports representative curves and sensitivity across \(\rho\).

\emph{(ii) Larger-arm regime \(K=5\):} we observe systematic FDR inflation even for small \(\rho\).
Empirically, realized FDR exceeds the nominal line across a wide range of \(\alpha\), indicating that the pooled procedure can fail under finite-sample multi-arm overlap degradation.
Figure~\ref{fig:multi_k5_fail} summarizes this failure pattern.

\begin{figure}[t]
    \centering
    \includegraphics[width=0.8\textwidth]{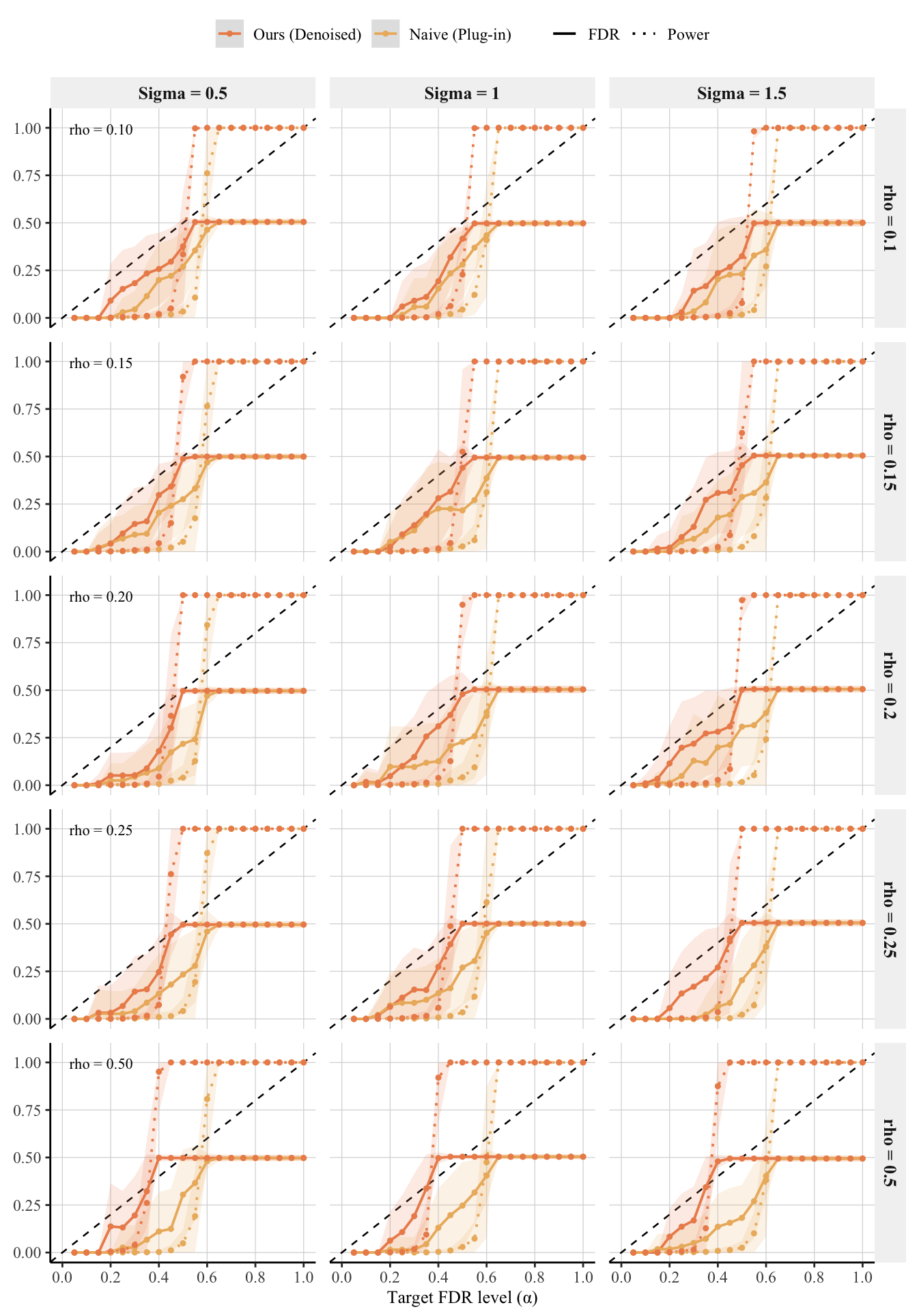}
    \caption{\textbf{Multi-treatment \(K=3\).} Realized FDR and power versus target \(\alpha\).
    DCA remains well-calibrated and more powerful than the naive proxy baseline; \(\rho=0.25\) provides the best power--calibration trade-off in this setting.}
    \label{fig:multi_k3}
\end{figure}

\begin{figure}[t]
    \centering
    \includegraphics[width=0.8\textwidth]{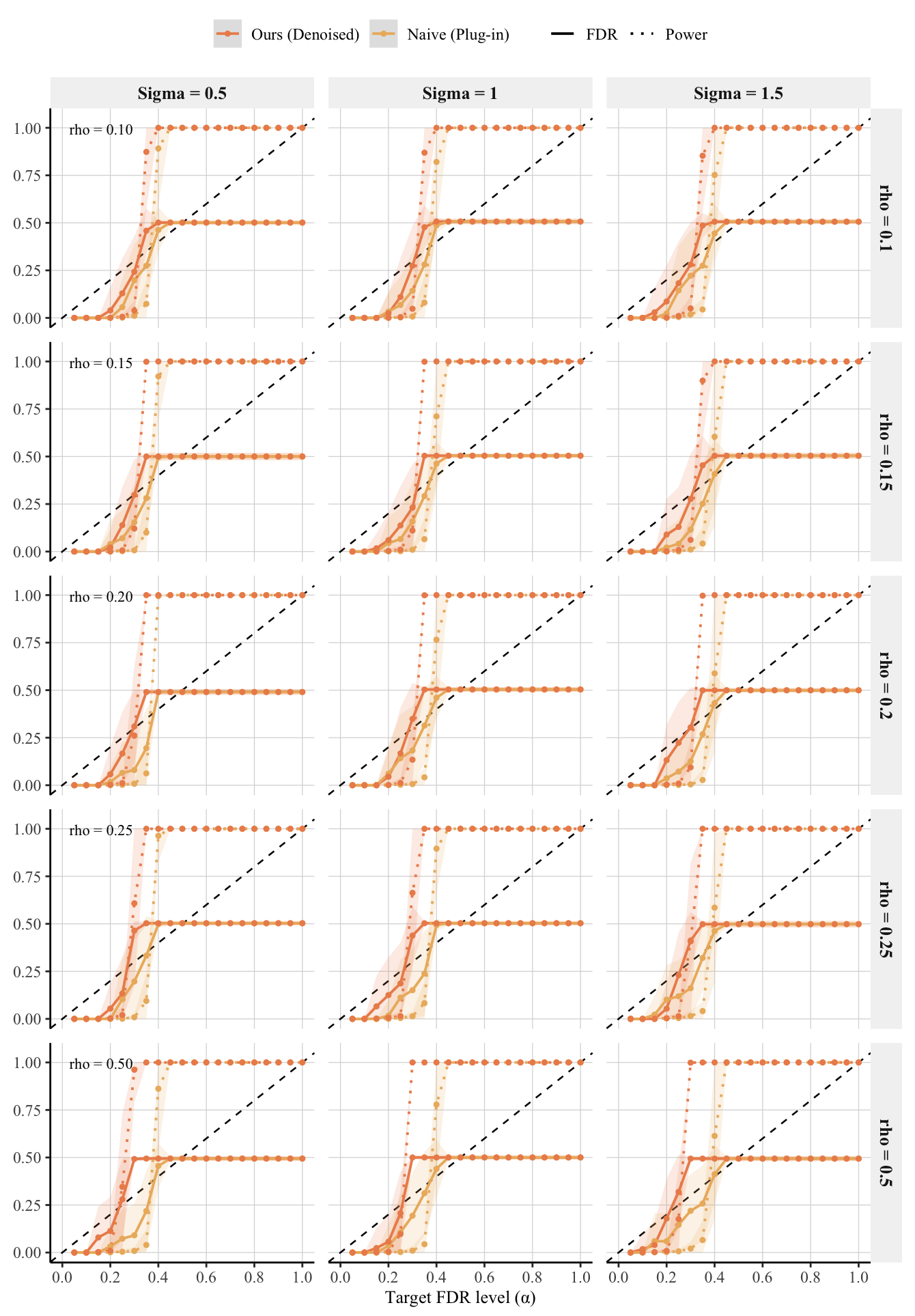}
    \caption{\textbf{Multi-treatment \(K=5\) failure under overlap stress.}
    Realized FDR exceeds the nominal line across \(\alpha\) even for small denoising strengths, indicating breakdown of calibration/selection under finite-sample overlap degradation in multi-arm settings.}
    \label{fig:multi_k5_fail}
\end{figure}

\subsubsection{Why \(K=5\) can break: overlap degradation amplifies proxy noise and destabilizes calibration}
\label{app:multi_failure_analysis}

The observed breakdown at \(K=5\) is consistent with a multi-arm overlap mechanism.
Even when each arm propensity is clipped away from zero, increasing \(K\) pushes the \emph{effective} overlap to deteriorate:
for many \(x\), at least one arm has relatively small \(e_k(x)\), and the DR correction in~\eqref{eq:multi_dr_phi} contains factors \(1/e_k(X)\) and \(1/e_0(X)\).
Consequently, the conditional variance in~\eqref{eq:multi_var} can become large and highly heterogeneous:
\[
\operatorname{Var}(\phi_k\mid X)\;\approx\; \operatorname{Var}(Y\mid X,T=k)/e_k(X)\;+\;\operatorname{Var}(Y\mid X,T=0)/e_0(X),
\]
so the proxy distribution for ``bad'' calibration points \(\mathcal I_{0,k}\) is dominated by a heavy right tail.
This effect is compounded when we \emph{pool} hypotheses across arms: a small fraction of unstable arms can dominate BH discoveries and inflate the realized FDR.

In finite samples, this can manifest as:
(i) unstable null sets \(\mathcal I_{0,k}\) (proxy labels flip due to high variance);
(ii) noisy learned alignment \(g_k\) (trained on contaminated targets);
(iii) non-uniform \(p_{j,k}\) under the null because the calibration comparison set is effectively mis-specified by variance explosions.
Together these effects can yield the observed empirical FDR inflation.

\subsubsection{Future work: overlap-aware multi-treatment DCA}
\label{app:multi_future}

The \(K=5\) failure suggests that a principled multi-treatment extension should explicitly account for overlap.
Promising directions include:

\paragraph{Overlap-aware stabilization of DR pseudo-outcomes.}
Replace the raw inverse-propensity factors in~\eqref{eq:multi_dr_phi} by stabilized/truncated weights,
e.g., \(\mathbf{1}\{T=k\}/\max\{e_k(X),\eta\}\) with an overlap threshold \(\eta\) chosen by a pre-registered rule,
or use self-normalized / Hajek-style corrections to reduce variance blow-up.

\paragraph{Arm-adaptive denoising and calibration.}
Instead of a shared \(\rho\), choose \(\rho_k\) per arm using calibration-only criteria,
or incorporate \(\widehat V_k(X)\) into the conformal comparison (variance-aware conformalization),
so that candidates are compared within similar variance strata.

\paragraph{Structured multiple testing across arms.}
Rather than pooled BH over \((j,k)\), use hierarchical FDR control:
first select reliable individuals, then select reliable arms within selected individuals, or apply group BH with groups indexed by arms.
This can prevent a single unstable arm from dominating global discoveries.

\paragraph{Sample-size scaling and cross-fitting.}
Multi-arm settings require more data per arm to learn \(\hat\mu_t\), \(\widehat{\operatorname{Var}}(Y\mid X,T=t)\), and \(g_k\) reliably.
Cross-fitting across multiple splits and sharing representations across arms (multi-task \(g\)) may reduce variance without leakage.

Overall, these directions tie multi-treatment validity tightly to overlap management.
In deployment, this is desirable: when overlap deteriorates, the correct response is not only to tune \(\rho\),
but to incorporate explicit overlap-aware mechanisms into the proxy construction and the selection stage.

\end{document}